\newif\ifisarxiv
\renewcommand{\thealgorithm}{}
\pgfplotsset{compat=newest}
\newcounter{loopcntr}
 \definecolor{darkSilver}{cmyk}{0,0,0,0.1}
\numberwithin{equation}{section}
\numberwithin{figure}{section}
\numberwithin{theorem}{section}
\def\xib{\boldsymbol\xi}
\def\n{\{1..n\}}
\def\Xbb{\overline{\X}}
\def\ybb{\overline{\y}}
\def\Pc{\mathcal{P}}
\def\Hc{\mathcal{H}}
\newcommand{\Span}{\mathrm{span}}
\newcommand{\ofsub}[1]{\mbox{\small \raisebox{0.0pt}{$(#1)$}}}
\newcommand{\of}[2]{{#1{\!\ofsub{#2}}}}
\newcommand{\fof}[2]{{#1({#2})}}
\newcommand{\lazy}{FastRegVol}
\newcommand{\volsamp}{RegVol}
\newcommand{\Sm}{{S_{-i}}}
\newcommand{\BlackBox}{\rule{1.5ex}{1.5ex}}  
\renewcommand{\dagger}{+}
\DeclareMathOperator*{\argmin}{\mathop{\mathrm{argmin}}}
\def\x{\mathbf x}
\def\y{\mathbf y}
\def\ybh{\widehat{\mathbf y}}
\def\yh{\widehat{y}}
\def\a{\mathbf a}
\def\b{\mathbf b}
\def\w{\mathbf w}
\def\v{\mathbf v}
\def\m{\mathbf m}
\def\wbt{\widetilde{\mathbf w}}
\def\e{\mathbf e}
\def\zero{\mathbf 0}
\def\u{\mathbf u}
\def\f{\mathbf f}
\def\X{\mathbf X}
\def\Xs{\widetilde{\X}}
\def\B{\mathbf B}
\def\A{\mathbf A}
\def\C{\mathbf C}
\def\F{\mathbf F}
\def\M{\mathbf M}
\def\Z{\mathbf Z}
\def\I{\mathbf I}
\def\A{\mathbf A}
\def\P{\mathbf P}
\def\Rh{\widehat{R}}
\def\E{\mathbb E}
\def\R{\mathbb R} 
\def\tr{\mathrm{tr}}
\def\rank{\mathrm{rank}}
\def\Var{\mathrm{Var}}
\def\Xinv{(\X^\top\X)^{-1}}
\def\xinv{\x_i^\top\Xinv\x_i}
\newcommand{\defeq}{\stackrel{\textit{\tiny{def}}}{=}}
\definecolor{silver}{cmyk}{0,0,0,0.3}
\definecolor{yellow}{cmyk}{0,0,0.9,0.0}
\definecolor{reddishyellow}{cmyk}{0,0.22,1.0,0.0}
\definecolor{black}{cmyk}{0,0,0.0,1.0}
\definecolor{darkYellow}{cmyk}{0.2,0.4,1.0,0}
\definecolor{darkSilver}{cmyk}{0,0,0,0.1}
\definecolor{grey}{cmyk}{0,0,0,0.5}
\definecolor{darkgreen}{cmyk}{0.6,0,0.8,0}
\newcommand{\Red}[1]{{\color{red}  {#1}}}
\newcommand{\Magenta}[1]{{\color{magenta}{#1}}}
\newcommand{\Green}[1]{{\color{darkgreen}  {#1}}}
\newcommand{\Blue}[1]{\color{blue}{#1}\color{black}}
\newcommand{\Brown}[1]{{\color{brown}{#1}\color{black}}}
\newenvironment{proofof}[2]{\par\vspace{2mm}\noindent\textbf{Proof of {#1} {#2}}\ }{\hfill\BlackBox\\[2mm]}
\newenvironment{proof}{\par\noindent{\bf Proof\ }}{\hfill\BlackBox\\[2mm]}
\newtheorem{theorem}{Theorem}
\newtheorem{example}{Example}
\newtheorem{lemma}[theorem]{Lemma}
\newtheorem{proposition}[theorem]{Proposition}
\newtheorem{remark}[theorem]{Remark}
\newtheorem{corollary}[theorem]{Corollary}
\newtheorem{definition}{Definition}
\newtheorem{conjecture}[theorem]{Conjecture}
\newtheorem{claim}[theorem]{Claim}
\definecolor{darkgreen}{rgb}{0., 0.45, 0.0}
\begin{document}

\title{Reverse iterative volume sampling for linear regression\footnotemark}

\author{%
\name Micha{\l } Derezi\'{n}ski \email mderezin@ucsc.edu\\
\name Manfred K. Warmuth \email manfred@ucsc.edu\\
\addr Department of Computer Science\\
University of California Santa Cruz
}

\editor{}

\maketitle

\begin{abstract}
We study the following basic machine learning task: 
Given a fixed set of input points in $\R^d$ for a
linear regression problem, we wish to predict a hidden response
value for each of the points. We can only afford to attain the
responses for a small subset of the points that are then
used to construct linear predictions for
all points in the dataset. The performance of the
predictions is evaluated by the total square
loss on all responses (the attained as well as the hidden
ones).
We show that a good approximate solution to this least
squares problem can be obtained from just dimension $d$ many
responses by using a joint sampling technique called volume
sampling. Moreover, the least squares solution obtained for the volume
sampled subproblem is an unbiased estimator of optimal solution based
on all $n$ responses. This unbiasedness is a desirable property
that is not shared by other common subset selection techniques. 

Motivated by these basic properties, we develop a theoretical
framework for studying volume sampling, resulting in a number of new
matrix expectation equalities and statistical guarantees 
which are of importance not
only to least squares regression but also to numerical linear
algebra in general. Our methods also lead to a regularized variant of
volume sampling, and we propose the
first efficient algorithms for volume sampling 
which make this technique a practical
tool in the machine learning toolbox. Finally, we provide
experimental evidence which confirms our theoretical findings.

\end{abstract}

\begin{keywords}
Volume sampling, linear regression, row sampling, active learning,
optimal design.
\end{keywords}


\section{Introduction}
\label{sec:introduction}

{\renewcommand{\thefootnote}{\fnsymbol{footnote}}
\footnotetext[1]{This paper is an expanded version of two conference papers
\citep{unbiased-estimates,regularized-volume-sampling}.}} 

As an introductory case, consider linear regression in one dimension. 
We are given $n$ points $x_i$. Each point has a hidden
real response (or target value) $y_i$.
Assume that obtaining the responses is expensive and the learner can
afford to request the responses $y_i$ for only a small number of
indices $i$. 
After receiving the requested responses, the learner determines an
approximate linear least squares solution. In the one
dimensional case, this is just a single weight. 
How many response values does the learner need to request
so that the total square loss of its approximate solution on all
$n$ points is ``close'' to the total loss of the optimal 
linear least squares solution found with the knowledge of
all responses?
We will show here that just \emph{one} response suffices if the index $i$ is chosen
proportional to $x_i^2$. When the learner uses the 
approximate solution $\Blue{w_i^*=\frac{y_i}{x_i}}$, 
then its expected loss equals \Red{2} times the loss of the optimum
$\Green{w^*}$ that is computed based on all responses
(See Figure \ref{f:one}). Moreover, the approximate
solution $\Blue{w_i^*}$ is an unbiased estimator for the optimum
$\Green{w^*}$:
$$\E_i \left[\sum\nolimits_j (x_j \Blue{\frac{y_i}{x_i}} - y_j)^2\right]
= \Red{2}\; \sum\nolimits_j\, (x_j \Green{w^*} -y_j)^2
\quad\text{and}\quad\E_i\left[\Blue{\frac{y_i}{x_i}}\right] = \Green{w^*},
\quad\text{when }P(i)\,\sim\, x_i^2.
$$
We will extend these formulas to higher
dimensions and to sampling more responses 
by making use of a joint sampling
distribution called \emph{volume sampling}. We summarize our
contributions in the next four subsections.
\begin{wrapfigure}{r}{0.4\textwidth}
\vspace{3mm}
\includegraphics[width=0.35\textwidth]{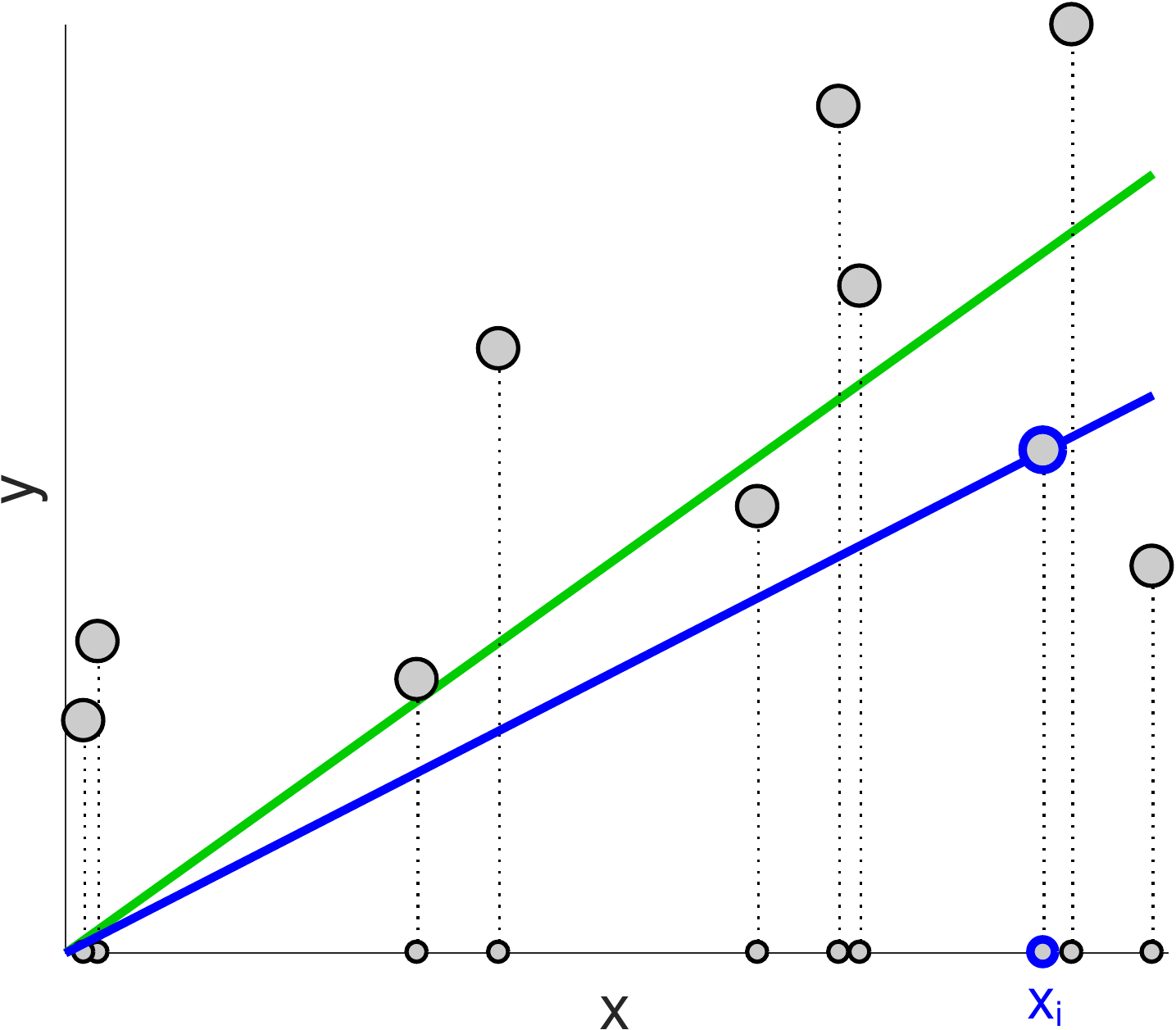}
\caption{The expected loss of $\Blue{w_i^*=\frac{y_i}{x_i}}$
(blue line) based on one response $y_i$ is twice the loss 
of the optimum $\Green{w^*}$ (green line).}
\label{f:one}
\end{wrapfigure}

\paragraph{Least squares with dimension many responses}
Consider the case when the points $\x_i$ lie in
$\R^d$. Let $\X$ denote the $n\times d$ matrix that has the
$n$ transposed points $\x_i^\top$ as rows, 
and let $\y\in \R^n$ be the vector of responses. Now the
goal is to minimize the (total)  square loss 
\[L(\w)=\sum\nolimits_{i=1}^n (\x_i^\top\w-y_i)^2 = \|\X\w - \y\|^2,\]
over all linear weight
vectors $\w\in\R^d$. Let $\w^*$ denote the optimal such weight vector.
We want to minimize the square loss based on a small number of responses
we attained for a subset of rows.
Again, the learner is initially given the fixed set of $n$ rows
(i.e. fixed design), but none of the responses.
It is then allowed to choose a random subset of $d$ indices,
$S\subseteq \{1..n\}$, and obtains the responses for the
corresponding $d$ rows.
The learner proceeds to find the optimal linear least
squares solution $\of{\w^*}S$ for the subproblem $(\X_S,\y_S)$.
where $\X_S$ is the subset of $d$ rows of $\X$ indexed by $S$
and $\y_S$ the corresponding $d$ responses from the
response vector $\y$.
As a generalization of the one-dimensional
distribution that chooses an index based on the squared
length, set $S$ of size $d$ is chosen proportional to the
squared volume of the parallelepiped spanned by the rows of $\X_S$.
This squared volume equals $\det(\X_S^\top\X_S)$.
Using elementary linear algebra, we will show that
{\em volume sampling} the set $S$ assures that
$\of{\w^*}S$ is a good approximation to $\w^*$ in the following
sense:
In expectation, the square loss (on all $n$ row response
pairs) of $\of{\w^*}S$ is equal $\Red{d+1}$ times the
square loss of $\w^*$:
\begin{align*}
\E[L(\of{\w^*}S)]  = \Red{(d+1)}\,L(\w^*)
,\quad \text{when }P(S) \sim \det(\X_S^\top\X_S).
\end{align*}
Furthermore, for any sampling procedure that attains less than
$d$ responses, the ratio between the expected loss and the loss 
of the optimum cannot be bounded by a constant.

\begin{figure}[b]
\vspace{-3mm}
\addtolength{\tabcolsep}{2pt} 
\centerline{
\begin{tabular}{ccccc}
\begin{tikzpicture}[scale=0.45]
    \draw [fill=brown!30] (0,0) rectangle (2,4);
    \draw [color=black] (0,2.5) -- (2,2.5);
    \draw (1,2) node {\mbox{\footnotesize $\x_i^\top$}}; 
    \draw [decorate,decoration={brace}] (-.1,0) -- (-.1,4);
    \draw (-.55,2.05) node {\mbox{\fontsize{8}{8}\selectfont $n$}}; 
    \draw [decorate,decoration={brace}] (0,4.1) -- (2,4.1);
    \draw (1,4.6) node {\mbox{\fontsize{8}{8}\selectfont $d$}}; 
\end{tikzpicture} 
&
\begin{tikzpicture}[scale=0.45]
    \draw (0,0) rectangle (4,4);
    \draw [color=blue] (.45,3.5) -- (2.7,1.3);
    \draw (1.75,3) node {\mbox{\footnotesize $\Blue{S}$}}; 
\end{tikzpicture} 
&
\begin{tikzpicture}[scale=0.45]
    \draw (0,0) rectangle (2,4);
    \draw[fill=blue!30] (0,1.3) rectangle (2,3.5);
    \draw (1,2.55) node {\mbox{\footnotesize $\Blue{\X_S}$}};
    \draw [decorate,decoration={brace}] (-.1,1.3) -- (-.1,3.5);
    \draw (-.5,2.45) node {\mbox{\fontsize{8}{8}\selectfont $s$}}; 
\end{tikzpicture} 
&
\begin{tikzpicture}[scale=0.45]
    \draw [fill=brown!30] (0,0) rectangle (4,2);
\end{tikzpicture} 
&
\begin{tikzpicture}[scale=0.45]
    \draw (0,0) rectangle (4,2);
    \draw[fill=blue!30] (.5,0) rectangle (2.7,2);
    \draw (1.57,.95) node {\mbox{\footnotesize $\;\Blue{(\!\X_S\!)^{\!+}}$}};
\end{tikzpicture} 
\\[-.1cm]
\mbox{\footnotesize $\quad\ \X$}
&
\mbox{\footnotesize $\I_S$}
&
\mbox{\footnotesize $\quad\ \I_S\X$}
&
\mbox{\footnotesize $\X^+$}
&
\mbox{\footnotesize$(\I_S\X)^+$}
\end{tabular}
}
\addtolength{\tabcolsep}{-2pt} 
\vspace{-2mm}
  \caption{Shapes of the matrices. 
           The indices of $S$ may not be consecutive.}
\label{f:shapes}
\end{figure}
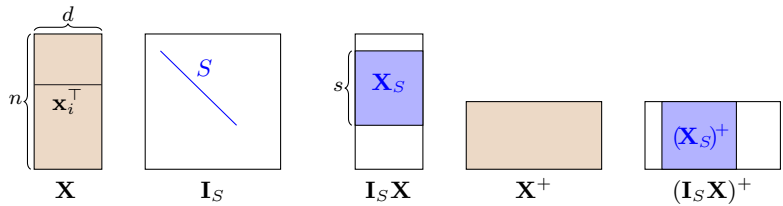
\paragraph{Unbiased pseudoinverse estimator} 
There is a direct connection between solving
linear least squares problems  and the pseudoinverse $\X^+$ of matrix
$\X$: For an $n-$dimensional response vector $\y$, 
the optimal solution is $\w^*=\argmin_\w ||\X\w-\y||^2=\X^+\y$.
Similarly $\of{\w^*}S=(\X_S)^{+}\y_S$ is the solution
for the subproblem $(\X_S,\y_S)$.
We propose a new implementation of volume sampling called
{\em reverse iterative sampling} which
enables a novel proof technique for obtaining elementary
expectation formulas for pseudoinverses based on volume
sampling.

Suppose that our goal is to estimate the pseudoinverse $\X^+$ 
based on the pseudoinverse of a subset of rows. Recall
that for a subset $S\subseteq \{1..n\}$ of $s$ row indices
(where the size $s$ is fixed and $s\geq d$), 
we let $\X_S$ be the submatrix of the $s$
rows indexed by $S$ (see Figure \ref{f:shapes}).
Consider a version of $\X$ in which
all but the rows of $S$ are zero. 
This matrix equals $\I_S\X$, where the selection matrix $\I_S$ is an $n$-dimensional diagonal 
matrix with $(\I_S)_{ii}=1$ if $i\in S$ and 0 otherwise.

For the set $S$ of fixed size $s\ge d$ row indices 
chosen proportional to $\det(\X_S^\top\X_S)$, 
we can prove the following expectation formulas:
\vspace{-1mm}
\begin{align*}
\E[(\I_S\X)^+]=\X^+ \quad \text{and}\quad
\E[\underbrace{(\X_S^\top\X_S)^{-1}}_
{(\I_S\X)^{+}(\I_S\X)^{+\top}} ]= \frac{n-d+1}{s-d+1}\,
\underbrace{(\X^\top\X)^{-1}}_{\X^+\X^{+\top}}.
\end{align*}
\\[-2mm]
Note that $(\I_S\X)^+$ has the $d\times n$ shape of $\X^+$
where the $s$ columns indexed by $S$ contain $(\X_S)^+$ and the
remaining $n-s$ columns are zero.
The expectation of this matrix is $\X^+$ even though
$(\X_S)^+$ is clearly not a submatrix of $\X^+\!$.
This expectation formula now implies that for any size $s\ge d$, if
$S$ of size $s$ is drawn by volume sampling, then $\of{\w^*}S$ is an
unbiased estimator%
\footnote{For size $s=d$ volume sampling, 
the fact that $\E[\w^*\!(S)]=\w^*$
can be found in an early paper \citep{bental-teboulle}.
They give a direct proof based on Cramer's rule. 
}
for $\w^*\!$, i.e. 
$$\E[\of{\w^*}S]=\E[(\X_S)^+\y_S]
=\E[(\I_S\X)^+\y]
=\E[(\I_S\X)^+]\,\y
=\X^+\y =\w^*\!.$$

The second
expectation formula can be viewed as a second moment of the
pseudoinverse estimator $(\I_S\X)^+\!$, and it can be used to compute
a useful notion of matrix variance with applications in random matrix
theory:
\vspace{-1mm}
\[\E[(\I_S\X)^{+}(\I_S\X)^{+\top}] - \E[(\I_S\X)^+] \E[(\I_S\X)^+]^\top
  = \frac{n-s}{s-d+1}\X^+\X^{+\top}\!. \]
\\[-3mm]
\paragraph{Regularized volume sampling}
We also develop a new regularized variant of volume sampling, which
extends reverse iterative sampling to selecting subsets of size
smaller than $d$, and leads to a useful extension of the above matrix
variance formula.  Namely,
for any $\lambda\geq 0$, our $\lambda$-regularized
procedure for sampling subsets $S$ of size $s$ satisfies
\begin{align*}
\E\big[ (\X_S^\top\X_S+\lambda\I)^{-1}\big] \preceq
  \frac{n-d_\lambda+1}{s-d_\lambda+1}(\X^\top\X+\lambda\I)^{-1},
\end{align*}
where $d_\lambda\defeq \tr(\X(\X^\top\X+\lambda\I)^{-1}\X^\top)\leq d$
is a standard notion of statistical dimension. Crucially, the above
bound holds for subset sizes $s\geq d_\lambda$, which can be much
smaller than the dimension $d$.

Under the additional assumption that response vector $\y$ 
is generated by a linear transformation
distorted with bounded white noise, 
the expected bound on $(\X_S^\top\X_S+\lambda\I)^{-1}$ leads to
strong variance bounds for ridge regression estimators. 
Specifically, we prove that
when $\y=\X\wbt + \xib$, with $\xib$ having mean zero and bounded
variance $\Var[\xib]\preceq \sigma^2\I$, then if $S$ is sampled according 
to $\lambda$-regularized volume sampling with $\lambda\leq
\frac{\sigma^2}{\|\wbt\|^2}$, we can obtain the following mean squared
prediction error (MSPE) bound:
\begin{align*}
\E_S\E_{\xib}\bigg[\frac{1}{n}\|\X(\of{\w_\lambda^*}S-\wbt)\|^2\bigg]&
\leq\frac{\sigma^2d_\lambda}{s-d_\lambda+1},
\end{align*}
where $\of{\w_\lambda^*} S = (\X_S^\top\X_S+\lambda\I)^{-1}\X_S^\top\y_S$
is the ridge regression estimator for the subproblem $(\X_S,\y_S)$.
Our new lower bounds show
that the above upper bound for regularized volume sampling 
is essentially optimal with respect to the choice of a subsampling
procedure. 

\paragraph{Algorithms and experiments}
The only
known polynomial time algorithm for size $s>d$ volume sampling was
recently proposed by \cite{dual-volume-sampling} with time complexity
$O(n^4 s)$. 
In this paper we give two new algorithms
using our general framework of reverse iterative sampling: 
one with deterministic
runtime of $O((n\!-\!s\!+\!d)nd)$, and a second one that with high
probability finishes in time $O(nd^2)$. Thus both algorithms
improve on the state-of-the-art by a factor of at least $n^2$
and make volume sampling nearly as efficient as the comparable i.i.d. sampling
technique called leverage score sampling. Our experiments on real
datasets confirm the efficiency of our algorithms, and show that for
small sample sizes $s$ volume sampling is more effective than leverage
score sampling for the task of subset selection for linear regression.

\paragraph{Related work}
\label{sec:related-work}
Volume sampling is a type of determinantal point
process (DPP) \citep{dpp}. DPP's have been given a lot of attention in
the literature with many applications to machine learning, including
recommendation systems \citep{dpp-shopping} and clustering
\citep{dpp-clustering}. Many exact and approximate methods for efficiently
generating samples from this distribution have been proposed
\citep{efficient-volume-sampling,k-dpp}, making it a useful tool in
the design of randomized algorithms. Most of those methods focus on
sampling $s\leq d$ elements. In this paper, we study volume sampling
sets of size $s\geq d$, which was proposed by
\cite{avron-boutsidis13} and motivated with applications in graph
theory, linear regression, matrix approximation and more. 

The problem of selecting a subset of the rows of the input
matrix for solving a linear
regression task has been extensively studied in statistics literature
under the terms {\em optimal design} \citep{optimal-design-book} and
{\em pool-based active learning}
\citep{pool-based-active-learning-regression}. Various 
criteria for subset selection have been proposed, like A-optimality
and D-optimality. For example, A-optimality seeks to minimize
$\tr((\X_S^\top\X_S)^{-1})$, which is combinatorially hard to optimize
exactly. We show that for size $s\ge d$ volume sampling,
$\E[(\X_S^\top\X_S)^{-1}] = \frac{n-d+1}{s-d+1}\,(\X^\top\X)^{-1}$,
which provides an approximate randomized solution of
the sampled inverse covariance matrix rather than just its
trace.

In the field of computational geometry a variant of volume sampling
was used to obtain optimal bounds for low-rank matrix approximation. In this
task, the goal is to select a small subset of rows of a matrix
$\X\in \R^{n\times d}$ (much fewer than the rank of $\X$, which is
bounded by $d$), so that a good low-rank approximation of $\X$ can be
constructed from those rows. \cite{pca-volume-sampling} showed that 
volume sampling of size $s<d$ index sets 
obtains optimal multiplicative bounds for this task 
and polynomial time algorithms for size $s<d$ volume
sampling were given in \cite{efficient-volume-sampling} and
\cite{more-efficient-volume-sampling}.
We show in this paper that for linear regression, fewer than rank many 
rows do not suffice to obtain multiplicative bounds.
This is why we focus on volume sampling sets of size
$s\geq d$ (recall that, for simplicity, we assume that $\X$ is full rank).

Computing approximate solutions to linear regression has
been explored in the domain of numerical linear algebra (see
\cite{randomized-matrix-algorithms} for an overview). 
Here, multiplicative bounds on the loss of the approximate solution can be
achieved via two approaches. The first approach relies on sketching
the input matrix $\X$ and the response vector $\y$ by
multiplying both by the same suitably chosen random matrix. 
Algorithms which use sketching to generate a smaller
input matrix for a given linear regression problem are
computationally efficient
\citep{sarlos-sketching,regression-input-sparsity-time}, but they
require all of the responses from the original problem
to generate the sketch and are thus not suitable for the goal 
of using as few response values as possible.
The second approach is based on subsampling the rows of
the input matrix and only asking for the responses of the sampled rows. 
The learner optimally solves the sampled subproblem\footnote{Note 
that those methods typically require
additional rescaling of the subproblem, whereas the techniques
proposed in this paper do not require any rescaling.}
and then uses the obtained weight vector for
its prediction on all rows. The selected subproblem is
known under the term ``$\b$-agnostic minimal coreset'' 
in \citep{coresets-regression,cur-decomposition} 
since it is selected without knowing the response vector (denoted as
the vector $\b$).
The second approach coincides with the goals of this paper
but the focus here is different in a number of ways.
First, we focus on the smallest sample size for which a
multiplicative loss bound is possible:
Just $d$ volume sampled rows are
sufficient to achieve a multiplicative bound with a fixed
factor, while $d-1$ are not sufficient. 
A second focus here is the efficiency and the combinatorics of volume
sampling. The previous work is mostly based on 
i.i.d. sampling using the statistical leverage scores \citep{fast-leverage-scores}.
As we show in this paper, leverage scores are the
marginals of volume sampling and
any i.i.d. sampling method requires 
sample size $\Omega(d\log d)$ to achieve multiplicative loss bounds
for linear regression. On the other hand, the rows obtained
from volume sampling are {\em selected jointly} 
and this makes the chosen subset more informative and
brings the required sample size down to $d$.
Third, we focus on the fact that the estimators produced
from volume sampling are unbiased and therefore can be
averaged to get more accurate estimators.
Using our methods, averaging immediately leads to an unbiased estimator
with expected loss $1+\epsilon$ times the optimum based on
sampling $d^2/\epsilon$ responses in total. We leave it as
an open problem to construct a $1+\epsilon$ factor
unbiased estimator from sampling only $O(d/\epsilon)$ responses. 
If unbiasedness is not a concern, then such an estimator has
recently been found \citep{chen2017condition}.

\paragraph{Outline of the paper}
In the next section, we define volume sampling as an instance of a
more general procedure we call reverse iterative sampling, and we use
this methodology to prove closed form matrix expressions
for the expectation of the pseudoinverse estimator $(\I_S\X)^+$
and its square $(\I_S\X)^+(\I_S\X)^{+\top}$, when $S$ is sampled by volume sampling.
Central to volume sampling is the Cauchy-Binet formula for
determinants. As a side, we produce a number of short self-contained
proofs for this formula and show that leverage scores are
the marginals of volume sampling.
Then in Section \ref{sec:linear-regression} we formulate the problem of 
solving linear regression from a small number of responses, 
and state the upper bound for the
expected square loss of the volume sampled least squares estimator
(Theorem \ref{t:loss}),
followed by a discussion and related lower-bounds. 
In Section \ref{sec:proof-loss}, we prove Theorem \ref{t:loss}
and an additional related matrix expectation formula. 
We next discuss in Section \ref{sec:av} how unbiased estimators can easily be
averaged for improving the expected loss and discuss 
open problems for constructing unbiased estimators.
A new regularized
variant of volume sampling is proposed in Section \ref{sec:regularized},
along with the statistical guarantees it offers for computing subsampled ridge
regression estimators. Next, we present
efficient volume sampling algorithms in Section \ref{sec:algorithm},
based on the reverse iterative sampling paradigm, which are then
experimentally evaluated in Section \ref{sec:experiments}.
Finally, Section
\ref{sec:conclusions} concludes the paper by suggesting a future
research direction.

\section{Reverse iterative sampling}
\label{sec:pseudo-inverse}
Let $n$ be an integer dimension.
For each subset $S\subseteq \{1..n\}$ of size $s$ we are given a 
matrix formula $\fof{\F}S$.
Our goal is to sample set $S$ of size $s$ using some
sampling process and then develop concise expressions for
$\E_{S:|S|=s}[\fof{\F}S]$. Examples of formula
classes $\fof{\F}S$ will be given below.

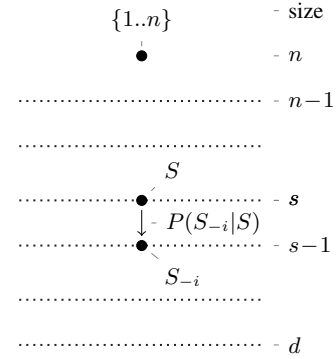
\begin{wrapfigure}{r}{0.37\textwidth}
\centering
\begin{tikzpicture}[font=\footnotesize,scale=.95,pin distance=.7mm]
    \begin{axis}[hide axis,ymin=-3,ymax = 6,xmax=2,xmin=-2]
	\addplot[mark=none, dotted, black,style=thick] coordinates {(-1,3.2) (1,3.2)};
	\addplot[mark=none, dotted, black,style=thick] coordinates {(-1,2.2) (1,2.2)};
	\addplot[mark=none, dotted, black,style=thick] coordinates {(-1,1) (1,1)};
	\addplot[mark=none, dotted, black,style=thick] coordinates {(-1,0) (1,0)};
	\addplot[mark=none, dotted, black,style=thick] coordinates {(-1,-1.2) (1,-1.2)};
	\addplot[mark=none, dotted, black,style=thick] coordinates {(-1,-2.2) (1,-2.2)};

        \addplot[mark=*] coordinates {(0,4.2)} node[pin=90:{$\{1..n\}$}]{};
        \addplot[mark=*] coordinates {(0,1)} node[pin=45:{$S$}]{};
        \addplot[mark=*] coordinates {(0,0)} node[pin=-45:{$S_{-i}$}]{};
        \addplot[mark={}] coordinates {(0,0.5)} node[pin=0:{$P(S_{-i}|S)$}]{};
        \node[draw=none] (a) at (0,1) {};
        \node[draw=none] (b) at (0,0) {};
        \draw[->] (a)--(b);

        \addplot[mark={}] coordinates {(1,5.2)} node[pin=0:{size}]{};
        \addplot[mark={}] coordinates {(1,4.2)} node[pin=0:{$n$}]{};
        \addplot[mark={}] coordinates {(1,3.2)} node[pin=0:{$n\!-\!1$}]{};
        \addplot[mark={}] coordinates {(1,1)} node[pin=0:{$s$}]{};
        \addplot[mark={}] coordinates {(1,1)} node[pin=0:{$s$}]{};
        \addplot[mark={}] coordinates {(1,0)} node[pin=0:{$s\!-\!1$}]{};
        \addplot[mark={}] coordinates {(1,-2.2)} node[pin=0:{$d$}]{};
    \end{axis}
    \end{tikzpicture}
\caption{Reverse iterative sampling.}
\label{fig:dag-sampling}
\end{wrapfigure}

We represent the sampling by a directed acyclic graph (DAG), with a
single root node corresponding to the full set $\{1..n\}$. Starting
from the root, we proceed along the edges of the graph, 
iteratively removing elements from the set $S$ (see Figure
\ref{fig:dag-sampling}). 
Concretely, consider a DAG with levels $s = n, n-1, ..., d$.
Level $s$ contains $n \choose s$ nodes 
for sets $S\subseteq \{1..n\}$ of size $s$. 
Every node $S$ at level $s>d$ has $s$ directed edges
to the nodes $S-\{i\}$ (also denoted $\Sm$) at the next lower level. 
These edges are labeled with a conditional probability 
vector $P(S_{-i}|S)$.
The probability of a (directed) path is the product of the
probabilities along its edges.
The outflow of probability from each node on
all but the lowest level $d$ is 1. 
We let the probability
$P(S)$ of node $S$ be the probability of all paths 
from the top node $\{1..n\}$ to $S$ and set the
probability $P(\{1..n\})$ of the top node to 1.
We associate a formula $\fof{\F}S$ with each set node $S$ in the
DAG. The following key equality lets us compute
expectations.

\begin{lemma}
\label{l:key}
If for all $S\subseteq \{1..n\}$ of size greater than $d$ we have
$$\Blue{\fof{\F}{S}=\sum_{i\in S} P(\Sm|S)\fof{\F}{S_{-i}}},$$
then for any $s\in\{d..n\}$: $\;\;\E_{S:|S|=s} [\fof{\F}{S}]
=\sum_{S:|S|=s} P(S)\fof{\F}{S} = \fof{\F}{\{1..n\}}.$
\end{lemma}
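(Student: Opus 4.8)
The plan is to prove Lemma \ref{l:key} by backward induction on the level $s$, descending from $s=n$ down to the target level. The base case is the top node: by definition $P(\{1..n\})=1$, so $\sum_{S:|S|=n}P(S)\fof{\F}{S} = \fof{\F}{\{1..n\}}$ trivially (there is only one set of size $n$). For the inductive step, suppose we have established that $\sum_{S:|S|=s+1}P(S)\fof{\F}{S} = \fof{\F}{\{1..n\}}$ for some $s+1 > d$; we want the same identity one level down at size $s$.

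The key step is to expand each term $P(S)\fof{\F}{S}$ at level $s+1$ using the hypothesis of the lemma. Since $s+1>d$, the assumed local identity gives $\fof{\F}{S} = \sum_{i\in S}P(S_{-i}|S)\fof{\F}{S_{-i}}$, so
\begin{align*}
\fof{\F}{\{1..n\}} = \sum_{S:|S|=s+1}P(S)\fof{\F}{S}
= \sum_{S:|S|=s+1}\sum_{i\in S}P(S)\,P(S_{-i}|S)\,\fof{\F}{S_{-i}}.
\end{align*}
Now I would swap the order of summation so that the outer sum ranges over sets $T$ of size $s$ and the inner sum ranges over the pairs $(S,i)$ with $S$ of size $s+1$ and $i\in S$ such that $S_{-i}=T$; equivalently, $S = T\cup\{j\}$ for some $j\notin T$. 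This regrouping turns the double sum into $\sum_{T:|T|=s}\big(\sum_{j\notin T}P(T\cup\{j\})\,P(T\mid T\cup\{j\})\big)\fof{\F}{T}$. By the definition of $P(T)$ as the total probability of all directed paths from the root to $T$ — and since every such path reaches $T$ through exactly one predecessor $T\cup\{j\}$ at level $s+1$ — the inner parenthesized quantity is precisely $P(T)$. Substituting gives $\fof{\F}{\{1..n\}} = \sum_{T:|T|=s}P(T)\fof{\F}{T}$, completing the induction.

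The main obstacle, and the point that needs the most care, is justifying the identity $\sum_{j\notin T}P(T\cup\{j\})\,P(T\mid T\cup\{j\}) = P(T)$: this is exactly the statement that the probability of reaching node $T$ equals the sum over its parents of (probability of reaching the parent) times (edge probability), which follows from the path-product definition of $P(\cdot)$ together with the fact that the DAG is layered, so distinct parents of $T$ correspond to disjoint sets of root-to-$T$ paths, and every root-to-$T$ path passes through exactly one parent. I would state this explicitly as the recursive characterization of $P(T)$ before running the induction, so the inductive step reduces to the bookkeeping of the summation swap. Note that we never need the outflow-equals-one condition here; that condition guarantees $\sum_{S:|S|=s}P(S)=1$ (so the $P(S)$ really are a distribution), but the expectation identity itself only uses linearity and the path-product structure.
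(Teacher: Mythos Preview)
Your proof is correct and takes essentially the same approach as the paper: both arguments show that the weighted sum $\sum_{S:|S|=s}P(S)\fof{\F}{S}$ is constant across consecutive levels by expanding $\fof{\F}{S}$ via the hypothesis and regrouping the resulting double sum over edges to obtain $\sum_{T:|T|=s-1}P(T)\fof{\F}{T}$, using $P(T)=\sum_{j\notin T}P(T_{+j})P(T\mid T_{+j})$. Your explicit backward-induction framing and the remark that the outflow-equals-one condition is not needed for the identity itself are small elaborations on the paper's more compressed presentation.
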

\begin{proof}
Suffices to show that expectations at successive layers $s$
and $s-1$ are equal for $s>d$:
\begin{align*}
\sum_{S:|S|=s} P(S) \,\Blue{\fof{\F}{S}}
= \!\!\!\sum_{S:|S|=s} P(S) \Blue{\sum_{i\in S} P(S_{-i}|S) \,\fof{\F}{S_{-i}}}
&= \sum_{S:|S|=s} \;\,\sum_{i\in S} P(S) P(S_{-i}|S) \fof{\F}{S_{-i}}
\\
&=\!\!\!\sum_{T:|T|=s-1} \underbrace{\sum_{j\notin T} P(T_{+j}) P(T|T_{+j})}_{P(T)} \fof{\F}{T}.
\end{align*}
Note that the r.h.s. of the first line has one summand per
edge leaving level $s$, and the r.h.s. of the second line has one
summand per edge arriving at level $s-1$. 
Now the last equality holds because
the edges leaving level $s$ are exactly those
arriving at level $s-1$, and the summand for each edge in
both expressions is equivalent.
\end{proof}

\subsection{Volume sampling}\label{sec:vol-def}
Given a tall full rank matrix $\X\in\R^{n\times d}$
and a sample size $s\in \{d..n\}$,
volume sampling chooses subset $S\subseteq\{1..n\}$
of size $s$ with probability proportional to squared volume spanned by
the columns of submatrix\footnote{For sample size $s=d$,
the rows and columns of $\X_S$ have the same length and
$\det(\X_S^\top\X_S)$ is also the squared
volume spanned by the rows $\X_S$.}
 $\X_S$ and this squared volume equals $\det(\X_S^\top\X_S)$. 
The following theorem uses
the above DAG setup to compute the normalization constant for this
distribution. 
Note that all subsets $S$ of volume 0 will be ignored, 
since they are unreachable in the proposed sampling procedure.
\begin{theorem}
\label{t:vol}
Let $\X\in \R^{n\times d}$, where $d\le n$ and $\det(\X^\top\X)>0$.
For any set $S$ of size $s>d$ for which
$\det(\X_S^\top\X_S)>0$, 
define the probability of the edge from $S$ to $\Sm$ for $i\in S$ as:
\begin{align*}
\!\!
P(\Sm|S)\!\defeq\!\frac{\det(\X_\Sm^\top\X_\Sm)}
             {\Red{(s\!-\!d)}\det(\X_S^\top\X_S)}
\!=\! \frac{1\!-\!\x_i^\top (\X_S^\top\X_S)^{-1}\x_i} {s\!-\!d},
\tag{\bf reverse iterative volume sampling}
\end{align*}
where $\x_i$ is the $i$th row of $\X$.
In this case $P(\Sm|S)$ is a proper probability distribution.
If $\det(\X_S^\top\X_S)=0$, then simply set $P(\Sm|S)$ to $\frac{1}{s}$.
With these definitions, $\sum_{S:|S|=s} P(S)=1$ for all $s\in\{d..n\}$
and the probability of all paths from
the root to any subset $S$ of size at least $d$ is
\begin{align*}
P(S)= \frac{\det(\X_S^\top\X_S)} {{n-d \choose s-d}\det(\X^\top\X)}.
\tag{\bf volume sampling}
\end{align*}
\end{theorem}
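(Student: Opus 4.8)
The plan is to verify the theorem's four claims in order: the algebraic identity for $P(\Sm|S)$, that $P(\cdot\,|\,S)$ is a genuine probability vector over $\{i\in S\}$, conservation of probability from each level to the next, and finally the closed form for $P(S)$.

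For the identity, write $\X_S^\top\X_S=\sum_{j\in S}\x_j\x_j^\top$, so that deleting the $i$-th row is the rank-one downdate $\X_\Sm^\top\X_\Sm=\X_S^\top\X_S-\x_i\x_i^\top$. The matrix determinant lemma then gives $\det(\X_\Sm^\top\X_\Sm)=\det(\X_S^\top\X_S)\,(1-\x_i^\top(\X_S^\top\X_S)^{-1}\x_i)$, and dividing by $(s-d)\det(\X_S^\top\X_S)$ produces the second expression for $P(\Sm|S)$; this holds whenever $\det(\X_S^\top\X_S)>0$, and in particular covers the case $\det(\X_\Sm^\top\X_\Sm)=0$, where the leverage score $\x_i^\top(\X_S^\top\X_S)^{-1}\x_i$ equals $1$ and the edge has probability $0$. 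Nonnegativity of $P(\Sm|S)$ is immediate from the first (determinant) expression, Gram determinants being nonnegative. For the normalization, using the cyclic property of the trace, $\sum_{i\in S}\x_i^\top(\X_S^\top\X_S)^{-1}\x_i=\tr\!\big((\X_S^\top\X_S)^{-1}\X_S^\top\X_S\big)=\tr(\I)=d$, whence $\sum_{i\in S}P(\Sm|S)=(s-d)/(s-d)=1$; equivalently, Cauchy--Binet gives $\sum_{i\in S}\det(\X_\Sm^\top\X_\Sm)=(s-d)\det(\X_S^\top\X_S)$ by counting. On the degenerate nodes the uniform rule $P(\Sm|S)=1/s$ is trivially a distribution over the $s$ outgoing edges.

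With the edge weights now known to be probability vectors, conservation is immediate. By definition $P(S)$ sums probabilities of all root-to-$S$ paths; grouping paths by the predecessor $T=S_{+j}$ ($j\notin S$) appearing on their last edge gives $P(S)=\sum_{j\notin S}P(S_{+j})\,P(S\mid S_{+j})$. Summing over all $S$ of size $s<n$ and reindexing the resulting double sum by the pairs $(T,i)$ with $T=S\cup\{i\}$,
\[\sum_{|S|=s}P(S)=\sum_{|T|=s+1}P(T)\sum_{i\in T}P(T_{-i}\mid T)=\sum_{|T|=s+1}P(T),\]
so a downward induction from $P(\{1..n\})=1$ yields $\sum_{|S|=s}P(S)=1$ for every $s\in\{d..n\}$.

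The closed form I would prove by the same downward induction on $s=|S|$, the base case $s=n$ being trivial. From $P(S)=\sum_{j\notin S}P(S_{+j})P(S\mid S_{+j})$, the inductive hypothesis $P(S_{+j})=\det(\X_{S_{+j}}^\top\X_{S_{+j}})/\big(\binom{n-d}{s+1-d}\det(\X^\top\X)\big)$, and the edge formula $P(S\mid S_{+j})=\det(\X_S^\top\X_S)/\big((s+1-d)\det(\X_{S_{+j}}^\top\X_{S_{+j}})\big)$, the awkward factor $\det(\X_{S_{+j}}^\top\X_{S_{+j}})$ cancels term by term, leaving
\[P(S)=\frac{(n-s)\,\det(\X_S^\top\X_S)}{(s+1-d)\,\binom{n-d}{s+1-d}\,\det(\X^\top\X)},\]
and the elementary identity $(s+1-d)\binom{n-d}{s+1-d}=(n-s)\binom{n-d}{s-d}$ turns this into exactly the asserted formula. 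Zero-volume nodes need a line of care but cause no trouble: if $\det(\X_S^\top\X_S)=0$ the formula predicts $P(S)=0$, and indeed each predecessor $T$ either has zero volume, so $P(T)=0$ by induction, or positive volume, so $P(S\mid T)=0$; either way $P(S)=0$. I do not expect a genuine obstacle here: the only substantive inputs are the matrix determinant lemma and the trace identity $\sum_{i\in S}\x_i^\top(\X_S^\top\X_S)^{-1}\x_i=d$ (equivalently, Cauchy--Binet), and the remaining effort is the bookkeeping of the induction — keeping the binomial coefficients consistent and checking that the degenerate $\det=0$ nodes behave as the formulas dictate — which is the only place requiring any care.
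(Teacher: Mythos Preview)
Your argument is correct, but the route you take to the closed form for $P(S)$ differs from the paper's. You proceed by downward induction on $s$, using the one-step recurrence $P(S)=\sum_{j\notin S}P(S_{+j})\,P(S\mid S_{+j})$ and cancelling the intermediate determinant in each summand. The paper instead observes that for a positive-volume $S$, every one of the $(n-s)!$ root-to-$S$ paths passes only through positive-volume nodes, and along each such path the determinant ratios telescope while the scalar denominators accumulate to the same product $(n-d)(n-d-1)\cdots(s-d+1)$; multiplying by the number of paths gives the formula in one stroke. Your induction is a perfectly valid alternative and makes the level-by-level conservation explicit, whereas the paper's telescoping argument is shorter and yields the Cauchy--Binet generalization \eqref{e:cbgen} as an immediate by-product without needing Cauchy--Binet as an input. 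Your handling of the degenerate nodes is fine: as you implicitly use, if $\det(\X_S^\top\X_S)>0$ then every superset $S_{+j}$ also has positive volume, so the non-degenerate edge formula applies to every term in your sum.
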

The rewrite of the ratio
$\frac{\det(\X_\Sm^\top\X_\Sm)}
      {\det(\X_S^\top\X_S)}$
as $1-\x_i^\top (\X_S^\top\X_S)^{-1}\x_i$
is Sylvester's Theorem for determinants. Incidentally, this is the
only property of determinants used in this section.

The theorem also implies a 
generalization of the Cauchy-Binet formula to size $s\ge d$ sets: 
\begin{equation}
\label{e:cbgen}
\sum_{S:|S|=s}\det(\X_S^\top\X_S)={n-d\choose s-d} \det(\X^\top\X).
\end{equation}
When $s=d$, then the binomial coefficient is 1 and the
above becomes the vanilla Cauchy-Binet formula.
The below proof of the theorem thus results in a minimalist proof of this
classical formula as well. The proof uses the reverse iterative
sampling (Figure \ref{fig:dag-sampling}) and the fact that
all paths from the root to node $S$ have the same
probability. For the sake of completeness we also give a
more direct inductive proof of the above generalized Cauchy-Binet
formula in Appendix \ref{a:CB}.

\noindent
\begin{proof}
First, for any node $S$ s.t. $s>d$ and $\det(\X_S^\top\X_S)>0$,
the probabilities out of $S$ sum to 1:
$$\sum_{i\in S} P(\Sm|S)=\sum_{i\in S}
\frac{1-\tr((\X_S^\top\X_S)^{-1}\x_i\x_i^\top)}{s-d}
= \frac{s-\tr((\X_S^\top\X_S)^{-1}\X_S^\top\X_S)}{s-d}
=\frac{s-d}{s-d}=1.$$

It remains to show the formula for the probability $P(S)$ of all paths
ending at node $S$.
If $\det(\X_S^\top\X_S)=0$, then one edge on any path from
the root to $S$ has probability 0. This edge goes from
a superset of $S$ with positive volume to a superset of $S$ that has
volume 0. Since all paths have probability 0, $P(S)=0$ in this case. 

Now assume $\det(\X_S^\top\X_S)>0$ and consider any path from the root
$\{1..n\}$ to $S$. There are $(n-s)!$ such paths all going
through sets with positive volume.
The fractions of determinants in the probabilities along each path telescope 
and the additional factors accumulate to
the same product.
So the probability of all paths from the root to
$S$ is the same and the total probability into $S$ is
$$ \qquad\qquad\frac{(n-s)!}{\Red{(n-d)\ldots(s-d+1)}}
\frac{\det(\X_S^\top\X_S)} {\det(\X^\top\X)}
\;=\ \frac{1}{\Red{n-d \choose s-d}}
\frac{\det(\X_S^\top\X_S)} {\det(\X^\top\X)}.$$
\vspace{-1.35cm}

\end{proof}

An immediate consequence of the above sampling procedure is the
following composition property of volume sampling, which states that this
distribution is closed under subsampling. 
We also give a direct proof to highlight the
combinatorics of volume sampling.
\begin{corollary}
\label{c:comp}
For any $\X\in\R^{n\times d}$ and $n\ge t > s\ge d$, the following
hierarchical sampling procedure:
\begin{align*}
T&\overset{t}{\sim}\X \qquad\ \ \,\text{(size $t$ volume sampling from $\X$)},\\
S&\overset{s}{\sim} \X_T\qquad \text{(size $s$ volume sampling from $\X_T$)}
\end{align*}
returns a set $S$ which is distributed according to size $s$ volume
sampling from $\X$.
\end{corollary}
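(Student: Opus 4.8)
The plan is to compute directly the probability $\widehat{P}(S)$ that the two-stage procedure outputs a fixed target set $S$ of size $s$, and to verify that it matches the size $s$ volume sampling probability $\frac{\det(\X_S^\top\X_S)}{{n-d\choose s-d}\det(\X^\top\X)}$ guaranteed by Theorem~\ref{t:vol}. Since the first stage must produce some size $t$ set $T$ containing $S$, the law of total probability gives
$$\widehat{P}(S)=\sum_{T\,:\,S\subseteq T,\;|T|=t}P_t(T;\X)\;P_s(S;\X_T),$$
where $P_t(\cdot\,;\A)$ denotes size $t$ volume sampling applied to the rows of a matrix $\A$.

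The key observation is that the summand does not depend on $T$. Because $S\subseteq T$ we have $(\X_T)_S=\X_S$, so plugging in the formula from Theorem~\ref{t:vol} twice,
$$P_t(T;\X)\,P_s(S;\X_T)
=\frac{\det(\X_T^\top\X_T)}{{n-d\choose t-d}\det(\X^\top\X)}\cdot
\frac{\det(\X_S^\top\X_S)}{{t-d\choose s-d}\det(\X_T^\top\X_T)}
=\frac{\det(\X_S^\top\X_S)}{{n-d\choose t-d}{t-d\choose s-d}\det(\X^\top\X)},$$
with the factor $\det(\X_T^\top\X_T)$ cancelling. (When $\det(\X_S^\top\X_S)=0$, or when no positive-volume superset $T$ exists, both sides are $0$, consistent with the formula.) The number of size $t$ supersets of $S$ is ${n-s\choose t-s}$, since the extra $t-s$ indices come from the $n-s$ indices outside $S$, so it remains only to check the binomial identity ${n-d\choose t-d}{t-d\choose s-d}={n-d\choose s-d}{n-s\choose t-s}$; expanding both sides into factorials shows each equals $\frac{(n-d)!}{(s-d)!\,(t-s)!\,(n-t)!}$. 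This yields $\widehat{P}(S)=\frac{\det(\X_S^\top\X_S)}{{n-d\choose s-d}\det(\X^\top\X)}$, which is exactly size $s$ volume sampling from $\X$, as desired.

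There is no real obstacle; the only points requiring a moment of care are bookkeeping ones: one should note that every $T$ receiving positive probability in the first stage has positive volume, hence $\X_T$ is full rank, so that $P_s(\cdot\,;\X_T)$ is genuinely well-defined via Theorem~\ref{t:vol}, and that volume-zero sets are assigned probability zero consistently on both sides. As an alternative that avoids the counting entirely, one can invoke the reverse iterative sampling picture directly: conditioned on the level $t$ set being $T$, every edge probability on the remaining descent to level $s$ involves only submatrices $\X_{S'}$ with $S'\subseteq T$, which coincide with the corresponding submatrices of $\X_T$; hence the tail of the chain is exactly reverse iterative volume sampling from $\X_T$, and Theorem~\ref{t:vol} identifies the marginals at levels $t$ and $s$ with the respective volume sampling distributions, giving the corollary immediately.
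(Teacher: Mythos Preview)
Your proof is correct and essentially identical to the paper's: both use the law of total probability, plug in the formula from Theorem~\ref{t:vol} for $P(T)$ and $P(S\mid T)$, observe that $\det(\X_T^\top\X_T)$ cancels so the summand is constant, count the ${n-s\choose t-s}$ supersets, and finish with the same binomial identity. Your additional remarks on the degenerate (zero-volume) cases and the alternative via the reverse iterative sampling chain are also in the spirit of the paper, which introduces the corollary as ``an immediate consequence of the above sampling procedure'' before giving the direct combinatorial argument.
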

\begin{proof} We start with the Law of Total Probability
and then use the probability formula for volume sampling
from the above theorem. Here $P(T\cap S)$
means the probability of all paths going through node $T$ at level $t$
and ending up at the final  node $S$ at level $s$. If $S \not\subseteq
T$, then $P(T\cap S)=0$.  
\begin{align*}
P(S)& = \;\sum_{T:\, S\subseteq T}\;\;\;
\overbrace{P(S\,|\,T)\qquad\qquad P(T)}^{P(T\cap S)}\\
&= \;\sum_{T:\,S\subseteq T}
\frac{\det(\X_S^\top\X_S)}{\Red{t-d\choose s-d}
\cancel{\det(\X_T^\top\X_T)}}
\;\;\;
\frac{\cancel{\det(\X_T^\top\X_T)}}{\Red{n-d\choose
t-d}\det(\X^\top\X)}\\
&=\Blue{n-s\choose t-s} \frac{\det(\X_S^\top\X_S)} {\Red{t-d\choose
  s-d}\Red{n-d\choose t-d}\det(\X^\top\X)}\ =\ \frac{\det(\X_S^\top\X_S)} {\Red{n-d\choose
s-d}\det(\X^\top\X)}.
\end{align*}
Note that for all sets $T$ containing $S$, the probability $P(T\cap S)$ is the same,
and there are $\Blue{n-s\choose t-s}$ such sets.
\end{proof}

The main competitor of volume sampling is i.i.d. sampling of
the rows of $\X$ w.r.t. the statistical leverage scores.
For an input matrix $\X\in\R^{n\times d}$, 
the leverage score of the $i$-th row $\x_i^\top$ of $\X$ is defined as
\begin{align*}
l_i \defeq \x_i^\top(\X^\top\X)^{-1}\x_i.
\end{align*}
Recall that this quantity appeared in the definition of conditional
probability $P(\Sm|S)$ in Theorem \ref{t:vol}, where the
leverage score was computed w.r.t. the submatrix $\X_S$. 
In fact, there is a more basic relationship between
leverage scores and volume sampling: If set $S$
is sampled according to size $s=d$ volume sampling, then 
the leverage score $l_i$ of row $i$ is the marginal probability 
$P(i\in S)$ of selecting $i$-th row into $S$.
A general formula for the marginals of size $s$ volume sampling
is given in the following proposition: 
\begin{proposition}\label{prop:marginal}
Let $\X\in\R^{n\times d}$ be a full rank matrix and $s\in\{d..n\}$. If
$S\subseteq \{1..n\}$ is sampled according to size $s$ volume
sampling, then for any $i\in\{1..n\}$,
\vspace{-3mm}
\begin{align*}
P(i\in S) = \frac{s-d}{n-d} + \frac{n-s}{n-d}\,\overbrace{\x_i^\top(\X^\top\X)^{-1}\x_i}^{l_i}.
\end{align*}
\end{proposition}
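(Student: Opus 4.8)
The plan is to compute the marginal $P(i\in S)$ by summing the volume-sampling probabilities $P(S) = \det(\X_S^\top\X_S)/\big({n-d\choose s-d}\det(\X^\top\X)\big)$ over all size-$s$ sets containing $i$. The key identity needed is a ``Cauchy–Binet with one fixed row'' statement: for a fixed index $i$,
$$\sum_{\substack{S:\,|S|=s\\ i\in S}}\det(\X_S^\top\X_S)
= \sum_{\substack{T:\,|T|=s-1\\ i\notin T}}\det\big((\X_{T+i})^\top\X_{T+i}\big).$$
By Sylvester's theorem, $\det((\X_{T+i})^\top\X_{T+i}) = \det(\X_T^\top\X_T)\,\big(1 + \x_i^\top(\X_T^\top\X_T)^{-1}\x_i\big)$ when $\X_T$ has full rank, so the sum splits into $\sum_{T}\det(\X_T^\top\X_T)$ plus $\sum_T \det(\X_T^\top\X_T)\,\x_i^\top(\X_T^\top\X_T)^{-1}\x_i = \sum_T \x_i^\top\mathrm{adj}(\X_T^\top\X_T)\x_i$, where the adjugate is a polynomial in the entries so the rank-deficient $T$ contribute zero automatically.

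First I would handle the two pieces. The first piece is the generalized Cauchy–Binet formula \eqref{e:cbgen} applied to $\X$ with $n$ replaced by $n-1$ rows (all rows except $i$) and sample size $s-1$, giving ${n-1-d\choose s-1-d}\det(\X_{-i}^\top\X_{-i})$ — but it is cleaner to instead recognize the whole left-hand side as $\big({n-1-d\choose s-1-d}+\text{correction}\big)\det(\X^\top\X)$ via a second application. Concretely, for the second piece I would use $\x_i^\top\mathrm{adj}(\X_T^\top\X_T)\x_i = \det(\X_{T+i}^\top\X_{T+i}) - \det(\X_T^\top\X_T)$ to rewrite $\sum_{T:|T|=s-1}\det(\X_{T+i}^\top\X_{T+i})$ using \eqref{e:cbgen} directly: the sets $T+i$ range over all size-$s$ sets containing $i$, closing a circularity, so it is better to start from the other end. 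The slickest route: apply \eqref{e:cbgen} to the full matrix $\X$ (size $s$) and split the sum over whether $i\in S$ or not, obtaining
$$\sum_{S:\,i\in S}\det(\X_S^\top\X_S) = {n-d\choose s-d}\det(\X^\top\X) - \sum_{S:\,i\notin S}\det(\X_S^\top\X_S) = {n-d\choose s-d}\det(\X^\top\X) - {n-1-d\choose s-d}\det(\X_{-i}^\top\X_{-i}),$$
the last step again by \eqref{e:cbgen} applied to $\X_{-i}\in\R^{(n-1)\times d}$. Dividing by ${n-d\choose s-d}\det(\X^\top\X)$ gives
$$P(i\in S) = 1 - \frac{{n-1-d\choose s-d}}{{n-d\choose s-d}}\cdot\frac{\det(\X_{-i}^\top\X_{-i})}{\det(\X^\top\X)}.$$

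Then I would simplify. The binomial ratio is ${n-1-d\choose s-d}/{n-d\choose s-d} = (n-s)/(n-d)$. For the determinant ratio, Sylvester's theorem (matrix determinant lemma) gives $\det(\X^\top\X) = \det(\X_{-i}^\top\X_{-i} + \x_i\x_i^\top) = \det(\X_{-i}^\top\X_{-i})\big(1 + \x_i^\top(\X_{-i}^\top\X_{-i})^{-1}\x_i\big)$, so $\det(\X_{-i}^\top\X_{-i})/\det(\X^\top\X) = 1 - l_i$, where I use the standard leverage-score identity $\x_i^\top(\X_{-i}^\top\X_{-i})^{-1}\x_i = l_i/(1-l_i)$ (equivalently $1+\x_i^\top(\X_{-i}^\top\X_{-i})^{-1}\x_i = 1/(1-l_i)$), which follows from Sherman–Morrison. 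Substituting yields $P(i\in S) = 1 - \frac{n-s}{n-d}(1-l_i) = \frac{n-s}{n-d}l_i + \big(1 - \frac{n-s}{n-d}\big) = \frac{n-s}{n-d}l_i + \frac{s-d}{n-d}$, as claimed.

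The main obstacle is bookkeeping with the rank-deficient submatrices: \eqref{e:cbgen} was proved under $\det(\X^\top\X)>0$ but implicitly sums over all size-$s$ subsets (those with volume $0$ contributing $0$), so I must make sure that the decomposition $\sum_{S:i\in S} + \sum_{S:i\notin S}$ and the application of \eqref{e:cbgen} to $\X_{-i}$ are legitimate — this needs $\X_{-i}$ to still have rank $d$, i.e.\ $\det(\X_{-i}^\top\X_{-i})>0$. If $\x_i$ lies outside the span of the other rows (so $\X_{-i}$ is rank-deficient), then $l_i = 1$, every volume-sampled $S$ must contain $i$, and one checks directly that $P(i\in S)=1$ matches the formula since $\frac{n-s}{n-d}\cdot 1 + \frac{s-d}{n-d}=1$; alternatively, since the paper assumes $\X$ is full rank and $n > d$, generically $\X_{-i}$ is full rank and the edge case can be dispatched by a continuity or limiting argument, or simply noting $\det(\X_{-i}^\top\X_{-i})=0 \Rightarrow l_i=1$ makes both sides of the pre-simplification identity agree. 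I would state this carefully but not belabor it.
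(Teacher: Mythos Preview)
Your proposal is correct and takes essentially the same route as the paper: compute $P(i\notin S)$ by recognizing $\sum_{S:|S|=s,\,i\notin S}\det(\X_S^\top\X_S)$ as a (generalized) Cauchy--Binet sum over the rows of $\X_{-i}$, then use $\det(\X_{-i}^\top\X_{-i})/\det(\X^\top\X)=1-l_i$ and the binomial ratio $(n-s)/(n-d)$. The only cosmetic difference is that the paper first expands each $\det(\X_S^\top\X_S)$ into size-$d$ subsets via the standard Cauchy--Binet formula and then counts, whereas you invoke the generalized formula \eqref{e:cbgen} for $\X_{-i}$ in one step; your handling of the $l_i=1$ edge case is also more explicit than the paper's.
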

\begin{proof}
Instead of $P(i\in S)$ we will first compute $P(i\notin S)$:
\begin{align*}
P(i\notin S)&=\sum_{S:|S|=s,i\notin S}  \frac{\det(\X_S^\top\X_S)}
                                  {{n-d\choose s-d} \det(\X^\top\X)}
\\ &=\sum_{S:|S|=s,i\notin S}
\frac{\sum_{T\subseteq S:|T|=d} \det(\X_T^\top\X_T)}
     {{n-d\choose s-d} \det(\X^\top\X)}
\\ &=\frac{\Blue{n-d-1\choose s-d}
        \overbrace{\sum_{T:|T|=d,i\notin T} \det((\X_{-i})_T^\top(\X_{-i})_T)}
                 ^{\det(\X_{-i}^\top\X_{-i})}
          }
          {{n-d\choose s-d} \det(\X^\top\X)}
\\ &=\frac{n-s}{n-d}\; \big(1-\xinv\big),
\end{align*}
where we used Cauchy-Binet twice and the fact
that every set $T:|T|\!=\!d,\ i\notin T$
appears in $\Blue{n-d-1\choose s-d}$
sets $S:|S|\!=\!s,\ i\notin S$.
Now, the marginal probability follows from the fact that $P(i\in S)=1-P(i\notin S)$.
\end{proof}

\subsection{Expectation formulas for volume sampling}
\label{sec:expectation-formulas}
All expectations in the remainder of the paper are w.r.t.
volume sampling. We use the short-hand $\E[\fof{\F}S]$ for
expectation with volume sampling where the size of the
sampled set is fixed to $s$.
The expectation formulas for two choices of $\fof{\F}S$ are
proven in Theorems \ref{t:einv} and \ref{t:einvs}. By Lemma \ref{l:key} 
it suffices to show $\fof{\F}S=\sum_{i\in S} P(\Sm|S)\fof{\F}\Sm$
for volume sampling. We also present a related
expectation formula (Theorem \ref{t:eprojs}), which is proven later
using different techniques.

Recall that $\X_S$ is the submatrix of
rows indexed by $S\subseteq\{1..n\}$.
We also use a version of $\X$ in which
all but the rows of $S$ are zeroed out.
This matrix equals $\I_S\X$ where $\I_S$ is an
$n$-dimensional diagonal
matrix with $(\I_S)_{ii}=1$ if $i\in S$ and 0 otherwise (see Figure \ref{f:shapes}).

\begin{theorem}
\label{t:einv}
Let $\X\in\R^{n\times d}$ be a tall full rank matrix
(i.e. $n\geq d$). For $s\in \{d..n\}$, let
$S\subseteq \{1..n\}$ be a size $s$ volume sampled set over $\X$. Then
$$\E[(\I_S\X)^+]=\X^+.$$
\end{theorem}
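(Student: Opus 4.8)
The plan is to invoke Lemma~\ref{l:key} with the formula $\fof{\F}S=(\I_S\X)^+$. Since $\I_{\{1..n\}}\X=\X$ and hence $\fof{\F}{\{1..n\}}=\X^+$, it suffices to verify the one-step reverse recursion
\[
(\I_S\X)^+=\sum_{i\in S}P(\Sm|S)\,(\I_{\Sm}\X)^+\qquad\text{for every }S\text{ of size }s>d.
\]
Because any $S$ with $\det(\X_S^\top\X_S)=0$ has $P(S)=0$ and therefore drops out of the expectation in Lemma~\ref{l:key}, I only need to check the recursion for full-rank $S$. For such $S$ the matrix $\I_S\X$ has full column rank, so $(\I_S\X)^+=(\X_S^\top\X_S)^{-1}\X^\top\I_S$; I will abbreviate $\Z=(\X_S^\top\X_S)^{-1}$ and $l_i=\x_i^\top\Z\x_i$, so that Theorem~\ref{t:vol} gives $P(\Sm|S)=(1-l_i)/(s-d)$.

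First I would rewrite a single child term $(\I_{\Sm}\X)^+$ in terms of $\Z$. Using $\X_{\Sm}^\top\X_{\Sm}=\X_S^\top\X_S-\x_i\x_i^\top$ together with the Sherman--Morrison identity (valid for those $i$ with $l_i<1$; the remaining ones have $P(\Sm|S)=0$ and contribute nothing), and $\X^\top\I_{\Sm}=\X^\top\I_S-\x_i\e_i^\top$, multiplying out
\[
(\I_{\Sm}\X)^+=\Big(\Z+\tfrac{\Z\x_i\x_i^\top\Z}{1-l_i}\Big)\big(\X^\top\I_S-\x_i\e_i^\top\big)
\]
and collecting the two $\Z\x_i\e_i^\top$-type terms (their coefficients $1$ and $l_i/(1-l_i)$ sum to $1/(1-l_i)$) should collapse everything to the clean per-$i$ identity
\[
(1-l_i)\,(\I_{\Sm}\X)^+=(1-l_i)\,(\I_S\X)^+ + \Z\x_i\big(\x_i^\top\Z\X^\top\I_S-\e_i^\top\big).
\]

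It then remains to sum over $i\in S$ and divide by $s-d$. For the leading term, $\sum_{i\in S}(1-l_i)=s-\tr(\Z\,\X_S^\top\X_S)=s-d$. For the correction term, $\sum_{i\in S}\x_i\x_i^\top=\X_S^\top\X_S$ yields $\sum_{i\in S}\Z\x_i\x_i^\top\Z\X^\top\I_S=\Z\X^\top\I_S=(\I_S\X)^+$, and likewise $\sum_{i\in S}\Z\x_i\e_i^\top=\Z\X^\top\I_S=(\I_S\X)^+$, so the corrections cancel exactly. This gives $\sum_{i\in S}(1-l_i)(\I_{\Sm}\X)^+=(s-d)(\I_S\X)^+$, which is precisely the required recursion, and Lemma~\ref{l:key} then delivers $\E[(\I_S\X)^+]=\X^+$.

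The one delicate point I anticipate is the boundary case $l_i=1$ (equivalently $\det(\X_{\Sm}^\top\X_{\Sm})=0$), where Sherman--Morrison is not applicable. This is harmless: such an $i$ enters the sum with coefficient $1-l_i=0$, and one checks separately that $l_i=1$ forces $\x_i^\top\Z\x_j=\delta_{ij}$ for all $j\in S$ (since $\sum_{j\in S}(\x_i^\top\Z\x_j)^2=\x_i^\top\Z\X_S^\top\X_S\Z\x_i=l_i=1$ while the $j=i$ term alone is $l_i^2=1$), so the displayed per-$i$ identity still holds trivially as $0=0$ and need not be excluded from the sum. Apart from this, everything is routine matrix algebra; the conceptual content is just recognizing that the ratio of determinants hidden in $P(\Sm|S)$ is exactly the rank-one update that makes the telescoping of Lemma~\ref{l:key} go through.
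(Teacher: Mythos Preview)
Your proof is correct and follows essentially the same approach as the paper: apply Lemma~\ref{l:key} with $\fof{\F}S=(\I_S\X)^+$, use Sherman--Morrison on $(\X_{\Sm}^\top\X_{\Sm})^{-1}$, and expand the product $(\Z+\tfrac{\Z\x_i\x_i^\top\Z}{1-l_i})(\X^\top\I_S-\x_i\e_i^\top)$ to see that the cross terms cancel upon summation over $i\in S$. Your organization into a clean per-$i$ identity before summing is slightly different from the paper's direct four-term expansion, and your explicit treatment of the degenerate case $l_i=1$ (showing $\x_i^\top\Z\X^\top\I_S=\e_i^\top$ so the per-$i$ identity still reads $0=0$) is more careful than the paper, which glosses over this point in the proof of Theorem~\ref{t:einv}.
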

For the special case of $s=d$, this fact was known in the linear
algebra literature \citep{bental-teboulle,volume-of-matrices}.
It was shown there using elementary properties of the determinant 
such as Cramer's rule.\footnote{%
Using the composition property of volume sampling (Corollary \ref{c:comp}),
the $s>d$ case of the theorem can be reduced to the $s=d$ case.
However, we give a different self-contained proof.} 
The proof methodology developed here based on reverse
iterative volume sampling is very different.
We believe that this fundamental formula lies at the 
core of why volume sampling is important in many applications. In this
work, we focus on its application to linear regression. However,
\cite{avron-boutsidis13} discuss many problems where controlling the
pseudoinverse of a submatrix is essential. For those
applications, it is important to establish variance bounds for the
above expectation and volume
sampling once again offers very concrete guarantees. We obtain them by
showing the following formula, which can be viewed as a second moment
for this estimator.
\begin{theorem}
\label{t:einvs}
Let $\X\in\R^{n\times d}$ be a full rank matrix and $s\in\{d..n\}$.
If size $s$ volume sampling over $\X$ has full support, then
\[\E[\underbrace{(\X_S^\top\X_S)^{-1}}_
{(\I_S\X)^+(\I_S\X)^{+\top}} ]
= \frac{n-d+1}{s-d+1}\,
\underbrace{(\X^\top\X)^{-1}}_{\X^+\X^{+\top}}.\]
In the case when volume sampling does not have full
support, then the matrix equality ``$=$'' above 
is replaced by the positive-definite inequality ``$\preceq$''.
\end{theorem}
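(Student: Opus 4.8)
The plan is to apply the key equality of Lemma~\ref{l:key}, which requires guessing the right matrix formula $\fof{\F}{S}$. Since the target expectation $\frac{n-d+1}{s-d+1}(\X^\top\X)^{-1}$ carries a factor depending on the sample size $s$, the formula $(\X_S^\top\X_S)^{-1}$ itself cannot be harmonic for reverse iterative volume sampling; instead I would set
\[
\fof{\F}{S}\;\defeq\;(|S|-d+1)\,(\X_S^\top\X_S)^{-1},
\]
so that $\fof{\F}{\{1..n\}}=(n-d+1)(\X^\top\X)^{-1}$ and Lemma~\ref{l:key} gives $(s-d+1)\,\E[(\X_S^\top\X_S)^{-1}]=(n-d+1)(\X^\top\X)^{-1}$, which is exactly the claim.

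The heart of the argument is to verify the one-step identity $\fof{\F}{S}=\sum_{i\in S}P(\Sm|S)\,\fof{\F}{\Sm}$ for every $S$ of size $s>d$, i.e.
\[
\sum_{i\in S}P(\Sm|S)\,(\X_\Sm^\top\X_\Sm)^{-1}\;=\;\frac{s-d+1}{s-d}\,(\X_S^\top\X_S)^{-1}.
\]
For this I would use the rank-one downdate (Sherman--Morrison) formula for $\X_\Sm^\top\X_\Sm=\X_S^\top\X_S-\x_i\x_i^\top$, namely $(\X_\Sm^\top\X_\Sm)^{-1}=(\X_S^\top\X_S)^{-1}+\frac{(\X_S^\top\X_S)^{-1}\x_i\x_i^\top(\X_S^\top\X_S)^{-1}}{1-\x_i^\top(\X_S^\top\X_S)^{-1}\x_i}$, together with the explicit edge probability $P(\Sm|S)=\frac{1-\x_i^\top(\X_S^\top\X_S)^{-1}\x_i}{s-d}$ from Theorem~\ref{t:vol}; crucially the factor $1-\x_i^\top(\X_S^\top\X_S)^{-1}\x_i$ in $P(\Sm|S)$ cancels the denominator introduced by Sherman--Morrison. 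Each summand then becomes $\tfrac{1}{s-d}\big[(1-\x_i^\top(\X_S^\top\X_S)^{-1}\x_i)(\X_S^\top\X_S)^{-1}+(\X_S^\top\X_S)^{-1}\x_i\x_i^\top(\X_S^\top\X_S)^{-1}\big]$, and summing over $i\in S$ collapses using the two identities $\sum_{i\in S}\x_i^\top(\X_S^\top\X_S)^{-1}\x_i=\tr\!\big((\X_S^\top\X_S)^{-1}\X_S^\top\X_S\big)=d$ and $\sum_{i\in S}\x_i\x_i^\top=\X_S^\top\X_S$, giving $\tfrac{1}{s-d}\big[(s-d)(\X_S^\top\X_S)^{-1}+(\X_S^\top\X_S)^{-1}\big]$, as required. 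Invoking Lemma~\ref{l:key} finishes the full-support case.

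For the general case (no full support) the subtlety is that some $\Sm$ may be singular, precisely when $\x_i^\top(\X_S^\top\X_S)^{-1}\x_i=1$, in which case the edge probability is $0$ and $(\X_\Sm^\top\X_\Sm)^{-1}$ is undefined; note every $S$ reached with positive probability is still full rank, since by Sylvester's theorem a positive-probability edge out of a full-rank node lands only on full-rank nodes, and the root is full rank by assumption. I would handle this by running the Sherman--Morrison expansion only over the indices $i$ with $\x_i^\top(\X_S^\top\X_S)^{-1}\x_i<1$: the term $\sum_i(1-\x_i^\top(\X_S^\top\X_S)^{-1}\x_i)(\X_S^\top\X_S)^{-1}$ is unchanged (the dropped terms vanish), while $\sum_i(\X_S^\top\X_S)^{-1}\x_i\x_i^\top(\X_S^\top\X_S)^{-1}$ loses only positive semidefinite contributions, so one gets $\fof{\F}{S}\succeq\sum_{i\in S}P(\Sm|S)\,\fof{\F}{\Sm}$. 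A monotone version of Lemma~\ref{l:key} — obtained by replacing ``$=$'' with ``$\succeq$'' throughout its proof, which goes through verbatim since each $\sum_{S}P(S)(\cdot)$ step preserves the Loewner order — then yields $\E[\fof{\F}{S}]\preceq\fof{\F}{\{1..n\}}$, i.e. the stated positive-definite inequality. (Alternatively, perturb $\X$ to restore full support, apply the equality, and pass to the limit, discarding the positive semidefinite contributions of the sets that degenerate.) I expect this degenerate case, rather than the clean recursion, to be the part that needs the most care.
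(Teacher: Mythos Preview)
Your proposal is correct and follows essentially the same route as the paper: the paper also applies Lemma~\ref{l:key} with $\fof{\F}{S}=\frac{s-d+1}{n-d+1}(\X_S^\top\X_S)^{-1}$ (which differs from your choice only by the irrelevant global constant $n-d+1$), verifies the one-step identity via the same Sherman--Morrison cancellation against the edge probability, and handles the non-full-support case by summing only over indices with $1-\x_i^\top(\X_S^\top\X_S)^{-1}\x_i>0$ to obtain ``$\preceq$''. Your treatment of the degenerate case is in fact slightly more explicit than the paper's (you spell out why all reachable nodes are full rank and why Lemma~\ref{l:key} still propagates the inequality), but the argument is the same.
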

The condition that size $s$ volume sampling over $\X$ has
full support is equivalent to $\det(\X_S^\top\X_S)>0$ for
all $S\subseteq \{1..n\}$ of size $s$.
Note that if size $s$ volume sampling has full support, then size
$t>s$ also has full support. So full support for the
smallest size $d$
(often phrased as $\X$ being {\em in general position})
implies that volume sampling w.r.t. any size $s\ge d$ has full support.

The above theorem immediately gives an expectation formula for
the Frobenius norm $\|(\I_S\X)^+\|_F$ of the estimator:
\begin{align}
\label{e:frobs}
\E\left[\|(\I_S\X)^+\|_F^2\right] &= \E[\tr((\I_S\X)^+
  (\I_S\X)^{+\top})] = \frac{n-d+1}{s-d+1}\|\X^+\|_F^2.
\end{align}
This norm formula was shown by \cite{avron-boutsidis13}, with
numerous applications. Theorem \ref{t:einvs} can be viewed as a
much stronger pre-trace version of the known norm formula. Also our proof
techniques are quite different and much simpler. Note that if size $s$
volume sampling  for $\X$ does not have full support, then
\eqref{e:frobs} becomes an inequality. 

We now mention a second application of the above theorem 
in the context of linear regression 
for the case when the response vector $\y$ is modeled as a noisy linear transformation (i.e., $\y=\X\wbt+\xib$ for some $\wbt\in\R^d$ and
a random noise vector $\xib\in \R^n$ 
(detailed discussion in Section \ref{sec:regularized}). 
In this case the matrix $(\X_S^\top\X_S)^{-1}$ can
be interpreted as the covariance matrix of least-squares estimator
$\of{\w^*}S$ (for a fixed set $S$) and
Theorem \ref{t:einvs} gives an exact formula for the covariance matrix 
of $\of{\w^*}S$ under volume sampling. 
In Section \ref{sec:regularized}, we give an extended version of this
result which provides even stronger guarantees for regularized
least-squares estimators under this model (Theorem \ref{t:reg-einvs}).

Note that except for the above application, all results in
this paper hold for arbitrary response vectors $\y$. 
By combining Theorems \ref{t:einv} and \ref{t:einvs}, we
can also obtain a covariance-type 
formula\footnote{This notion of ``covariance'' is
  used in random matrix theory, i.e. for a random matrix $\M$ 
  we analyze $\E[(\M-\E[\M])(\M-\E[\M])^\top]$. See for example
  \cite{matrix-tail-bounds}.} for the pseudoinverse matrix estimator:
\begin{align}
&\E[((\I_S\X)^+-\E[(\I_S\X)^+])\; ((\I_S\X)^+-\E[(\I_S\X)^+])^\top]  
\nonumber\\ 
&=\E[(\I_S\X)^+(\I_S\X)^{+\top}]  - \E[(\I_S\X)^{+}]\; \E[(\I_S\X)^+]^\top
\nonumber\\ 
&=\frac{n-d+1}{s-d+1} \;\X^{+}\X^{+\top} - \X^{+}\X^{+\top}
  =\frac{n-s}{s-d+1}\; \X^{+}\X^{+\top}.
\label{e:covs}
\end{align}

We now give the background for a third matrix expectation formula
for volume sampling. Pseudoinverses can be used to compute the 
projection matrix onto the span of columns of matrix $\X$, 
which is defined as follows:
\vspace{-3mm}
\begin{align*}
\P_\X \defeq \X\overbrace{(\X^\top\X)^{-1}\X^\top}^{\X^+}.
\end{align*}
Applying Theorem \ref{t:einv} leads us immediately to the following unbiased
matrix estimator for the projection matrix:
\begin{align*}
\E[\X(\I_S\X)^+] = \X\,\E[(\I_S\X)^+] = \X\X^+ = \P_\X.
\end{align*}
Note that this matrix estimator $\X(\I_S\X)^+$
is closely connected to linear regression: 
It can be used to transform the response vector $\y$
into the prediction vector $\ybh(S)$ of subsampled least squares
solution $\of{\w^*}S$ as follows: 
$$\ybh(S) = \X\underbrace{(\I_S\X)^+\y}_{\w^*(S)}.$$
In this case, volume sampling once again provides a covariance-type 
matrix expectation formula. 

\begin{theorem}
\label{t:eprojs}
Let $\X\in\R^{n\times d}$ be a full rank matrix. If matrix $\X$ is in
general position and $S\subseteq\{1..n\}$ is sampled according to size
$d$ volume sampling, then
\begin{align*}
\E[\!\!\!\!\!\underbrace{(\X(\I_S\X)^{+})^2}_{(\I_S\X)^{+\top}\X^\top\X(\I_S\X)^+}\!\!\!\!]
  - \P_\X = d\, (\I - \P_\X).
\end{align*}
If $\X$ is not in general position, then the matrix equality ``$=$''
is replaced by the  positive-definite inequality ``$\preceq$''.
\end{theorem}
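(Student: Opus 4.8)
The plan is to deduce this matrix identity from the scalar loss bound (Theorem~\ref{t:loss}) by a quadratic‑form argument. Write $\M_S\defeq\X(\I_S\X)^{+}$, so that, per the underbrace, the matrix in the statement is $\M_S^\top\M_S=(\I_S\X)^{+\top}\X^\top\X(\I_S\X)^{+}$; this is symmetric (indeed positive semidefinite) for every $S$, hence so is $\E[\M_S^\top\M_S]$. Since a symmetric matrix is uniquely determined by its quadratic form, and since $\y^\top\A\y\le\y^\top\B\y$ for all $\y$ with $\A,\B$ symmetric already forces $\A\preceq\B$, it is enough to evaluate $\y^\top\E[\M_S^\top\M_S]\y$ for an arbitrary $\y\in\R^n$ and match it against $\y^\top\big(\P_\X+d(\I-\P_\X)\big)\y$.

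Here is how I would carry that out. First note that $\M_S\y=\X(\I_S\X)^{+}\y=\X\of{\w^*}{S}=\ybh(S)$ is precisely the prediction vector of the subsampled least‑squares solution, so $(\M_S-\I)\y=\X\of{\w^*}{S}-\y$ and $\|(\M_S-\I)\y\|^2=L(\of{\w^*}{S})$. Expanding $(\M_S-\I)^\top(\M_S-\I)=\M_S^\top\M_S-\M_S^\top-\M_S+\I$, taking expectations over size $d$ volume sampling, and using $\E[\M_S]=\X\,\E[(\I_S\X)^{+}]=\X\X^{+}=\P_\X$ (Theorem~\ref{t:einv}, pulling out the nonrandom factor $\X$; note $\P_\X^\top=\P_\X$), I get
\[
\E\big[L(\of{\w^*}{S})\big]=\y^\top\big(\E[\M_S^\top\M_S]-2\P_\X+\I\big)\y .
\]
On the other hand Theorem~\ref{t:loss} gives $\E[L(\of{\w^*}{S})]=(d+1)L(\w^*)$, and $L(\w^*)=\|\X\w^*-\y\|^2=\|(\I-\P_\X)\y\|^2=\y^\top(\I-\P_\X)\y$ because $\P_\X$ is the orthogonal projector onto $\Span(\X)$. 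Equating the two quadratic forms for all $\y$ and invoking symmetry gives $\E[\M_S^\top\M_S]-2\P_\X+\I=(d+1)(\I-\P_\X)$, hence
\[
\E[\M_S^\top\M_S]=(d+1)(\I-\P_\X)+2\P_\X-\I=\P_\X+d(\I-\P_\X),
\]
which is the claim. When $\X$ is not in general position, $\E[\M_S]=\P_\X$ still holds (Theorem~\ref{t:einv} needs no general‑position hypothesis), while Theorem~\ref{t:loss} only yields $\E[L(\of{\w^*}{S})]\le(d+1)L(\w^*)$; the displayed equality then weakens to a $\le$ for all $\y$, which by symmetry upgrades to $\E[\M_S^\top\M_S]\preceq\P_\X+d(\I-\P_\X)$.

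The heavy lifting therefore sits in Theorem~\ref{t:loss}, not here: with it in hand the reduction above is essentially mechanical. If one instead wanted a self‑contained proof of the matrix formula directly through the reverse‑iterative machinery of Lemma~\ref{l:key}, the obstacle would be that the obvious choice $\fof{\F}{S}=\M_S^\top\M_S$ fails the recursion $\fof{\F}{S}=\sum_{i\in S}P(\Sm|S)\,\fof{\F}{\Sm}$: at the root $\M_{\{1..n\}}^\top\M_{\{1..n\}}=\P_\X$, not $\P_\X+d(\I-\P_\X)$. One would have to add a level‑dependent correction $\fof{\F}{S}=\M_S^\top\M_S+\mathrm{corr}_{|S|}$ that vanishes at level $d$ and equals $d(\I-\P_\X)$ at the root; and since $\M_S^\top\M_S$ is supported on the $S\times S$ block while $\I-\P_\X$ is not, this correction cannot be a mere scalar multiple of $\I-\P_\X$, so guessing its correct form (and then checking the recursion through a Sherman--Morrison rank‑one update of $(\X_S^\top\X_S)^{-1}$ together with Sylvester's identity) is the main technical hurdle. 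A cleaner alternative is to route through the augmented matrix $[\X\,|\,\y]$ with size $d+1$ volume sampling, in which the per‑set loss surfaces via a Schur‑complement / Cauchy--Binet argument; establishing the loss formula that way and then applying the quadratic‑form reduction above finishes the proof.
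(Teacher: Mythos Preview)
Your proposal is correct and follows essentially the same route as the paper: both proofs lift Theorem~\ref{t:loss} to a matrix identity by writing $L(\of{\w^*}{S})=\y^\top(\I-\M_S)^\top(\I-\M_S)\y$ and $L(\w^*)=\y^\top(\I-\P_\X)\y$, taking expectations, invoking $\E[\M_S]=\P_\X$ from Theorem~\ref{t:einv}, and then using that equal (respectively bounded) quadratic forms of symmetric matrices force matrix equality (respectively $\preceq$). Your added remarks on why a direct Lemma~\ref{l:key} approach is awkward are accurate and match the paper's own comment that this formula's proof does not rely on that methodology.
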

Note that this third expectation formula is limited to sample size $s=d$.
It is a direct consequence of Theorem \ref{t:loss} given in the next section
which relates the expected loss of a subsampled least squares estimator
to the loss of the optimum least squares estimator. 
Unlike the first two formulas given in theorems \ref{t:einv} and \ref{t:einvs},
its proof does not rely on the methodology of Lemma \ref{l:key}, i.e., on 
showing that the expectations at all levels of a certain
DAG associated with the sampling process are the same.
We defer the proof of this third expectation formula to 
the end of Section \ref{sec:proof-loss}. No extension of this
third formula to sample size $s>d$ is known. 

\begin{proofof}{Theorem}{\ref{t:einv}}
We apply Lemma \ref{l:key} with $\fof{\F}S= (\I_S\X)^+$.
It suffices to show $\fof{\F}S=\sum_{i\in S} P(\Sm|S)\fof{\F}\Sm$ 
for $P(\Sm|S)=\frac{1-\x_i^\top(\X_S^\top\X_S)^{-1}\x_i}{s-d}$, i.e.:
$$(\I_S\X)^+
= \sum_{i\in S} \frac{1-\x_i^\top(\X_S^\top\X_S)^{-1}\x_i}{s-d}\!\!
\underbrace{(\I_\Sm\X)^+}
_{(\X_\Sm^\top\X_\Sm)^{-1}(\I_\Sm\X)^\top }\hspace{-0.9cm}.$$
We first apply Sherman-Morrison to
$(\X_\Sm^\top\X_\Sm)^{-1}=
(\X_S^\top\X_S-\x_i\x_i^\top)^{-1}$ on the r.h.s. of the above:
$$\sum_i \frac{1-\x_i^\top(\X_S^\top\X_S)^{-1}\x_i}  
               {s-d} \quad
\left((\X_S^\top\X_S)^{-1} +\frac{(\X_S^\top\X_S)^{-1} \x_i\x_i^\top(\X_S^\top\X_S)^{-1}} {1-\x_i^\top(\X_S^\top\X_S)^{-1}\x_i}
	      \right)             ((\I_S\X)^\top-\x_i\e_i^\top)
.$$
Next we expand the last two factors into 4 terms.
The expectation of the first
$(\X_S^\top\X_S)^{-1}(\I_S\X)^\top$ is $(\I_S\X)^+$
(which is the l.h.s.) and the expectations of the remaining three terms times $s-d$ 
sum to 0:
\begin{align*}
&-\sum_{i\in S} (1-\x_i^\top(\X_S^\top\X_S)^{-1}\x_i)\, (\X_S^\top\X_S)^{-1}\x_i\e_i^\top
+\cancel{(\X_S^\top\X_S)^{-1}} \cancel{\sum_{i\in S} \x_i\x_i^\top} (\X_S^\top\X_S)^{-1} (\I_S\X)^\top
\\&\qquad 
-\sum_{i\in S}(\X_S^\top\X_S)^{-1} \x_i\;(\x_i^\top(\X_S^\top\X_S)^{-1}\x_i)\e_i^\top
= 0.
\end{align*}
\vspace{-1.3cm}

\end{proofof}\\
In Appendix \ref{sec:alternate-proof} 
we give an alternate proof using a derivative argument.

\begin{proofof}{Theorem}{\ref{t:einvs}}
Choose $\fof{\F}S= \frac{s-d+1}{n-d+1} (\X_S^\top\X_S)^{-1}$.
By Lemma \ref{l:key} it suffices to 
show $\fof{\F}S=\sum_{i\in S} P(\Sm|S)\fof{\F}\Sm$ for volume sampling:
$$\frac{s-d+1}{\cancel{n-d+1}} (\X_S^\top\X_S)^{-1}
= \sum_{i\in S} \frac{1-\x_i^\top(\X_S^\top\X_S)^{-1}\x_i}{\cancel{s-d}}
  \frac{\cancel{s-d}}{\cancel{n-d+1}} (\X_\Sm^\top\X_\Sm)^{-1}.
$$
To show this we apply Sherman-Morrison to 
$(\X_\Sm^\top\X_\Sm)^{-1}$ on the r.h.s.: 
\begin{align*}
&\sum_{i\in S} (1-\x_i^\top(\X_S^\top\X_S)^{-1}\x_i)
\left((\X_S^\top\X_S)^{-1} +\frac{(\X_S^\top\X_S)^{-1} \x_i\x_i^\top(\X_S^\top\X_S)^{-1}}
{1-\x_i^\top(\X_S^\top\X_S)^{-1}\x_i}\right)
\\&\Blue{=} \,(s-d) (\X_S^\top\X_S)^{-1}
       +  \cancel{(\X_S^\top\X_S)^{-1}} \cancel{\sum_{i\in S}
\x_i\x_i^\top} (\X_S^\top\X_S)^{-1}
=(s-d+1)\;(\X_S^\top\X_S)^{-1}. 
\end{align*}
If some denominators $1-\x_i^\top(\X_S^\top\X_S)^{-1}\x_i$ are zero, then we
only sum over $i$ for which the denominators are
positive. In this case the above matrix equality becomes a
positive-definite inequality $\Blue{\preceq}$.
\end{proofof}

\section{Linear regression with smallest number of responses}
\label{sec:linear-regression}
Our main motivation for studying volume sampling came from asking
the following simple question. Suppose we want
to solve a $d$-dimensional linear regression problem with
an input matrix $\X$ of $n$ rows in $\R^d$ and a
response vector $\y\in\R^n$, i.e. find
$\w\in\R^d$ that minimizes the least squares loss $\|\X\w-\y\|^2$ 
on all $n$ rows. We use $L(\w)$ to denote this
loss. The optimal weight vector minimizes $L(\w)$, i.e.
\vspace{-2mm}
\[\w^*\defeq \argmin_{\w\in\R^d}L(\w) =\X^+\y.\]
Computing it requires access to the input matrix $\X$ and
the response vector $\y$. Assume we are given $\X$
but the access to response vector $\y$ is restricted. 
We are allowed to pick a random subset $S\subseteq\{1..n\}$ 
of fixed size $s$ for which the responses $\y_S$ for the submatrix $\X_S$ are
revealed to us, and then must produce a weight vector
$\w(\X,S,\y_S)\in \R^d$ from a subset of row indices $S$
of the input matrix $\X$ and the corresponding responses $\y_S$.
Our goal in this paper 
is to find a distribution on the subsets $S$ of size $s$
and a {\em weight function} $\w(\X,S,\y_S)$ s.t.%
\footnote{Since the learner is given $\X$, it is natural
to define the optimal multiplicative constant specialized
for each $\X$:
$c_{\X,s}=\min_c\min_{P(\cdot),\w(\cdot)}\max_\y\, 
\E_P\,[L(\w(\X,S,\y_S))] \le  (1+c)\, L(\w^*)$,
where the domain for distribution $P(\cdot)$ and weight 
function $\w(\cdot)$ are sets of size $s$. Showing specialized bounds
for $c_{\X,s}$ is left for future research.
}
$$\forall\, (\X,\y)\in \R^{n\times d}\times \R^{n\times
1}:\;\; \E\,[L(\w(\X,S,\y_S))] \le\Magenta{ (1+c)}\, L(\w^*),
$$
where $c$ must be a fixed constant (that is independent of $\X$ and
$\y$).
Throughout the paper we use the one argument shorthand $\w(S)$ 
for the weight function $\w(\X,S,\y_S)$.
We assume that attaining response values is expensive and ask the question:
What is the smallest number of responses (i.e. smallest size of $S$) 
for which such a multiplicative bound is possible?
We will use volume sampling to show that attaining $d$
response values is sufficient and show that less than $d$
responses is not.

\begin{wrapfigure}{r}{0.45\textwidth}
\vspace{-4.5mm}
\begin{tikzpicture}[font=\small,scale=1,pin distance=1.6mm]
    \begin{axis}[hide axis, xmin=-2.35,xmax=1.35,ymin=-2.2,ymax = 5.1]
        \addplot [domain=-2.07:1.1,samples=250, ultra thick, blue] 
	{x^2} node [pos=0.3, xshift=-.5cm] {$L(\cdot)$};
        \addplot [domain=-2.1:1.1,samples=250, ultra thick, red ] {2-x};
	\addplot[mark=none, ultra thick, green] coordinates {(-2.15,-1) (1.15,-1)};
        \addplot[mark=*] coordinates {(0,0)} node[pin=-20:{$L(\w^*)$}]{};
        \addplot[mark=*] coordinates {(0,2)} node[pin=90:{$\E[L(\of{\w^*}S)]$}]{};
        \addplot[mark=*] coordinates {(-2,4)} node[pin=90:{$\,\,\,L(\of{\w^*}{S_i})$}]{};
        \addplot[mark=*] coordinates {(1,1)} node[pin=90:{$L(\of{\w^*}{S_j})\,\,\,$}]{};
        \addplot[mark=*] coordinates {(-2,-1)} node[pin=-90:{$\of{\w^*}{S_i}$}]{};
        \addplot[mark=*] coordinates {(1,-1)} node[pin=-90:{$\of{\w^*}{S_j}$}]{};
        \addplot[mark=*] coordinates {(0,-1)} node[pin=-90:{$\w^*=\E[\of{\w^*}S]$}]{};
	\draw [decorate,decoration={brace,amplitude=4.5pt},xshift=-2.5pt,yshift=0pt]
	(0,0) -- (0,2) node [black,midway,xshift=-.8cm] {$d\,L(\w^*\!)$};
    \end{axis}
    \end{tikzpicture}
\vspace{-6.9mm}
\caption{Unbiased estimator $\of{\w^*}S$ in expectation suffers loss
  $(d+1)\,L(\w^*)$.}
\label{fig:jensen}
\vspace{-3mm}
\end{wrapfigure}
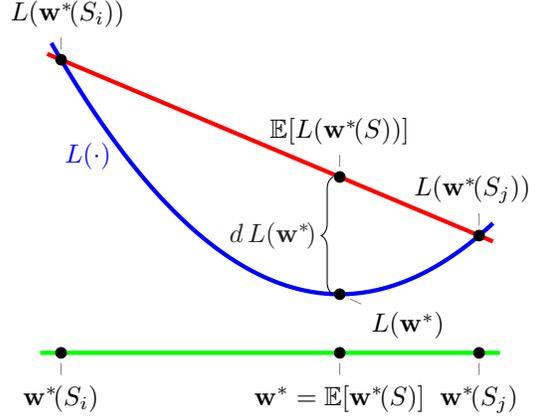

Before we state our main upper bound based on volume
sampling, we make the following key observation:
If for the subproblem $(\X_S,\y_S)$ there is a weight
vector $\w(S)$ that has loss zero, then the algorithm has
to predict with such a consistent weight vector. This is because in
that case the responses $\y_S$ can be extended to a
response vector $\y$ for all of $\X$ s.t. $L(\w^*)=0$. 
Thus since we aim for a
multiplicative loss bound, we force the algorithm to
predict with the optimum solution $\w^*\!(S)\defeq(\X_S)^+\y_S$
whenever the subproblem $(\X_S,\y_S)$ has loss 0.
In particular, when $|S|= d$ and $\X_S$ has full rank, 
then there is a unique consistent solution $\w^*\!(S)$ for the subproblem 
and the learner must use the weight function $\w(S)=\w^*\!(S)$.

\begin{theorem}
\label{t:loss}
If the input matrix $\X\in\R^{n\times d}$ is in general position, 
then for any response vector $\y\in \R^n$, the expected
square loss (on all $n$ rows of $\X$) of the optimal solution
$\of{\w^*}S$ for the subproblem 
$(\X_S,\y_S)$, with the $d$-element set $S$ obtained from 
volume sampling, is given by
\begin{align*}
\E[L(\of{\w^*}S)] =(d+1)\; L(\w^*).
\end{align*}
If $\X$ is not in general position, then the expected loss is
upper-bounded by $(d+1)\; L(\w^*)$.
\end{theorem}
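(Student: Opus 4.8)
The plan is to reduce the claim to the (ordinary and generalized) Cauchy--Binet formula via a Cramer-type determinantal identity for the residuals of the subsampled solution. Since $\X$ is in general position and $|S|=d$, the matrix $\X_S$ is square and invertible, so $\of{\w^*}S=\X_S^{-1}\y_S$ and
$$L(\of{\w^*}S)=\sum_{j=1}^n(\x_j^\top\X_S^{-1}\y_S-y_j)^2=\sum_{j\notin S}(y_j-\x_j^\top\X_S^{-1}\y_S)^2,$$
where the terms with $j\in S$ drop out because $\X_S\of{\w^*}S=\y_S$. First I would expand, for each $j\notin S$, the $(d+1)\times(d+1)$ determinant of the augmented matrix $[\X\;\y]\in\R^{n\times(d+1)}$ restricted to rows $S\cup\{j\}$ using the Schur-complement (block) determinant formula:
$$\det\begin{pmatrix}\X_S&\y_S\\ \x_j^\top&y_j\end{pmatrix}=\det(\X_S)\,(y_j-\x_j^\top\X_S^{-1}\y_S).$$
Squaring and dividing by $\det(\X_S^\top\X_S)=\det(\X_S)^2$ turns each residual into a ratio of minors, giving
$$\det(\X_S^\top\X_S)\,L(\of{\w^*}S)=\sum_{j\notin S}\det\big([\X\;\y]_{S\cup\{j\}}\big)^2.$$

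Next I would sum this identity over all $S$ of size $d$ and regroup the pairs $(S,j)$ with $j\notin S$ according to the $(d+1)$-element set $T=S\cup\{j\}$; each such $T$ arises from exactly $d+1$ pairs, so
$$\sum_{|S|=d}\det(\X_S^\top\X_S)\,L(\of{\w^*}S)=(d+1)\sum_{|T|=d+1}\det\big([\X\;\y]_T\big)^2=(d+1)\det\big([\X\;\y]^\top[\X\;\y]\big)$$
by the ordinary Cauchy--Binet formula. Dividing by $\sum_{|S|=d}\det(\X_S^\top\X_S)=\det(\X^\top\X)$ (Cauchy--Binet again, i.e. \eqref{e:cbgen} with $s=d$) yields $\E[L(\of{\w^*}S)]=(d+1)\,\det([\X\;\y]^\top[\X\;\y])/\det(\X^\top\X)$.

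It then remains to evaluate the determinant ratio. Writing $[\X\;\y]^\top[\X\;\y]=\left(\begin{smallmatrix}\X^\top\X&\X^\top\y\\ \y^\top\X&\y^\top\y\end{smallmatrix}\right)$ and taking the Schur complement of the $\X^\top\X$ block gives $\det([\X\;\y]^\top[\X\;\y])=\det(\X^\top\X)\,(\y^\top\y-\y^\top\X(\X^\top\X)^{-1}\X^\top\y)=\det(\X^\top\X)\,\|\y-\P_\X\y\|^2=\det(\X^\top\X)\,L(\w^*)$, which proves the equality. If $\X$ is not in general position, volume sampling is supported only on the sets $S$ with $\det(\X_S^\top\X_S)>0$, while $\sum_{|S|=d}\det(\X_S^\top\X_S)=\det(\X^\top\X)$ still holds since zero-volume sets contribute nothing; restricting the sum above to these $S$ only discards the nonnegative terms $\det([\X\;\y]_T)^2$ coming from $T$ with a singular $d$-subset, so the chain of equalities weakens to ``$\le$'', giving $\E[L(\of{\w^*}S)]\le(d+1)L(\w^*)$.

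The one genuinely delicate step is the determinantal identity: recognizing that the residual of the size-$d$ subsampled least-squares fit at an out-of-sample row $j$ is exactly a ratio of a $(d+1)\times(d+1)$ minor and a $d\times d$ minor of the augmented matrix $[\X\;\y]$, so that after multiplying through by $\det(\X_S^\top\X_S)$ the summands become plain squared minors amenable to Cauchy--Binet. Everything else — the combinatorial regrouping producing the factor $d+1$, the observation that the $j\in S$ terms vanish, and the final Schur-complement evaluation — is routine linear algebra. Note that this argument does not fit the DAG template of Lemma \ref{l:key}; it is a direct counting proof, which is consistent with the remark in the text that no $s>d$ extension of the resulting projection formula (Theorem \ref{t:eprojs}) is known.
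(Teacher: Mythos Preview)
Your proof is correct and follows essentially the same approach as the paper: both rewrite the expected loss as a double sum over pairs $(S,j)$ with $j\notin S$, regroup by the $(d\!+\!1)$-set $T=S\cup\{j\}$ to extract the factor $d+1$, and finish via Cauchy--Binet together with the identity $\det([\X\;\y]^\top[\X\;\y])=\det(\X^\top\X)\,L(\w^*)$. The only cosmetic difference is that you justify the key step $\det(\X_S^\top\X_S)\,\ell_j(\of{\w^*}S)=\det([\X\;\y]_T^\top[\X\;\y]_T)$ and the final determinant ratio via Schur complements, whereas the paper derives the same two facts geometrically as ``base $\times$ height'' volume formulas (equation~\eqref{eq:base-x-height} and Corollary~\ref{cor:geometry}).
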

There are no range restrictions on the $n$ points and
response values in this bound.
Also, as discussed in the introduction, 
this bound is already non-obvious for dimension 1, when
the multiplicative factor is 2 (See Figure \ref{f:one} for a visualization). 
Note that if there is a bias term in dimension 1, then the factor becomes 3. 

In dimension $d$, it is instructive to look at the case when
the square loss of the optimum solution is zero, 
i.e. there is a weight vector $\w^*\in \R^d$ s.t.  $\X\w^*=\y$. 
In this case the response values of any $d$ linearly independent rows
of $\X$ determine the optimum solution and 
the multiplicative loss formula of the theorem clearly holds.
The formula specifies how noise-free case generalizes
gracefully to the noisy case in that for volume sampling,
the expected square loss of the solution obtained from $d$ row 
response pairs is always by a factor of at most $d+1$ larger than 
the square loss of the optimum solution.
Moreover, since $\E[\of{\w^*}S]=\w^*$ and the loss function $L(\cdot)$ is
convex, we have by Jensen's inequality that
$$\E\big[L(\of{\w^*}S)\big]\ge L\big(\E[\of{\w^*}S]\big)=L(\w^*).$$ 
The above theorem now states that the gap $\E[L(\of{\w^*}S)] - L(\w^*)$
in Jensen's inequality (which coincides with the ``regret'' of the
estimator) equals $d\, L(\w^*)$, when the
expectation is w.r.t. size $d$ volume sampling and $\X$ is in
general position (See Figure \ref{fig:jensen} for a schematic).
As we will show in Section \ref{sec:av}, 
this gap also equals the variance $\E[\|\X\of{\w^*}S
-\X\w^*\|^2]$ of the predictions since the estimator is
unbiased. In summary:
\vspace{-3mm}
$$\underbrace{\E\big[L(\of{\w^*}S)\big] -
L(\!\!\overbrace{\w^*}^{\E[\w^*\!(S)]}\!\!)}_{\text{regret}}
= \!\underbrace{d\, L(\w^*)}_{\text{gap in Jensen's}} \!
= \underbrace{\E\big[\|\X\of{\w^*}S
-\X\w^*\|^2\big]}_{\text{variance}}.$$

We now make a number of
observations and present some lower bounds that highlight
the upper bound of the above theorem. Then, in Section \ref{sec:proof-loss} we prove the
theorem and a matrix expectation formula implied by it.

\subsection*{When $\X$ is not in general position}
\label{sec:partially-degenerate}

The above theorem gives an equality 
for the expected loss of a volume-sampled solution.
However, this equality is only
guaranteed to hold when matrix $\X$ is in general position.
We give a minimal example problem where the matrix
$\X$ is not in general position and the equality
of Theorem \ref{t:loss} turns into a strict inequality.
This shows that for the equality, the general position assumption is necessary.
If we apply even an infinitesimal additive perturbation to
the matrix $\X$ of the example problem, 
then the resulting matrix $\X_{\epsilon}$ is in general position
and the equality holds.
Note that even though the optimum loss
$L(\w^*)$ does not change significantly under such a
perturbation, the expected sampling loss $\E[L(\of{\w^*}S)]$ has to
jump sufficiently to close the gap in the inequality.
In our minimal example problem, $n=3$ and $d=2$, and
\[
\X = \begin{pmatrix}
1 & 1  \\
1 & 1  \\
1 & 0  \end{pmatrix},\quad
\y = \begin{pmatrix}
1 \\
0 \\
0           \end{pmatrix}.
\]
We have three 2-element subsets to sample from:
$S_1=\{1,2\},\ S_2=\{2,3\},\ S_3=\{1,3\}.$
Notice that the first two rows of $\X$ are identical, which
means that the probability of sampling set $S_1$ is 0 in
the volume sampling process. The
other two subsets, $S_2$ and $S_3$, form identical submatrices
$\X_{S_2}=\X_{S_3}$. Therefore they are equally probable.
The optimal weight vectors for these sets are
$\of{\w^*}{S_2}=(0,0)^\top$ and  $\of{\w^*}{S_3} = (0,1)^\top$.
Also $\w^*=(0,\frac{1}{2})^\top$ and the expected loss is
bounded as:
\begin{align*}
\E[L(\of{\w^*}S)]  
=  \underbrace{%
   \frac{1}{2} \overbrace{L(\of{\w^*}{S_2})}^1 
 + \frac{1}{2} \overbrace{L(\of{\w^*}{S_3})}^1
              }_1
\;\;<\;\;
   \underbrace{\overbrace{(d+1)}^3\,\overbrace{L(\w^*)}^{1/2}}_{3/2}.
\end{align*}
Now consider a slightly perturbed input matrix 
\[
\X_\epsilon = \begin{pmatrix}
1 & 1+\epsilon  \\
1 & 1  \\
1 & 0  \end{pmatrix},
\]
where $\epsilon > 0$ is arbitrarily small 
(We keep the response vector $\y$ the same). 
Now, there is no $d\times d$ submatrix that is singular, so
the upper bound from Theorem \ref{t:loss} must be tight. 
The reason is that even though subset $S_1$ still
has very small probability, its loss is
very large,
so the expectation is significantly affected by this component, no matter
how small $\epsilon$ is. We see this directly in the
calculations. Let $\w^*$ and $\of{\w^*}{S_i}$ be the corresponding
solutions for the perturbed problem and its subproblems.
The volumes of the subproblems and their losses are:
\begin{align*}
\begin{array}{ll}
\det(\X_{S_1}^\top\X_{S_1}) = \epsilon^2 & L(\of{\w^*}{S_1})=\epsilon^{-2}\\
\det(\X_{S_2}^\top\X_{S_2}) = 1 & L(\of{\w^*}{S_2})=1\\
\det(\X_{S_3}^\top\X_{S_3}) = (1+\epsilon)^2 & L(\of{\w^*}{S_3})=(1+\epsilon)^{-2}
\end{array}
\qquad L(\w^*) = \frac{1}{2(1+\epsilon+\epsilon^2)}.
\end{align*}
Note that for each subproblem, the product of volume times loss is
equal to 1. Now the expected loss can be easily computed, and we can
see that the gap in the bound disappears (the denominator 
is the normalizing constant for volume sampling):
\begin{align*}
\E[L(\of{\w^*}S)]  &=
\frac{1+1+1}{\epsilon^2 + 1 + (1+\epsilon)^2} =
(d+1)\; L(\w^*).  
\end{align*}

\subsection{Lower-bounds} 

The factor $d+1$ in Theorem \ref{t:loss} cannot, in
general, be improved when selecting only $d$ responses: 
\begin{proposition}
\label{prop:optimal}
For any $d$, there exists a least squares problem $(\X,\y)$ with $d+1$
rows in $\R^d$ such that for every $d$-element index set
$S\subseteq\{1\,..\,d+\!1\}$, we have \[L(\of{\w^*}S) = (d+1)\;L(\w^*).\]
\end{proposition}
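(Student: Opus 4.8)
\ The plan is to exploit the fact that, for an instance with exactly $d+1$ rows, every size-$d$ subproblem has a unique interpolating solution whose residual vector is supported on the single omitted row. First I would note that since $\X$ has full rank $d$ and $d+1$ rows, $\ker(\X^\top)$ is one-dimensional; fix a spanning vector $\u\in\R^{d+1}$, so that the column space of $\X$ equals $\u^\perp$. The optimal residual $\y-\X\w^*$ is the orthogonal projection of $\y$ onto $\Span(\u)$, hence $L(\w^*)=(\u^\top\y)^2/\|\u\|^2$. Next, for $S=\{1..d+1\}\setminus\{i\}$ with $\X_S$ invertible, $\of{\w^*}S=\X_S^{-1}\y_S$ fits every row in $S$ exactly, so $\y-\X\of{\w^*}S=c_i\,\e_i$ for some scalar $c_i$; since $\X\of{\w^*}S$ lies in the column space of $\X$, pairing with $\u$ gives $c_i\,\u_i=\u^\top\y$, i.e. $L(\of{\w^*}S)=c_i^2=(\u^\top\y)^2/\u_i^2$. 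Combining, $L(\of{\w^*}S)=(\|\u\|^2/\u_i^2)\,L(\w^*)$, so the claimed identity holds for every $S$ exactly when all coordinates of $\u$ have equal magnitude, i.e. $\u\propto\one$, which yields ratio $\|\one\|^2=d+1$.

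It remains to exhibit a full-rank $\X\in\R^{(d+1)\times d}$ with $\one^\top\X=\zero$ all of whose $d\times d$ submatrices are invertible. I would take $\X$ to be the matrix whose first $d$ rows form $\I_d$ and whose last row is $-\one^\top$: each column then sums to $0$; deleting the last row leaves $\I_d$; and deleting a row $\e_k^\top$ with $k\le d$ still leaves $d$ rows spanning $\R^d$, since $\e_k^\top=-(-\one^\top)-\sum_{j\ne k}\e_j^\top$ is recoverable from them. Choosing $\y=\e_1$ gives $\u^\top\y=1\ne 0$, hence $L(\w^*)=1/(d+1)>0$ and $L(\of{\w^*}S)=1=(d+1)\,L(\w^*)$ for every $d$-element $S$, as required. (Geometrically this places the $d+1$ rows at the vertices of a simplex balanced at the origin, the natural maximally symmetric hard instance.)

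I expect the only delicate point to be the invertibility of all $d\times d$ submatrices of $\X$ — this is precisely what forces each subproblem residual to be an exact multiple of a coordinate vector and makes the clean formula $L(\of{\w^*}S)=(\u^\top\y)^2/\u_i^2$ valid; for the explicit matrix above it is the short check just indicated. Everything else is elementary linear algebra about the one-dimensional cokernel of $\X$.
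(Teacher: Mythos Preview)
Your proof is correct and takes a more structural route than the paper. The paper works concretely: it places the rows at the vertices of a simplex centered at the origin, takes the constant response $\y=\alpha\one$, observes directly that $\w^*=\zero$ with loss $(d+1)\alpha^2$, and then uses the centering identity $\x_i=-d\,\bar{\x}_{-i}$ to compute the prediction on the omitted point as $-d\alpha$, giving single-point loss $(d+1)^2\alpha^2$. You instead isolate the one-dimensional cokernel $\Span(\u)$ of any full-rank $(d+1)\times d$ matrix, derive the closed forms $L(\w^*)=(\u^\top\y)^2/\|\u\|^2$ and $L(\of{\w^*}{S})=(\u^\top\y)^2/u_i^2$, and read off that the ratio equals $d+1$ for every $S$ exactly when all coordinates of $\u$ have equal magnitude. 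This buys you a characterization the paper's direct computation does not surface: tightness holds precisely when the cokernel vector has equal-magnitude entries, equivalently when all leverage scores equal $d/(d+1)$ (since $1-l_i=u_i^2/\|\u\|^2$). Your concrete instance $\X=[\I_d;-\one^\top]$, $\y=\e_1$ is a particular centered simplex; the paper's regular simplex with constant response is another, and your cokernel argument explains uniformly why any such choice works.
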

\begin{proof}
Choose the input vectors $\x_i$ (and rows $\x_i^\top$) 
as the $d+1$ corners of the
simplex in $\R^d$ centered at the origin and
choose all $d+1$ responses as the same non-zero value $\alpha$.
For any $\alpha$, the optimal solution $\w^*$
will be the all-zeros vector with loss
\[L(\w^*) = (d+1)\;\alpha^2. \]
On the other hand, taking any size $d$ subset of indices
$S\subseteq\{1\,..\,d+\!1\}$, the subproblem solution
$\of{\w^*}S$ will only produce loss on the left out input
vector $\x_i$, indexed with $i\not\in S$.
To obtain the prediction on $x_i$, we use a simple geometric
argument. Observe that since the simplex is centered, we can write the
origin of $\R^d$ in terms of the corners of the simplex as
\begin{align*}
\zero = \sum_k\x_k = \x_i +
  d\,\bar{\x}_{-i},\quad\text{ where } \bar{\x}_{-i}\defeq\frac{1}{d}\sum_{k\neq i}\x_k.
\end{align*}
Thus, the left out input vector $\x_i$ equals $-d\,\bar{\x}_{-i}$. 
The prediction of $\of{\w^*}S$ on this vector is
\begin{align*}
\yh_{i} 
= \x_{i}^\top\of{\w^*}S 
= -d \bigg(\frac{1}{d} \sum_{k\neq i}\x_k^\top\bigg)\of{\w^*}S 
= -\sum_{k\neq i}\x_k^\top\of{\w^*}S
= -d\alpha.
\end{align*}
It follows that the loss of $\of{\w^*}S$ equals
\begin{align*}
 L( \of{\w^*}S ) &=(\yh_{i} - y_{i})^2 =
(-d\alpha - \alpha)^2 = (d+1)^2\alpha^2 = (d+1)\,L(\w^*).
\\[-1.3cm]
\end{align*}
\end{proof}

Moreover, it is easy to show that no {\em deterministic}
algorithm for selecting $d$ rows (without knowing the responses) 
can guarantee a multiplicative loss bound with a factor less than $n/d$
\citep{coresets-regression}. For the sake of completeness, we show this
here for $d=1$:
\begin{proposition}
\label{prop:lb-deterministic}
For any $n\times 1$ input matrix $\X$ of all 1's and
any deterministic algorithm that chooses some singleton set $S=\{i\}$,
there is a response vector $\y$
for which the loss of the subproblem 
and the optimal loss are related as follows:
\[L(\of{\w^*}S)=n\, L(\w^*).\]
\end{proposition}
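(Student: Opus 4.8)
The plan is to exploit that a \emph{deterministic} learner must commit to the query index before seeing any responses: the adversary may therefore choose $\y$ \emph{after} learning which row will be revealed. Write $i=i(\X)$ for the singleton index $S=\{i\}$ that the algorithm outputs on the all-ones input $\X=\one$. Since $\X=\one$, the full optimum is $\w^*=\X^+\y=\tfrac1n\sum_{j=1}^n y_j=\bar y$, while $\X_S=[1]$ is full rank, so the subproblem solution is simply $\of{\w^*}S=(\X_S)^+\y_S=y_i$ (this is also the unique consistent predictor the learner is forced to use, by the discussion preceding Theorem~\ref{t:loss}).

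First I would take the adversarial response vector to be $\y=\e_i$, the $i$-th standard basis vector, so that all of the ``signal'' sits on precisely the one row the learner gets to see and the subproblem looks as unlike the full problem as possible. Then $\bar y=\tfrac1n$ and
\[
L(\w^*)=\Bigl(1-\tfrac1n\Bigr)^2+(n-1)\Bigl(\tfrac1n\Bigr)^2=\frac{(n-1)^2+(n-1)}{n^2}=\frac{n-1}{n},
\]
whereas $\of{\w^*}S=y_i=1$ predicts perfectly on row $i$ and incurs unit loss on each of the other $n-1$ rows, so $L(\of{\w^*}S)=n-1=n\,L(\w^*)$, which is the claimed identity.

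There is no genuinely hard step here; the only points that need care are the modeling observation that the index is chosen with no access to $\y$ (so the adversary can react to it) and the identification $(\X_S)^+\y_S=y_i$. The same computation goes through verbatim if $\y$ is replaced by any vector that is constant on the coordinates $j\neq i$ and takes a different value at $i$ (rescale and shift). The general bound of $n/d$ for deterministic selection of $d$ rows follows by placing an analogous hard instance on $d$ suitably chosen rows and is established in \citep{coresets-regression}; we only spell out the $d=1$ case here.
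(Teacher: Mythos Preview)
Your proof is correct and takes essentially the same approach as the paper: pick a response vector that is constant on all coordinates except the queried index $i$, then verify $L(\of{\w^*}S)=n-1$ and $L(\w^*)=(n-1)/n$. The only difference is cosmetic---you use $\y=\e_i$ while the paper uses $\y=\one-\e_i$---and these are related by exactly the shift/flip symmetry you yourself note at the end.
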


\noindent {\bf Proof} \ 
If the response vector $\y$ is the vector of $n$ 1's 
except for a single 0 at index $i$, then we have
\begin{align*}
\underbrace{L(\overbrace{\of{\w^*}{\{i\}}}^0}_{n-1})
= n\, \underbrace{L(\overbrace{\w^*}^{\frac{n-1}{n}})}_{\frac{n-1}{n}}.
\\[-1.9cm]
\end{align*}
\hfill\BlackBox

\vspace{1cm}
Note that for the 1-dimensional example used in the proof, volume sampling
would pick the set $S$ uniformly. For this distribution, 
the multiplicative factor drops from $n$ downto 2, that is
$\E[L(\of{\w^*}S)] =\frac{1}{n} (n-1)+\frac{n-1}{n} 1= 2\; L(\w^*).$

\subsection*{The importance of joint sampling}

Three properties of volume sampling play a crucial role in achieving a
multiplicative loss bound:
\begin{enumerate}
\item \textit{Randomness}: No deterministic algorithm
 guarantees such a bound (see Proposition \ref{prop:lb-deterministic}).
\item \textit{The chosen submatrices must have full rank}:
Choosing any rank deficient submatrix with positive
probability, does not allow for a multiplicative bound
(see Propositions \ref{prop:rankdef} and \ref{prop:d-1}).
\item \textit{Jointness}: No i.i.d. sampling procedure
can achieve a multiplicative loss bound with $O(d)$
responses (see Corollary \ref{cor:lb-iid}).
\end{enumerate}

By jointly selecting subset $S$, volume sampling ensures that the
corresponding input vectors $\x_i$ are well spread out in the input space
$\R^d$. In particular, volume sampling does not
put any probability mass on sets $S$ such that the rank of submatrix
$\X_S$ is less than $d$. Intuitively, selecting rank deficient
row subsets should not be effective, since such a
choice leads to an under-determined least squares problem. We make
this simple statement more precise by showing that any randomized
algorithm, that with positive probability selects a rank deficient
row subset, cannot achieve a multiplicative loss bound.
Intuitively if the algorithm picks a rank deficient subset
then it is not clear how it should select the
weight vector $\w(S)$ given input matrix $\X$, subset $S$
and responses $\y_S$. 
We reasoned before that $\w(S)$ must have loss 0 on the
subproblem $(\X_S,\y_S)$. However if $\rank(\X_S)<d$, then
the choice of the weight vector $\w(S)$ with loss 0 is not unique
and this causes positive loss for some response vector $\y$.

\begin{proposition}
\label{prop:rankdef}
If for any input matrix $\X$, the algorithm
samples a rank deficient subset $S$ of rows with positive
probability, then the expected loss of the algorithm cannot
be bounded by a constant times the optimum loss for all
response vectors $\y$.
\end{proposition}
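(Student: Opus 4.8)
The plan is to exhibit, for an input matrix $\X$ on which the algorithm samples some rank-deficient row subset $S_0$ (so $\rank(\X_{S_0})<d$) with probability $p>0$, a family of response vectors for which the optimal loss is $0$ while the algorithm's expected loss is a fixed positive number. Since the desired multiplicative guarantee is required to hold for \emph{every} pair $(\X,\y)$, producing one such bad $\y$ for this single $\X$ already rules out any finite constant $c$.

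First I would exploit the rank deficiency: because $\X_{S_0}$ has rank less than $d$ while $\X$ itself has full column rank $d$, there is a vector $\v\neq\zero$ with $\X_{S_0}\v=\zero$ and $\X\v\neq\zero$. For $t\in\R$ put $\y_t\defeq t\,\X\v$. On the one hand $\y_t$ lies in the column span of $\X$, so $L(\w^*)=0$ with $\w^*=t\v$. On the other hand the responses revealed on the sampled block are $(\y_t)_{S_0}=t\,\X_{S_0}\v=\zero$, \emph{independently of $t$}: conditioned on drawing $S_0$, the algorithm is handed the identical subproblem $(\X_{S_0},\zero)$ for every $\y_t$, and therefore returns the same weight vector $\w_0$ (or, if it uses internal randomness, the same distribution over weight vectors), in either case not depending on $t$.

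Next I would lower-bound the expected loss. The learner's sampling distribution is chosen from $\X$ alone, so $P(S=S_0)=p$ does not depend on $\y$, and restricting to that event gives, for the response vector $\y_t$,
\[
\E\!\left[L(\w(S))\right]\ \ge\ p\cdot\E\!\left[\|\X\w_0-t\,\X\v\|^2\right]\ =\ p\cdot\E\!\left[\|\X(\w_0-t\v)\|^2\right].
\]
The right-hand side is a nonnegative convex function of $t$, and it vanishes only when $\X(\w_0-t\v)=\zero$ almost surely, i.e.\ only when $\w_0=t\v$ almost surely (using that $\X$ has trivial kernel); a fixed (random) vector can coincide with $t\v$ for at most one value of $t$. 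Hence for all but at most one choice of $t$ we get $\E[L(\w(S))]>0=(1+c)\,L(\w^*)$ no matter how large $c$ is, which is the claim; for definiteness one may take $t=1$ whenever $\w_0\neq\v$.

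The one point that needs care — and the crux of the argument — is that the weight function is allowed to depend on the revealed responses $\y_{S_0}$, so the adversarial family must be kept \emph{constant on the block} $S_0$; this is exactly why $\y_t$ is taken along a kernel direction of $\X_{S_0}$, which pins $(\y_t)_{S_0}$ to $\zero$ and prevents the algorithm from adapting to $t$. Everything else — independence of $p$ from $\y$, full rank of $\X$ forcing $\X\v\neq\zero$, and isolating a value of $t$ on which $\w_0$ is wrong — is routine.
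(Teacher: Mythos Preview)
Your proof is correct and follows essentially the same strategy as the paper: fix a rank-deficient event $S_0$ with positive probability, set the revealed responses on $S_0$ to zero (you do this via a kernel direction $\v$ of $\X_{S_0}$), note that the algorithm's output $\w_0$ is then pinned down independently of the hidden responses, and finally choose a consistent $\y$ in the column span of $\X$ on which $\w_0$ is wrong, so that $L(\w^*)=0$ while the expected loss is strictly positive. The only cosmetic difference is that the paper picks a single row $\x_i\notin\text{row-span}(\X_{S_0})$ and directly builds $\w^*$ with $\x_i^\top\w^*=\x_i^\top\w(S_0)+Y$ (making the loss on that row exactly $Y^2$), whereas you parameterize by $t$ along the kernel direction and argue that at most one $t$ can match $\w_0$; your version has the small advantage of handling internal randomness in $\w(\cdot)$ cleanly, which the paper's construction glosses over.
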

Note that this means in particular that if $\X$ has rank $d$, then
sampling $d-1$ size subsets with positive probability does
not allow for a constant factor approximation.\\[-2mm]
\begin{proof}
Let $S$ be a rank deficient subset chosen with probability $P(S)>0$.
Since in our setup the bound has to hold for all response
vectors $\y$ we can imagine an adversary choosing a worst-case $\y$. 
This adversary gives all rows of $\X_S$ the response value zero.
Let $\w(S)$ be the plane produced by the algorithm
when choosing $S$ and receiving the responses 0 for $\X_S$.
Let $i\in\{1..n\}$ s.t. $\x_i^\top\not\in\text{row-span}(\X_{S})$
and let $\w^*$ be any weight vector that gives response
value 0 to all rows of $\X_S$ and response value 
$\x_i^\top \w(S) + Y$ to $\x_i$.
The adversary chooses $\y$ as $\X\w^*$, i.e. it gives all points $\x_j$ 
not indexed by $S$ and different from $\x_i$ the response
values $\x_j^\top\w^*$ as well.
Now $\w^*$ has total loss 0 but $\w(S)$ has loss $Y^2$ on
$\x_i$ and the algorithm's expected total loss is $\ge P(S)\,Y^2$.
\end{proof}

We now strengthen the above proposition in that whenever
the sample $S$ is rank deficient then the loss of
the optimum is zero while the loss of the algorithm is
positive. However note that this proposition is weaker than
the above in that it only holds for specific input matrices.

\begin{proposition}
\label{prop:d-1}
Let $d\leq n$ and let $\X$ be any input matrix 
of rank $d$ consisting of $n$ standard basis row vectors in $\R^d$.
Then for any randomized learning
algorithm that with probability $p$ selects a subset $S$ s.t. $\rank(\X_S)<d$
and any weight function $\w(\cdot)$, there is a response
vector $\y$, satisfying: 
\[L(\w^*) = 0,
\quad \text{ and }
\quad L(\w(S)) > 0
\quad\text{ with probability at least }p.\]
\end{proposition}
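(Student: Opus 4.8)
The plan is to exhibit a single response vector of the form $\y=\X\wbt$, $\wbt\in\R^d$ — for which $L(\w^*)=0$ is automatic, since $\X$ has full column rank and so $\w^*=\X^+\X\wbt=\wbt$, giving $L(\w^*)=\|\X\wbt-\y\|^2=0$ — that defeats \emph{every} rank-deficient choice of $S$ at once. Write the rows of $\X$ as $\e_{\sigma(1)}^\top,\dots,\e_{\sigma(n)}^\top$ for a surjection $\sigma\colon\{1..n\}\to\{1..d\}$ (surjective because $\rank(\X)=d$). Then $\rank(\X_S)=|\sigma(S)|$, so $\X_S$ is rank-deficient precisely when some coordinate $k\notin\sigma(S)$; for each such $S$ fix one missing coordinate $k(S)\notin\sigma(S)$.

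The key structural fact is that this missing coordinate is ``free''. Since the $k(S)$-th column of $\X_S$ vanishes, the revealed responses $\y_S=\X_S\wbt$ do not depend on $\wbt_{k(S)}$, and hence neither does the produced weight vector $\w(S)=\w(\X,S,\X_S\wbt)$; in particular its $k(S)$-th entry $[\w(S)]_{k(S)}$ is a function of $(\wbt_j)_{j\ne k(S)}$ alone. So for each rank-deficient $S$ the bad set $B_S:=\{\wbt\in\R^d:\ [\w(\X,S,\X_S\wbt)]_{k(S)}=\wbt_{k(S)}\}$ meets every line parallel to the $k(S)$-th coordinate axis in at most one point. There are only finitely many rank-deficient subsets, and a finite union of such ``axis-thin'' sets cannot cover $\R^d$: inside the grid $\{1,\dots,N\}^d$ each $B_S$ contains at most $N^{d-1}$ points, so the union misses a grid point as soon as $N$ exceeds the number of rank-deficient subsets. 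Pick any $\wbt\notin\bigcup_S B_S$ and set $\y:=\X\wbt$.

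Reading off the conclusion is then immediate: $L(\w^*)=0$ by the choice of $\y$, and whenever the algorithm draws a rank-deficient $S$ we have $[\w(S)]_{k(S)}\ne\wbt_{k(S)}$, so $\w(S)\ne\wbt$, whence $L(\w(S))=\|\X\w(S)-\X\wbt\|^2=\|\X(\w(S)-\wbt)\|^2>0$ because $\X$ has trivial kernel. Therefore $\Pr[L(\w(S))>0]\ge\Pr[\rank(\X_S)<d]=p$. Note that this argument never uses consistency of the weight function on the subproblem: an inconsistent $\w(S)$ already satisfies $\w(S)\ne\wbt$, and a consistent one is handled by the ``free coordinate'' observation. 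The one delicate point — and the reason a naive choice such as $\y=\zero$ fails — is that the adversary must commit to a single $\y$ before seeing which rank-deficient $S$ is drawn; since $\w(\cdot)$ is a completely arbitrary weight function it cannot be controlled on any low-dimensional family of candidate $\wbt$'s, and the ``free coordinate'' observation together with the elementary grid-counting argument is exactly what lets us dodge all of the finitely many bad sets simultaneously.
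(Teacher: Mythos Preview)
Your proof is correct and follows essentially the same approach as the paper: both restrict $\wbt$ to a finite grid and use a pigeonhole/counting argument exploiting that a rank-deficient $S$ misses at least one coordinate, so the revealed responses determine at most $N^{d-1}$ ``bad'' grid points per $S$. The paper phrases the count as bounding the image of the weight function over all pairs $(S,\y_S)$ with $S$ rank-deficient (taking $N=2^n$ so that $|\Hc|\cdot N^{d-1}<N^d$), whereas you phrase it geometrically via the axis-thin bad sets $B_S$ and a single free coordinate $k(S)$; these are two presentations of the same pigeonhole.
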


\begin{proof}
Let $Q=\{1,2,\ldots,2^n\}$. The adversarial response vector $\y$ is constructed 
by carefully selecting one of the weight vectors $\w^*\in Q^d$, 
and setting the response vector $\y$ to $\X\w^*$. This ensures that
$L(\w^*)=0$ and since $\X$ consists of standard basis row
vectors, the components of $\y$ lie in $Q$ as well. Note that if the learner does
not discover $\w^*$ exactly, it will incur positive loss.
Let $\Hc$ be the set of all rank deficient sets in $\X$, i.e.
those that lack at least one of the standard basis vectors:
\begin{align*}
\Hc = \{S\subseteq \{1..n\}\ :\ \rank(\X_S) < d\}.
\end{align*}
Suppose that given matrix $\X$, the learner uses weight function $\w(S,\y_S)$. 
(Note that for the sake of concreteness we stopped using the single argument
shorthand for the weight function during this proof.)
We will count the number of possible inputs
to this function, when $S$ is a rank deficient index set
of the rows of $\X$ and the response vector $\y_S$ is consistent with
some $\w^*\in Q^d$. For any fixed rank deficient set $S$,
let $t$ be the number of distinct basis vectors appearing
in $\X_S$. Clearly $t \le d-1$. Fix a
subset $T\subseteq S$ of size $t$ s.t. $\X_T$ contains all $t$ basis vectors
of $\X_S$ exactly once (Thus the basis vectors in $\X_{S\setminus T}$
are all duplicates). 
Since $\y\in Q^n$, the components of $\y_S$ also lie in $Q$ 
and $\y_S$ is determined by the responses of $\y_T$. Clearly there
are at most $|Q|^{d-1}$ choices for $\y_T$.
It follows that the number of possible
input pairs $(S,\y_S)$ for function $\w(\cdot,\cdot)$ under the above
restrictions can be bounded as 
 \begin{align*}
 \left|\left\{(S,\y_S)\ :\ [S\in \Hc] \text{ and }
   [\y_S=\X_S\w^*\text{ for } \w^*\in Q^d]\right\}\right|&\leq  
 \underbrace{|\Hc|}_{< 2^n}\ \ 
 \underbrace{\max_{S\in \Hc} |\{\X_S\w^*\ :\ \w^*\in Q^d\} |}_{\leq |Q|^{d-1}}\\
&< 2^n |Q|^{d-1} = |Q^d|.
  \end{align*}
So for every weight function $\w(\cdot,\cdot)$, 
there exists $\w^*\in Q^d$ that is not present in the set
$\{\w(S,\y_S):S\in\Hc\}$. Selecting $\y=\X\w^*$
for the adversarial response vector, we guarantee that the learner
picks the wrong solution for every rank deficient set $S$
and therefore receives positive loss w.p. at least $p$.
\end{proof}

Using Proposition \ref{prop:d-1}, we show that any i.i.d. row sampling
distribution (like for example leverage score sampling) requires
$\Omega(d\log d)$  samples
to get any multiplicative loss bound, either with high probability or
in expectation. 

\begin{corollary}
\label{cor:lb-iid}
Let $d\leq n$ and let $\X$ be any input matrix
of rank $d$ consisting of $n$ standard basis row vectors in $\R^d$.
Then for any randomized learning algorithm which 
selects a random multiset $S\subseteq\{1..n\}$ of size $|S|\leq
  (d-1)\ln(d)$ via i.i.d. sampling from any distribution
and uses any weight function $\w(S)$, there is a response
vector $\y$ satisfying: 
\[L(\w^*) = 0,\quad \text{ and }\quad L(\w(S)) > 0\quad\text{ with
    probability at least }1/2. \] 
\end{corollary}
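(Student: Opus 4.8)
The plan is to reduce to Proposition \ref{prop:d-1}. That proposition already supplies an adversarial $\y$ with $L(\w^*)=0$ and $L(\w(S))>0$ with probability at least $p$, where $p$ is the probability that the sampled submatrix $\X_S$ is rank deficient. So all that remains is to show that any i.i.d. sampling scheme drawing a multiset $S$ with $m\defeq|S|\le (d-1)\ln d$ satisfies $\Pr[\rank(\X_S)<d]\ge \tfrac12$. (The case $d=1$ is trivial, since then $m=0$.) The multiset nature of $S$ causes no difficulty: only the set of distinct coordinate directions hit by $S$ affects $\rank(\X_S)$, and repeated draws give the learner no extra information, so the counting argument in the proof of Proposition \ref{prop:d-1} goes through verbatim with the support of $S$ in place of $S$.

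First I would push the i.i.d.\ distribution on the $n$ rows down to the induced distribution $\p=(p_1,\dots,p_d)$ on the $d$ coordinate directions, where $p_j$ is the total probability of all rows equal to $\e_j$, so that $\sum_{j=1}^d p_j=1$. Because $\X$ consists of standard basis rows, $\X_S$ is rank deficient precisely when $S$ misses at least one direction $j$, and for a fixed $j$ that happens with probability $(1-p_j)^m$. Thus the question is exactly a coupon-collector estimate: I need to show $\Pr[\text{all }d\text{ directions are collected in }m\text{ draws}]\le \tfrac12$.

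The key steps are: (i) the events ``direction $j$ is collected'' are negatively associated (the standard negative-association property of occupancy counts for balls into bins), hence
\[
\Pr[\text{all collected}]\ \le\ \prod_{j=1}^d\big(1-(1-p_j)^m\big)\ \le\ \exp\!\Big(-\sum_{j=1}^d (1-p_j)^m\Big);
\]
(ii) the map $t\mapsto(1-t)^m$ is convex, so Jensen's inequality together with $\sum_j p_j=1$ gives $\sum_{j=1}^d (1-p_j)^m\ge d\,(1-\tfrac1d)^m$; and (iii) since $-\ln(1-\tfrac1d)\le \tfrac1{d-1}$, the assumption $m\le (d-1)\ln d$ yields $(1-\tfrac1d)^m\ge e^{-\ln d}=\tfrac1d$, so $\sum_{j=1}^d (1-p_j)^m\ge 1$. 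Combining these, $\Pr[\text{all collected}]\le e^{-1}<\tfrac12$, hence $\Pr[\rank(\X_S)<d]\ge 1-e^{-1}>\tfrac12$, and Proposition \ref{prop:d-1} delivers the claimed $\y$ with the failure probability at least $1/2$.

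I expect step (i) to be the main obstacle worth stating carefully; steps (ii)--(iii) are routine. An alternative, slightly more elementary route to (i) is to observe that the indicators ``direction $j$ is empty'' are pairwise negatively correlated and to invoke the appropriate correlation inequality for such families, but citing the negative-association property of occupancy counts is the cleanest. The only other thing to double-check is that $\X$ having rank $d$ forces every $\e_j$ to appear as some row, so the reduction to the induced distribution $\p$ is well defined (and if some $p_j=0$ the conclusion is immediate).
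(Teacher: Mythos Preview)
Your proposal is correct and follows the same high-level route as the paper: reduce to Proposition~\ref{prop:d-1} by showing that any i.i.d.\ sample of size at most $(d-1)\ln d$ misses at least one basis direction with probability at least~$1/2$. The difference is only in how the coupon-collector estimate is established. The paper simply cites two external results---one for the uniform case and one stating that uniform is the extremal distribution for coupon collection---and concludes. You instead give a self-contained argument: negative association of occupancy indicators yields $\Pr[\text{all collected}]\le\prod_j(1-(1-p_j)^m)$, Jensen reduces to the uniform case, and the elementary bound $\ln(1+x)\le x$ closes the estimate. Your route buys independence from outside references and even a sharper constant ($1-e^{-1}$ rather than $1/2$); the paper's route is shorter on the page. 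The remark about handling multisets by passing to the support of $S$ is a nice clarification that the paper leaves implicit.
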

\begin{proof}
Any i.i.d. sample of size at most $(d-1)\ln(d)$ with probability at
least $1/2$ does not contain all of the unique standard basis vectors (Coupon
Collector Problem\footnote{This was proven for uniform sampling in Theorem 1.24 of
\cite{randomized-heuristics}. It can be shown that uniform sampling is the
best case for Coupon Collector Problem
\citep{coupon-collector-extreme}, so the bound holds for
any i.i.d. sampling.}). Thus, with  probability at least $1/2$ submatrix
$\X_S$ has rank less than $d$. Now, for any such algorithm we can use
Proposition \ref{prop:d-1} to select a consistent adversarial
response vector $\y$ such that with probability at least $1/2$ the
loss $L(\w(S))$ is positive.
\end{proof}
Note that the corollary requires $\X$ to be of a
restricted form that contains a lot of duplicate rows. 
It is open whether this corollary still
holds when $\X$ is an arbitrary full rank matrix.

\subsection{Loss expectation formula (proof of Theorem \ref{t:loss})}
\label{sec:proof-loss}

First, we discuss several key connections between linear regression
and volume, which are used in the proof. Note that the loss $L(\w^*)$
suffered by the optimum weight vector can be written as
$\|\ybh-\y\|^2$, the squared Euclidean distance between  
prediction vector $\ybh=\X\w^*$ and the response vector $\y$. 
Since $\ybh$ is minimizing the distance from $\y$ to the subspace of $\R^n$
spanning the feature vectors $\{\f_1,\dots,\f_d\}$ (columns of $\X$),
it has to be the {\em projection} of $\y$ onto that subspace (see Figure 
\ref{fig:projection}). We denote this projection as
$\P_\X\,\y$, as defined in Section \ref{sec:expectation-formulas}. Note
that $\P_\X$ is a linear mapping
from $\R^n$ onto the column span of the matrix $\X$ such that
\begin{align}
\text{for }\u \in \Span(\X)\quad 
\u = \P_\X\,\y \ \  \Leftrightarrow \ \  \P_\X\,(\u-\y)=\zero
\ \ \Leftrightarrow\ \X^\top(\u-\y)=\zero.
\label{eq:proj}
\end{align}

 We next give a second geometric interpretation of
the length $\|\ybh-\y\|^2$.
Let $\Pc$ be the parallelepiped formed by the $d$
column/feature vectors of the input matrix $\X$.
Furthermore, consider the extended input matrix 
produced by adding the response vector $\y$ to $\X$ as an extra column:
\begin{align}
\Xs\defeq (\X,\y)
  \in\R^{n\times (d+1)}.
\label{eq:Xs}
\end{align}
\begin{wrapfigure}{r}{0.45\textwidth}
\vspace{-38 pt}
  \begin{center}
    \includegraphics[width=0.45\textwidth]{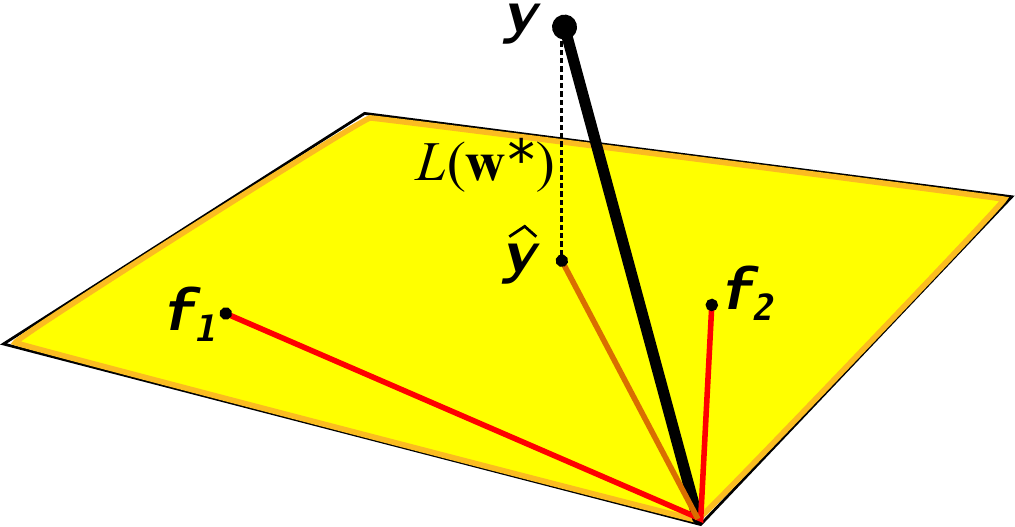}
  \end{center}
\vspace{-15pt}
  \caption{Prediction vector $\ybh$ is a projection of $\y$ onto the
    span of feature vectors $\f_i$.}
\label{fig:projection}
\end{wrapfigure}
Using the ``base $\times$ height'' formula we can relate the volume of
$\Pc$ to the volume of $\widetilde{\Pc}$, the parallelepiped
formed by the $d+1$ columns of $\Xs$. 
Observe that $\widetilde{\Pc}$ has $\Pc$ as one
of its faces, with the response vector $\y$ representing the edge that
protrudes from that face. Hence the volume of $\widetilde{\Pc}$ is the
product of the volume of $\Pc$ and 
the distance between $\y$ and $\text{span}(\X)$. This
distance equals $\|\ybh-\y\|$, since as discussed above, $\ybh$ is the projection of $\y$ onto
$\text{span}(\X)$. Thus we have
\begin{align}
\det(\Xs^\top\Xs)
=\det(\X^\top\X) \,L(\w^*).
\label{eq:base-x-height}
\end{align}

Next, we present a proposition whose corollary is key to proving
Theorem \ref{t:loss}.
Suppose that we select one test row from the input matrix
and use the remaining $n-1$ row response pairs
as the training set. The proposition
relates the loss of the obtained solution on the test row 
to the total leave-one-out loss an all rows.
\begin{proposition}
\label{prop:geometry}
For any index $i\in\n$, let $\of{\w^*}{-i}$ be the solution to the
reduced linear regression problem $(\X_{-i},\y_{-i})$. Then
\vspace{-5mm}
\begin{align*}
L(\of{\w^*}{-i})-L(\w^*) =
\overbrace{\x_i^\top(\X^\top\X)^{-1}\x_i}^{\frac{\det(\X^\top\X)-\det(\X_{-i}^\top\X_{-i})}{\det(\X^\top\X)}} \ell_i(\of{\w^*}{-i}),
\end{align*}
where $\ell_i(\w) \defeq (\x_i^\top\w - y_i)^2$ is the square loss of
$\w$ on the $i$-th point.
\end{proposition}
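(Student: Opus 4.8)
The plan is to reduce the whole statement to a single rank-one update. The key structural facts are $\X^\top\X = \X_{-i}^\top\X_{-i} + \x_i\x_i^\top$ and $\X^\top\y = \X_{-i}^\top\y_{-i} + \x_i y_i$. Write $\M \defeq \X^\top\X$, $l_i \defeq \x_i^\top\M^{-1}\x_i$, and let $r_i \defeq \x_i^\top\w^* - y_i$ be the residual of the optimum at the $i$-th point. First I would establish the leave-one-out update formula
\begin{align*}
\of{\w^*}{-i} = \w^* + \frac{r_i}{1-l_i}\,\M^{-1}\x_i.
\end{align*}
One way is to apply Sherman--Morrison to $(\X_{-i}^\top\X_{-i})^{-1} = (\M - \x_i\x_i^\top)^{-1}$, multiply by $\X_{-i}^\top\y_{-i} = \M\w^* - \x_i y_i$, and collapse the two $y_i$-contributions using $1 + \frac{l_i}{1-l_i} = \frac{1}{1-l_i}$; alternatively one can just verify directly that the vector above satisfies the normal equations $\X_{-i}^\top(\X_{-i}\w - \y_{-i}) = \zero$ of the reduced problem, which is a two-line computation.

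Next I would compute the left-hand side using the orthogonality of the optimal residual vector $\e \defeq \X\w^* - \y$, which satisfies $\X^\top\e = \zero$ by the normal equations for the full problem. Substituting the update formula and expanding the square,
\begin{align*}
L(\of{\w^*}{-i}) = \Big\|\e + \tfrac{r_i}{1-l_i}\,\X\M^{-1}\x_i\Big\|^2
= \|\e\|^2 + \frac{r_i^2}{(1-l_i)^2}\,\x_i^\top\M^{-1}\X^\top\X\M^{-1}\x_i,
\end{align*}
since the cross term is a multiple of $\e^\top\X\M^{-1}\x_i = (\X^\top\e)^\top\M^{-1}\x_i = 0$. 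As $\x_i^\top\M^{-1}\X^\top\X\M^{-1}\x_i = \x_i^\top\M^{-1}\x_i = l_i$, this gives $L(\of{\w^*}{-i}) - L(\w^*) = l_i\,\frac{r_i^2}{(1-l_i)^2}$.

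It then remains to evaluate $\ell_i(\of{\w^*}{-i})$ and match. From the update formula, $\x_i^\top\of{\w^*}{-i} - y_i = r_i + \frac{r_i}{1-l_i}\,\x_i^\top\M^{-1}\x_i = \frac{r_i}{1-l_i}$, hence $\ell_i(\of{\w^*}{-i}) = \frac{r_i^2}{(1-l_i)^2}$. Comparing with the previous display yields $L(\of{\w^*}{-i}) - L(\w^*) = l_i\,\ell_i(\of{\w^*}{-i}) = \x_i^\top(\X^\top\X)^{-1}\x_i\,\ell_i(\of{\w^*}{-i})$, which is the claim. The overbrace rewrite $l_i = \frac{\det(\X^\top\X) - \det(\X_{-i}^\top\X_{-i})}{\det(\X^\top\X)}$ is just Sylvester's Theorem applied to $\det(\M - \x_i\x_i^\top)$, already used in Theorem \ref{t:vol}.

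The argument is essentially bookkeeping; the only point that needs care is the factor $1-l_i$ in the denominators. It is strictly positive exactly when $\det(\X_{-i}^\top\X_{-i}) > 0$ (again by Sylvester, using $\det(\X^\top\X) > 0$), i.e. exactly when $\X_{-i}$ has full column rank and $\of{\w^*}{-i}$ is well-defined in the first place, so no separate degenerate case arises. I expect the small collapse step in deriving the update formula --- keeping the two $\M^{-1}\x_i y_i$ terms straight --- to be the easiest place to slip, but it is routine.
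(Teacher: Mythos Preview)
Your proof is correct and follows the standard algebraic route via the Sherman--Morrison leave-one-out update formula; the paper itself notes that an argument of this type appears in \cite{prediction-learning-games}. The paper, however, deliberately gives a different, geometric proof: it augments $\X$ with a carefully chosen column $\ybb$ (equal to the leave-one-out predictions on the training rows and $y_i$ on the held-out row), computes $\det(\Xbb^\top\Xbb)$ two ways --- once via the ``base $\times$ height'' formula and a projection argument giving $\det(\X^\top\X)(L(\w^*)-L(\of{\w^*}{-i})+\ell_i(\of{\w^*}{-i}))$, and once by volume-preserving column operations that reduce $\Xbb$ to a block-diagonal form giving $\det(\X_{-i}^\top\X_{-i})\,\ell_i(\of{\w^*}{-i})$ --- and equates them. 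Your approach is shorter and more mechanical; the paper's approach avoids Sherman--Morrison entirely, handles the degenerate case $\rank(\X_{-i})<d$ without dividing by $1-l_i$ (both sides simply vanish at the determinant level), and is chosen to highlight the volume interpretation that drives the rest of the paper.
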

An algebraic proof of this proposition essentially appears
in the proof of Theorem 11.7 in \cite{prediction-learning-games}. For
the sake of completeness we give a new geometric proof of this proposition 
in Appendix \ref{a:prop-proof} using basic properties of volume, 
thus stressing the connection to volume sampling. 

Note that if matrix $\X$ has exactly $n=d+1$ rows and the
training matrix
$\X_{-i}$ is full rank, then $\of{\w^*}{-i}$ has loss zero on all training
rows. In this case we obtain a simpler relationship than 
the proposition.
\begin{corollary}
\label{cor:geometry}
If $\X$ has $d+1$ rows and $\rank(\X_{-i})=d$, then defining $\Xs$
as in (\ref{eq:Xs}), we have 
\[ \det(\Xs^\top\Xs) = \det(\X_{-i}^\top\X_{-i}) \;\ell_i(\of{\w^*}{-i}).\] 
\end{corollary}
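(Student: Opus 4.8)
The plan is to combine Proposition~\ref{prop:geometry} with the ``base $\times$ height'' identity~\eqref{eq:base-x-height}, exploiting the fact that when $\X$ has only $n=d+1$ rows the leave-one-out predictor fits its training data perfectly.

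First I would observe that since $\X_{-i}$ consists of $d$ rows in $\R^d$ and has rank $d$, the matrix $\X_{-i}$ is invertible, so the reduced least squares problem $(\X_{-i},\y_{-i})$ has a solution $\of{\w^*}{-i}$ with zero residual on all $d$ training rows. Hence the total loss of $\of{\w^*}{-i}$ on all $n=d+1$ rows of $\X$ is concentrated entirely on the left-out row $i$, i.e. $L(\of{\w^*}{-i}) = \ell_i(\of{\w^*}{-i})$.

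Next I would invoke Proposition~\ref{prop:geometry}, which gives $L(\of{\w^*}{-i}) - L(\w^*) = \x_i^\top(\X^\top\X)^{-1}\x_i\,\ell_i(\of{\w^*}{-i})$. Substituting $L(\of{\w^*}{-i}) = \ell_i(\of{\w^*}{-i})$ and solving for $L(\w^*)$ yields
\[
L(\w^*) = \bigl(1 - \x_i^\top(\X^\top\X)^{-1}\x_i\bigr)\,\ell_i(\of{\w^*}{-i}).
\]
I would then rewrite the scalar $1 - \x_i^\top(\X^\top\X)^{-1}\x_i$ using Sylvester's theorem applied to the rank-one update $\X^\top\X = \X_{-i}^\top\X_{-i} + \x_i\x_i^\top$ (this is exactly the identity displayed over the bracket in Proposition~\ref{prop:geometry}), obtaining $1 - \x_i^\top(\X^\top\X)^{-1}\x_i = \det(\X_{-i}^\top\X_{-i})/\det(\X^\top\X)$. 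Finally, plugging this and the previous display into~\eqref{eq:base-x-height}, namely $\det(\Xs^\top\Xs) = \det(\X^\top\X)\,L(\w^*)$, the factor $\det(\X^\top\X)$ cancels and leaves $\det(\Xs^\top\Xs) = \det(\X_{-i}^\top\X_{-i})\,\ell_i(\of{\w^*}{-i})$, as claimed.

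There is essentially no serious obstacle: the corollary is a direct specialization of Proposition~\ref{prop:geometry} to the square case $n=d+1$. The only point requiring a moment's care is the first step — justifying that the leave-one-out residual vanishes on the training rows — which uses precisely the hypothesis $\rank(\X_{-i})=d$; without it $\X_{-i}$ would not be invertible and $L(\of{\w^*}{-i})$ could pick up contributions from rows other than $i$.
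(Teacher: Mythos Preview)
Your proof is correct and follows essentially the same route as the paper: both combine Proposition~\ref{prop:geometry}, the observation $L(\of{\w^*}{-i})=\ell_i(\of{\w^*}{-i})$ (valid because $\X_{-i}$ is square of full rank), and the base~$\times$~height identity~\eqref{eq:base-x-height}. The only cosmetic difference is that you pass through the leverage-score form $1-\x_i^\top(\X^\top\X)^{-1}\x_i$ and then convert it via Sylvester, whereas the paper uses the determinant-ratio form of Proposition~\ref{prop:geometry} directly.
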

\begin{proof}
By Proposition \ref{prop:geometry} and the fact that
$L(\of{\w^*}{-i})=\ell_i(\of{\w^*}{-i})$, we have
$$\det(\X^\top\X)\; L(\w^*)= \det(\X_{-i}^\top\X_{-i}) \;\ell_i(\of{\w^*}{-i}). 
$$
The corollary now follows from the ``base $\times$ height'' formula for volume.
\end{proof}

We are now ready to present the proof of Theorem \ref{t:loss}. Recall that
our  goal is to find the expected loss  $\E[L(\of{\w^*}S)]$, where $S$ 
is a size $d$ volume sampled set.
\begin{proofof}{Theorem}{\ref{t:loss}}
First, we rewrite the expectation as follows:
\begin{align}
\E[L(\of{\w^*}S)] &= \sum_{S,|S|=d} P(S) L(\of{\w^*}S)
=\sum_{S,|S|=d} P(S) \sum_{j=1}^n \ell_j(\of{\w^*}S)\nonumber\\
&=\sum_{S,|S|=d}\sum_{j\notin S} P(S)\;\ell_j(\of{\w^*}S)
=\sum_{T,|T|=d+1}\sum_{j\in T}P(T_{-j})\;\ell_j(\of{\w^*}{T_{-j}}). 
\label{eq:sum-swap}
\end{align}
We now use Corollary \ref{cor:geometry} on the matrix $\X_T$ and 
test row $\x_j^\top$ (assuming $\rank(\X_{T_{-j}})=d$):
\begin{align}
\label{eq:summand}
P(T_{-j})\;\ell_j(\of{\w^*}{T_{-j}}) =
\frac{\det(\X_{T_{-j}}^\top\X_{T_{-j}})}{\det(\X^\top\X)}\;\ell_j(\of{\w^*}{T_{-j}}) =
\frac{\det(\Xs_T^\top\Xs_T)}{\det(\X^\top\X)}.
\end{align}
Since the summand does not depend on the index $j\in T$,
the inner summation in (\ref{eq:sum-swap}) becomes a multiplication
by $d+1$.  This lets us write the expected loss as:  
\begin{align}
\label{eq:th-cauchy-binet}
\!\!
\E[L(\of{\w^*}S)] = \frac{d+1}{\det(\X^\top\X)}
\!\sum_{T,|T|=d+1}\!\!\!\det(\Xs_T^\top\Xs_T)  
\overset{(1)}{=} (d+1)\frac{\det(\Xs^\top\Xs)}{\det(\X^\top\X)}
\overset{(2)}{=} (d+1)\,L(\w^*), 
\end{align}
where (1) follows from the Cauchy-Binet formula
and (2) is an application of the ``base $\times$ height'' formula.
If $\X$ is not in general position, then for some summands in \eqref{eq:summand},
$\rank(\X_{T_{-j}})<d$ and $P(T_{-j})=0$.
Thus the left-hand side of \eqref{eq:summand} is $0$, while the right-hand
side is non-negative, so \eqref{eq:th-cauchy-binet} becomes an inequality,
completing the proof of Theorem \ref{t:loss}.  
\end{proofof}

\subsection*{Lifting expectations to matrix form}
\label{sec:eprojs-proof}

We show the matrix expectation formula of Theorem \ref{t:eprojs} as a
corollary to the loss expectation formula of Theorem \ref{t:loss}. The
key observation is that the loss formula holds for
arbitrary response vector $\y$, which allows us to ``lift'' it to the
matrix form. 
\begin{proofof}{Theorem}{\ref{t:eprojs}}
Note, that the
loss of least squares estimator can be written in terms of the
projection matrix $\P_\X$:
\begin{align*}
L(\w^*) = \|\y - \ybh\|^2 = \|(\I - \P_\X)\y\|^2 =
  \y^\top(\I-\P_\X)^2\y \overset{(*)}{=}\y^\top(\I-\P_\X)\,\y, 
\end{align*}
where in $(*)$ we used the following property of a projection
matrix: $\P_\X^2=\P_\X$. Writing the loss expectation of the
subsampled estimator in the same form, we obtain:
\begin{align*}
\E[L(\of{\w^*}S)] &= \E[\|\y-\ybh(S)\|^2] = \E[\|(\I -
  \X(\I_S\X)^{+})\y\|^2] \\
&=\E[\y^\top(\I - \X(\I_S\X)^{+})^2\,\y] = \y^\top\E[(\I -
  \X(\I_S\X)^{+})^2]\,\y.
\end{align*}
Crucially, we are able to extract the response vector $\y$ out of the
expectation formula, which allows us to write the formula from Theorem
\ref{t:loss} as follows:
\begin{align*}
\y^\top\ \Blue{\E[(\I - \X(\I_S\X)^{+})^2]}\ \y = \y^\top
  \Blue{(d+1)(\I-\P_\X)}\ \y, \quad
\forall\, \y\in \R^n.
\end{align*}
We now use the following elementary fact: If for two
symmetric matrices $\A$ and $\B$, we have
$\y^\top \A\y=\y^\top\B\y,\;\forall\y\in\R^n$, then $\A=\B$.
This gives the matrix expectation formula:
\begin{align*}
\Blue{\E[(\I - \X(\I_S\X)^{+})^2]} = 
\Blue{(d+1)(\I-\P_\X)}.
\end{align*}
Expanding square on the l.h.s. of the above and applying Theorem \ref{t:einv}, 
we obtain the covariance-type equivalent form stated in 
Theorem \ref{t:eprojs}:
\begin{align*}
\I - 2\,\overbrace{\E[\X(\I_S\X)^{+}]}^{\P_\X} + \E[(\X(\I_S\X)^{+})^2]   
&=
(d+1)(\I-\P_\X)\\ 
\Longleftrightarrow\qquad \E[(\X(\I_S\X)^{+})^2] - \P_\X
&= d\,(\I-\P_\X). 
\end{align*}
\vspace{-1.3cm}

\end{proofof}



\subsection{Averaging unbiased estimators and the open problem
for worst-case responses}
\label{sec:av}


As discussed at the beginning of Section
\ref{sec:linear-regression}, our goal is to find a way to sample a
small index set $S$ and construct a weight 
function $\w(S)$ which uses responses $\y_S$ so that $\E[L(\w(S))]\leq
(1+c)\;L(\w^*)$, where the
multiplicative factor $1+c$ is bounded for all input
matrices $\X$ and all response vectors $\y$. 
Recall that $L(\cdot)$ denotes the square loss on all
rows and $\w^*$ is the optimal solution based on all responses.
We show in the previous subsections that the smallest size of $S$ for
which this goal can be achieved is $d$ (There is no
sampling procedure for sets of size less than $d$ and
weight function $\w(S)$ for which this factor is finite).
We also prove that when sets $S$ of size $d$ are drawn
proportional to the squared volume of $\X_S$ 
(i.e. $\det(\X_S^\top\X_S)$), then $\E[L(\of{\w^*}S)]\le (d+1) L(\w^*)$,
where the factor $d+1$ is optimal for some $\X$ and $\y$.
Here $\w^*(S)$ denotes the linear least squares solution for 
the subproblem $(\X_S,\y_S)$.


A natural more general goal is to get arbitrarily close to the
optimum loss. That is, for any $\epsilon$, what is the smallest sample size
$|S|=s$ for which there is a sampling distribution over subsets $S$ and
a weight function $\w(S)$ built from $\X$ and 
$\y_S$, such that $\E[L(\w(S))]\leq (1+\epsilon) \,L(\w^*)$.
A related bound for i.i.d. leverage score sampling states that a sample
size of $O(d\log d + \frac{d}{\epsilon})$ suffices to achieve a
$1+\epsilon$ factor \textit{with high probability} \citep{Hsuprivate,thethesis},
however this does not imply multiplicative bounds
\textit{in expectation}.\footnote{%
Also, the weight vectors produced from i.i.d. leverage score
sampling are not unbiased.}


We conjecture that some form of volume sampling can be used to achieve the
$1+\epsilon$ factor with sample size $O(\frac{d}{\epsilon})$, in expectation.
How close can we get with the techniques presented in this paper?
We showed that size $d$ volume sampling achieves a factor
of $1+d$, but we do not know how to generalize this proof
to sample size larger than $d$. However, one unique
property of the volume-sampled estimator $\w^*(S)$ that can be
useful here is that it is an {\em unbiased estimator} of $\w^*$. As
we shall see now, this basic property has many benefits.
For any unbiased estimator (i.e. $\E[\w(S)] =\w^*$) and
optimal prediction vector $\ybh=\X\w^*$,
consider the following rudimentary version of a
bias-variance decomposition:
\begin{equation}
\label{e:var}
\E\underbrace{\|\X\,\w(S)-\y\|^2}_{L(\w(S))} 
=\E\,\|\X\,\w(S)-\ybh+\ybh - \y\|^2
=\E\,\|\X\,\w(S) - \ybh\|^2 
   +\underbrace{\|\ybh-\y\|^2}_{L(\w^*)}.
\end{equation}
The unbiasedness of the estimator assures that the cross term
$(\X\overbrace{\E[\w(S)]}^{\w^*} 
- \ybh)^\top (\ybh-\y)$ is 0.
Therefore a $1+c$ factor loss bound is equivalent
to a $c$ factor variance bound, i.e.
\begin{align}
\overbrace{\E[L(\w(S))] \leq (1+c)\, L(\w^*)}^{\text{loss bound}}
\quad \Longleftrightarrow \quad
\overbrace{\E\,\|\X\,\w(S) - \ybh\|^2 \leq
  c\,L(\w^*)}^{\text{variance bound}}.
\label{eq:loss-variance}
\end{align}
To reduce the variance of any unbiased estimator $\w(S)$ 
(i.e. $\E[\w(S)] = \w^*$)
with sample size $s$, we can draw $k$ independent samples $S_1,\ldots,S_k$
of size $s$ each and predict with the average estimator
$\frac{1}{k}\sum_{j=1}^k \w(S_j)$. If the loss bound from
\eqref{eq:loss-variance} holds for $\w(S)$, then the average estimator satisfies
$$
\E \bigg[L\Big(\frac{1}{k}\sum\nolimits_j \w(S_j) \Big)\bigg]
\leq \Big(1 + \frac{c}{k}\Big)\,L(\w^*).
$$
Setting $k=c/\epsilon$, we need $t=s\,c/\epsilon$ responses to get a
$1+\epsilon$ approximation.
We showed that size $s=d$ volume sampling achieves factor $c=d$.
So with our current proof techniques, we need
$t=d^2/\epsilon$ responses to get a $1+\epsilon$ factor
approximation, for $\epsilon\in (0,d]$.%
\footnote{{Thus when averaging the
estimators of $k=t/d$ independent volume sampled sets of size $d$,
$$\underbrace{\E \bigg[L\Big(\frac{1}{k}\sum\nolimits_j \of{\w^*}{S_j} \Big)\bigg]
-L(\w^*)}_{\text{regret}}
= \underbrace{\frac{d^2 L(\w^*)}{t}}_{\text{prediction variance}},\quad \text{when $\X$ is
in general position}.$$
}}

The basic open problem for worst-case responses is the following:
Is there a size $O(d/\epsilon)$ {\em unbiased estimator} that
achieves a $1+\epsilon$ factor approximation?\footnote{%
In a recent paper \citep{chen2017condition} 
a $1+\epsilon$ factor approximation has been
achieved with $O(d/\epsilon)$ examples (for $\epsilon\in (0,1]$), 
but the guarantee holds with high
probability (and not in expectation) and the estimator is not unbiased.}
By the above averaging method this is equivalent to the following question:
Is there a size $O(d)$ unbiased estimator that
achieves a constant factor? This is because
once we have an unbiased estimator that achieves a
constant factor, then by averaging $1/\epsilon$ copies, we get the
$1+O(\epsilon)$ factor.
Ideally the special unbiased estimators resulting from a
version of volume sampling can achieve this feat. We
conclude this section with
our favorite open problem: Is there a version of 
$O(d)$ size volume sampling that achieves a constant factor approximation?

In the next section we make some
minimal statistical assumptions on the response vector
which let us prove much stronger bounds:
We assume that the response vector is linear plus bounded noise of mean zero.
In particular we show that with this noise model, 
$O(d)$ size volume sampling achieves a constant factor approximation.

\section{Regularized volume sampling for learning with
noisy responses}
\label{sec:regularized}

\begin{wrapfigure}{R}{0.4\textwidth}
\renewcommand{\thealgorithm}{}
\vspace{-8mm}
\hspace{-2mm}
\begin{minipage}{0.4\textwidth}
\floatname{algorithm}{}
\begin{algorithm}[H] 
  \caption{\bf \small $\lambda$-regularized size $s$ volume sampling}
  \begin{algorithmic}[0]
    \STATE $S \leftarrow [n]$
\vspace{1mm}
    \STATE {\bf while} $|S|>s$
    \STATE \quad $\forall_{i\in S}\!:\ h_i\leftarrow
    \frac{\det(\X_{\Sm}^\top\!\X_{\Sm}+\lambda\I)}{\det(\X_S^\top\X_S+\lambda\I)}$
\vspace{1mm}
    \STATE \quad Sample $i\propto h_i$ out of $S$
\vspace{1mm}
    \STATE \quad $S\leftarrow S - \{i\}$
    \STATE {\bf end} 
    \RETURN $S$
  \end{algorithmic}
\end{algorithm}
\end{minipage}
\vspace{-4mm}
\end{wrapfigure}

Volume sampling, as defined in Section \ref{sec:vol-def}, has certain
fundamental limitations. Namely, it is undefined whenever matrix $\X$
is not full rank or if we wish to sample a subset $S$ of size
smaller than the dimension $d$. Motivated by these limitations, we
propose a regularized variant, called $\lambda$-regularized
volume sampling, which we define through a generalization of the
reverse iterative sampling procedure:
\begin{align}
P(\Sm\,|\,S)  \propto
  \frac{\det(\X_\Sm^\top\X_\Sm+\lambda\I)}{\det(\X_S^\top\X_S+\lambda\I)}.
\label{eq:regularized}
\end{align}
The normalization factor of this conditional probability (i.e. the sum of
\eqref{eq:regularized} over $i\in S$) can be computed using
Sylvester's theorem:
\begin{align}
\sum_{i\in  S}
\frac{\det(\X_\Sm^\top\X_\Sm+\lambda\I)}{\det(\X_S^\top\X_S+\lambda\I)}
&= \sum_{i\in  S}\big(1 -
  \x_i^\top(\X_S^\top\X_S+\lambda\I)^{-1}\x_i\big)\nonumber\\
&= |S| - \tr\big(\X_S(\X_S^\top\X_S+\lambda\I)^{-1}\X_S^\top) \nonumber\\
&= |S| - d + \lambda\,\tr\big((\X_S^\top\X_S+\lambda\I)^{-1}\big).\label{eq:reg-sum}
\end{align}
Note that in the special case of no regularization (i.e.
$\lambda=0$) the last trace vanishes and \eqref{eq:reg-sum} is
equal to $|S|-d$, so we recover volume sampling from Section
\ref{sec:vol-def}. 
However, when $\lambda > 0$, then the last term is non-zero and
depends on the entire matrix $\X_S$. This makes regularized volume
sampling  more complicated and certain equalities proven in previous
sections for $\lambda=0$ no longer hold. In particular, the
analogous closed form of the sampling probability $P(S)$ given in Theorem
\ref{t:vol} is not recovered because the paths from node
$\{1..n\}$ to node $S$ in the graph of Figure \ref{fig:dag-sampling}
do not all have the same probability.
However, the proof technique we
developed for reverse iterative sampling can still be applied,
resulting in the following extension of the variance 
formula of Theorem \ref{t:einvs}:

\begin{theorem}\label{t:reg-einvs}
For any $\X\in\R^{n\times d}$, $\lambda\geq 0$, 
let $S$ be sampled according  to
$\lambda$-regularized size $s$ volume sampling from $\X$. Then,
\[\E\big[ (\X_S^\top\X_S+\lambda\I)^{-1}\big] \preceq
  \frac{n-d_\lambda+1}{s-d_\lambda+1}(\X^\top\X+\lambda\I)^{-1}\]
for any $s\geq d_\lambda\defeq \tr(\X(\X^\top\X+\lambda\I)^{-1}\X^\top)$.
\end{theorem}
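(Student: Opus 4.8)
The plan is to run the reverse iterative sampling argument used for Theorem~\ref{t:einvs}, but to propagate a Loewner (positive semidefinite) inequality through the layers of the DAG in Figure~\ref{fig:dag-sampling} rather than an exact equality, since for $\lambda>0$ the regularized edge probabilities no longer telescope and Lemma~\ref{l:key} does not apply verbatim. Throughout, fix a level $s'\in\{s+1,\dots,n\}$ and a set $S$ with $|S|=s'$, and write $\M_S\defeq\X_S^\top\X_S+\lambda\I$, so that $\X_\Sm^\top\X_\Sm+\lambda\I=\M_S-\x_i\x_i^\top$ for each $i\in S$. By Sylvester's theorem the edge probability is $P(\Sm\mid S)=(1-\x_i^\top\M_S^{-1}\x_i)/Z_S$ with normalizer $Z_S=\sum_{i\in S}(1-\x_i^\top\M_S^{-1}\x_i)=s'-d+\lambda\tr(\M_S^{-1})$, as in \eqref{eq:reg-sum}.

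First I would compute the one-step conditional expectation $\sum_{i\in S}P(\Sm\mid S)(\M_S-\x_i\x_i^\top)^{-1}$. Expanding each inverse by Sherman--Morrison and using $\sum_{i\in S}\x_i\x_i^\top=\M_S-\lambda\I$, the rank-one corrections recombine into $\M_S^{-1}(\M_S-\lambda\I)\M_S^{-1}=\M_S^{-1}-\lambda\M_S^{-2}$, so that
\begin{align*}
\sum_{i\in S}P(\Sm\mid S)\,(\X_\Sm^\top\X_\Sm+\lambda\I)^{-1}
&=\Big(1+\tfrac1{Z_S}\Big)\M_S^{-1}-\tfrac{\lambda}{Z_S}\M_S^{-2}\\
&\preceq\Big(1+\tfrac1{Z_S}\Big)\M_S^{-1},
\end{align*}
where the inequality discards the nonnegative matrix $\tfrac{\lambda}{Z_S}\M_S^{-2}$; for $\lambda=0$ this is an equality and recovers the recursion behind Theorem~\ref{t:einvs}.

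Next I would bound $Z_S$ from below uniformly in $S$. Since $s'-Z_S=\tr(\X_S\M_S^{-1}\X_S^\top)=d-\lambda\tr(\M_S^{-1})$ and $\X_S^\top\X_S\preceq\X^\top\X$ implies $(\X_S^\top\X_S+\lambda\I)^{-1}\succeq(\X^\top\X+\lambda\I)^{-1}$, we get $s'-Z_S\le d-\lambda\tr((\X^\top\X+\lambda\I)^{-1})=d_\lambda$, i.e.\ $Z_S\ge s'-d_\lambda\ge s+1-d_\lambda\ge1>0$ (using $s\ge d_\lambda$); in particular every edge probability encountered is well defined and positive. Hence $1+1/Z_S\le\frac{s'-d_\lambda+1}{s'-d_\lambda}$, and combining with the previous display (using $\M_S^{-1}\succeq0$) gives, for every $S$ with $|S|=s'$,
\[
\sum_{i\in S}P(\Sm\mid S)\,(\X_\Sm^\top\X_\Sm+\lambda\I)^{-1}
\;\preceq\;\frac{s'-d_\lambda+1}{s'-d_\lambda}\,(\X_S^\top\X_S+\lambda\I)^{-1}.
\]
Averaging over $S$ at level $s'$ (legitimate since the constant does not depend on $S$ and convex combinations respect the Loewner order) yields $\E_{|S|=s'-1}[(\X_S^\top\X_S+\lambda\I)^{-1}]\preceq\frac{s'-d_\lambda+1}{s'-d_\lambda}\E_{|S|=s'}[(\X_S^\top\X_S+\lambda\I)^{-1}]$. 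Starting from $\E_{|S|=n}[(\X_S^\top\X_S+\lambda\I)^{-1}]=(\X^\top\X+\lambda\I)^{-1}$ and iterating from level $n$ down to level $s$, the constants telescope to $\prod_{s'=s+1}^{n}\frac{s'-d_\lambda+1}{s'-d_\lambda}=\frac{n-d_\lambda+1}{s-d_\lambda+1}$, which is the claimed bound.

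The main obstacle --- and the reason the statement is a ``$\preceq$'' rather than an ``$=$'' as in Theorem~\ref{t:einvs} --- is precisely the two sources of slack above: the discarded term $\tfrac{\lambda}{Z_S}\M_S^{-2}$, and, more importantly, the replacement of the exact $\X_S$-dependent normalizer $Z_S$ by the $S$-uniform lower bound $s'-d_\lambda$, which is what makes the per-layer growth factor independent of $S$ so that the averaging and telescoping go through. The remaining pieces --- the Sherman--Morrison algebra of the first step and the inverse-monotonicity estimate for $Z_S$ --- are routine.
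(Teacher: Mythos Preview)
Your proposal is correct and is essentially the same argument as the paper's: compute the one-step conditional expectation via Sherman--Morrison to get $(1+1/Z_S)\M_S^{-1}-\tfrac{\lambda}{Z_S}\M_S^{-2}\preceq(1+1/Z_S)\M_S^{-1}$, lower-bound the normalizer $Z_S\ge s'-d_\lambda$ via $\tr(\M_S^{-1})\ge\tr((\X^\top\X+\lambda\I)^{-1})$, and telescope the resulting layer-wise Loewner inequalities from level $n$ down to level $s$. Your explicit remark that $Z_S\ge s+1-d_\lambda\ge1>0$ guarantees the procedure is well defined at every step is a small addition the paper leaves implicit.
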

\begin{remark}
Constant $d_\lambda$ is a common notion of statistical dimension often
referred to as the effective degrees of freedom. If $\lambda_i$ are
the eigenvalues of $\X^\top\X$, then $d_\lambda=\sum_{i=1}^d
\frac{\lambda_i}{\lambda_i+\lambda}$.
Note that $d_\lambda$ is decreasing with $\lambda$ and, when $\X$ is
full rank, $d_0=d$. Thus, unlike Theorem \ref{t:einvs}, the above
result  offers meaningful bounds for sampling sets $S$ of size smaller than $d$.
\end{remark}
\proof\ 
To obtain Theorem \ref{t:reg-einvs}, we use essentially the same methodology as
described in Lemma \ref{l:key}, except in the regularized case
equality is replaced with inequality. Recall that using Sylvester's
theorem  we can compute the unnormalized conditional probability from
\eqref{eq:regularized} as:
\begin{align*}
h_i&=\frac{\det(\X_\Sm^\top\X_\Sm+\lambda\I)}
        {\det(\X_S^\top\X_S+\lambda\I)}= 1-\x_i^\top(\X_S^\top\X_S+\lambda\I)^{-1}\x_i.
\end{align*}
From now on, we will use
$\of{\Z_\lambda}S=\X_S^\top\X_S+\lambda\I$ as a shorthand in the proofs.
Next, letting $M=\sum_{i\in S}h_i$, we compute
unnormalized expectation by applying the Sherman-Morrison formula:
\begin{align*}
M\, \E\big[(\X_\Sm^\top\X_\Sm+\lambda\I)^{-1}\,|\,S\big]
&=\sum_{i\in S} h_i \of{\Z_\lambda}\Sm^{-1}
= \sum_{i\in S} h_i\left(\of{\Z_\lambda}S^{-1} +
                                         \frac{\of{\Z_\lambda}S^{-1}\x_i\x_i^\top
                                         \of{\Z_\lambda}S^{-1}}{1-\x_i^\top\of{\Z_\lambda}S^{-1}\x_i}\right)\\
&=M\, \of{\Z_\lambda}S^{-1} + \of{\Z_\lambda}S^{-1}\Big(\sum_{i\in S}\x_i\x_i^\top\Big) \of{\Z_\lambda}S^{-1}\\
&=M\, \of{\Z_\lambda}S^{-1} + \of{\Z_\lambda}S^{-1}(\of{\Z_\lambda}S -
  \lambda\I)\of{\Z_\lambda}S^{-1}\\
&=M\, \of{\Z_\lambda}S^{-1} + \of{\Z_\lambda}S^{-1} -
  \lambda\of{\Z_\lambda}S^{-2} \\
&\preceq (M+1)\, \of{\Z_\lambda}S^{-1}.
\end{align*}
Finally, the normalization factor $M$ (which we already computed in
\eqref{eq:reg-sum}) can be lower-bounded using the $\lambda$-statistical dimension
$d_\lambda$ of matrix $\X$:
\begin{align*}
M &= \sum_{i\in S}(1-\x_i^\top\of{\Z_\lambda}S^{-1}\x_i) 
=s - d + \lambda\,\tr(\of{\Z_\lambda}S^{-1})
\geq s-\big(\underbrace{d \!-\! \lambda\,\tr(\of{\Z_\lambda}{\{1..n\}}^{-1})}_{d_\lambda}\big).
\end{align*}
Putting the bounds together, we obtain that:
\begin{align*}
\E\big[(\X_\Sm^\top\X_\Sm+\lambda\I)^{-1}\,|\,S\big]\preceq 
  \frac{s-d_\lambda+1}{s-d_\lambda} (\X_S^\top\X_S+\lambda\I)^{-1}.
\end{align*}
To prove Theorem \ref{t:reg-einvs} it remains to chain the
conditional expectations along the sequence of subsets obtained by
$\lambda$-regularized volume sampling:
\begin{align*}
\hspace{1.5cm}\E\big[\of{\Z_\lambda}S^{-1}\big] &\preceq
\left(\prod_{t=s+1}^n\frac{t-d_\lambda+1}{t-d_\lambda}\right)\,
\of{\Z_\lambda}{\{1..n\}}^{-1}
=\frac{n-d_\lambda+1}{s-d_\lambda+1}(\X^\top\X+\lambda\I)^{-1}.\hspace{.7cm}\BlackBox
\end{align*}

\subsection{Ridge regression with noisy responses}

We apply the above result to obtain statistical guarantees for
subsampling with regularized estimators.
Given a matrix $\X\in \R^{n\times d}$, we consider the
task of fitting a linear model
to a vector of responses
$\y=\X\wbt+\xib$, where $\wbt\in\R^d$ and the noise
$\xib\in\R^n$ is a
mean zero random vector with covariance matrix $\Var[\xib]\preceq\sigma^2\I$
for some $\sigma>0$. A classical solution to this task is the ridge estimator:
\begin{align*}
\w_\lambda^* &= 
\argmin_{\w\in\R^d} \; 
\|\X\w - \y\|^2 + \lambda\|\w\|^2=(\X^\top\X+\lambda\I)^{-1}\X^\top\y.
\end{align*}
As a consequence of Theorem \ref{t:reg-einvs}, we show that if $S$ is
sampled with $\lambda$-regularized volume sampling from $\X$, then the
ridge  estimator for the subproblem  $(\X_S,\y_S)$
$$
\of{\w_\lambda^*} S = (\X_S^\top\X_S+\lambda\I)^{-1}\X_S^\top\y_S 
$$
has strong generalization properties with respect to the full problem
$(\X,\y)$ in terms of the mean squared prediction error (MSPE) and
mean squared error (MSE). 

\begin{theorem}\label{thm:ridge}
Let $\X\in\R^{n\times d}$ and $\wbt\in\R^d$, and suppose that
$\y=\X\wbt + \xib$, where $\xib$ is a mean zero vector with
$\Var[\xib]\preceq\sigma^2\,\I$. Let $S$ be sampled according to
$\lambda$-regularized size $s\geq d_\lambda$ volume sampling from $\X$ and
$\of{\w_\lambda^*}S$ be 
the $\lambda$-ridge estimator of $\wbt$ computed from subproblem
$(\X_S,\y_S)$. Then, if $\lambda\leq \frac{\sigma^2}{\|\wbt\|^2}$, we have
\begin{align*}
\text{(mean squared prediction error)}&&
  \E_S\E_{\xib}\Big[\frac{1}{n}\|\X(\of{\w_\lambda^*}S-\wbt)\|^2\Big]
&\leq\frac{\sigma^2 d_\lambda}{s-d_\lambda+1},\\
\text{(mean squared error)}&&\
\E_S\E_{\xib}\big[\|\of{\w_\lambda^*}S-\wbt\|^2\big] &\leq
  \frac{\sigma^2n\,\tr((\X^\top\X+\lambda\I)^{-1})}{s-d_\lambda+1}.
\end{align*}
\end{theorem}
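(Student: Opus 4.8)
The plan is to derive Theorem~\ref{thm:ridge} from the matrix variance bound of Theorem~\ref{t:reg-einvs} via a bias--variance decomposition that exploits the regularization assumption $\lambda \le \sigma^2/\|\wbt\|^2$. First I would fix the sampled set $S$ and record the algebraic identity obtained by substituting $\y_S = \X_S\wbt + \xib_S$ into $\of{\w_\lambda^*}S = \of{\Z_\lambda}S^{-1}\X_S^\top\y_S$, where $\of{\Z_\lambda}S \defeq \X_S^\top\X_S + \lambda\I$ as in the proof of Theorem~\ref{t:reg-einvs}. Since $\of{\Z_\lambda}S^{-1}\X_S^\top\X_S = \I - \lambda\of{\Z_\lambda}S^{-1}$, this collapses to $\of{\w_\lambda^*}S - \wbt = \of{\Z_\lambda}S^{-1}(\X_S^\top\xib_S - \lambda\wbt)$, packaging the statistical noise and the regularization bias into a single residual vector $\r_S \defeq \X_S^\top\xib_S - \lambda\wbt$.

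Next I would take the expectation over $\xib$ with $S$ held fixed. Because $\xib$ has mean zero and $\Var[\xib] \preceq \sigma^2\I$ (hence the principal submatrix $\Var[\xib_S] \preceq \sigma^2\I$ as well), the cross terms vanish and $\E_{\xib}[\r_S\r_S^\top] = \X_S^\top\Var[\xib_S]\X_S + \lambda^2\wbt\wbt^\top \preceq \sigma^2\X_S^\top\X_S + \lambda^2\wbt\wbt^\top$. Here is where the assumption enters: $\lambda^2\wbt\wbt^\top \preceq \lambda^2\|\wbt\|^2\I \preceq \lambda\sigma^2\I$, so $\E_{\xib}[\r_S\r_S^\top] \preceq \sigma^2(\X_S^\top\X_S + \lambda\I) = \sigma^2\of{\Z_\lambda}S$. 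Conjugating by $\of{\Z_\lambda}S^{-1}$ yields the clean per-$S$ bound $\E_{\xib}[(\of{\w_\lambda^*}S - \wbt)(\of{\w_\lambda^*}S - \wbt)^\top] \preceq \sigma^2\of{\Z_\lambda}S^{-1}$, with no stray constant.

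Finally I would pass to the expectation over $S$ and invoke Theorem~\ref{t:reg-einvs}. For the MSPE, write $\tfrac1n\|\X\u\|^2 = \tfrac1n\tr(\X^\top\X\,\u\u^\top)$ with $\u = \of{\w_\lambda^*}S - \wbt$, take expectations, and use $\X^\top\X \succeq 0$ together with $\E_S[\of{\Z_\lambda}S^{-1}] \preceq \tfrac{n - d_\lambda + 1}{s - d_\lambda + 1}(\X^\top\X + \lambda\I)^{-1}$ to get $\tfrac{\sigma^2}{n}\tr(\X^\top\X\,\E_S[\of{\Z_\lambda}S^{-1}]) \le \tfrac{\sigma^2(n-d_\lambda+1)}{n(s-d_\lambda+1)}\tr(\X^\top\X(\X^\top\X+\lambda\I)^{-1})$; since $\tr(\X^\top\X(\X^\top\X+\lambda\I)^{-1}) = \tr(\X(\X^\top\X+\lambda\I)^{-1}\X^\top) = d_\lambda$ and $n - d_\lambda + 1 \le n$, this is the claimed bound. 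The MSE follows identically, tracing $\u\u^\top$ directly instead of against $\X^\top\X$. I expect the only real subtlety to be the treatment of the regularization bias: one must resist splitting $\of{\w_\lambda^*}S - \wbt$ into separate bias and variance pieces (which would produce a $\lambda^2\of{\Z_\lambda}S^{-1}\wbt\wbt^\top\of{\Z_\lambda}S^{-1}$ term handled only by the crude bound $\of{\Z_\lambda}S^{-1}\preceq\lambda^{-1}\I$, costing a factor of $2$), and instead keep $-\lambda\wbt$ inside the single residual $\r_S$, so that the assumption $\lambda\|\wbt\|^2 \le \sigma^2$ absorbs $\lambda^2\wbt\wbt^\top$ exactly into $\lambda\sigma^2\I$. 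Everything else is bookkeeping on top of Theorem~\ref{t:reg-einvs}.
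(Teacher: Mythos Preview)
Your proof is correct and reaches the same per-$S$ bound $\E_{\xib}[(\of{\w_\lambda^*}S-\wbt)(\of{\w_\lambda^*}S-\wbt)^\top]\preceq\sigma^2\of{\Z_\lambda}S^{-1}$ that the paper uses, after which both arguments finish identically via Theorem~\ref{t:reg-einvs}. The packaging differs slightly: the paper does carry out the explicit bias--variance split, obtaining $\Var_{\xib}[\of{\w_\lambda^*}S]\preceq\sigma^2(\of{\Z_\lambda}S^{-1}-\lambda\of{\Z_\lambda}S^{-2})$ and $\|\text{Bias}\|^2=\lambda^2\wbt^\top\of{\Z_\lambda}S^{-2}\wbt$, and then observes that the retained $-\sigma^2\lambda\,\tr(\of{\Z_\lambda}S^{-2})$ term exactly absorbs the bias once $\lambda\|\wbt\|^2\le\sigma^2$ is invoked (it repeats this separately for MSE and MSPE). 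Your version bundles bias and noise into the single residual $\r_S=\X_S^\top\xib_S-\lambda\wbt$ and works at the matrix level, which is a bit cleaner since one PSD inequality yields both conclusions by tracing against $\I$ or $\X^\top\X$. Your closing worry that a bias--variance split ``would cost a factor of $2$'' is thus not quite accurate: the paper does split and still gets the sharp constant, precisely because it keeps the $-\lambda\of{\Z_\lambda}S^{-2}$ piece of the variance rather than dropping it.
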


Next, we present two lower-bounds for MSPE of a
subsampled ridge estimator which show that the statistical guarantees
achieved by regularized volume sampling are
nearly optimal for $s\gg d_\lambda$ and better than standard
approaches for $s=O(d_\lambda)$. In particular, we show that 
non-i.i.d.~nature of volume sampling is essential
if we want to achieve good generalization when the number of
responses is close to $d_\lambda$. 
Namely, for certain data matrices any i.i.d. subsampling procedure
(such as i.i.d. leverage score sampling) 
requires more than $d_\lambda\ln(d_\lambda)$ responses to
achieve MSPE below $\sigma^2$. In contrast volume
sampling obtains that bound for any matrix with $2d_\lambda$ responses.

\begin{theorem}\label{thm:lb-all}
For any $p\geq 1$ and $\sigma\geq 0$, there is $d\geq p$ such
that for any sufficiently large $n$ divisible by $d$ there exists a
matrix $\X\in\R^{n\times d}$ such that 
\[d_\lambda(\X)\geq p\quad \text{ for any }\quad 0\leq\lambda\leq \sigma^2,\] 
and for each of the following two statements there is a vector
$\wbt\in\R^d$ for which the corresponding
 regression problem $\y=\X\wbt+\xib$ with $\Var[\xib]=\sigma^2\I$
satisfies that statement:
\begin{enumerate}
\item For any subset $S\subseteq\{1..n\}$ of size $s$,
\begin{align*}
  \E_{\xib}\Big[\frac{1}{n}\|\X(\of{\w_\lambda^*}S-\wbt)\|^2 \Big]
&\geq\frac{\sigma^2 d_\lambda}{s+d_\lambda};
\end{align*}
\item For multiset $S\subseteq\{1..n\}$ of size $s\leq
  (d_\lambda\!-\!1)\ln(d_\lambda)$, sampled i.i.d. from any distribution
  over $\{1..n\}$,
\begin{align*}
\E_S\E_{\xib}\Big[\frac{1}{n}\|\X(\of{\w_\lambda^*}S-\wbt)\|^2\Big] \geq \sigma^2.
\end{align*}
\end{enumerate}
\end{theorem}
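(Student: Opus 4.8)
The plan is to use a single family of hard instances for both parts. Let $\X\in\R^{n\times d}$ stack $n/d$ copies of each of the $d$ standard basis row vectors of $\R^d$, so that $\X^\top\X=\tfrac nd\,\I$ and $d_\lambda(\X)=\tfrac{nd}{n+\lambda d}$. Taking $d$ to be any integer exceeding $p$ (and $d=\lceil p\rceil$ when $\sigma=0$), for all sufficiently large $n$ divisible by $d$ we get $d_\lambda(\X)\ge p$ throughout $0\le\lambda\le\sigma^2$, which is the first displayed claim. The point of this $\X$ is that every $\X_S^\top\X_S$ is diagonal, so the population and all subsampled ridge estimators decouple coordinatewise. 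Letting $c_j=c_j(S)\ge0$ be the number of type-$j$ rows contained in $S$ (so $\sum_j c_j=s$), a direct bias--variance computation using $\Var[\xib]=\sigma^2\I$ and the identity $\tfrac1n\|\X\v\|^2=\tfrac1d\|\v\|^2$ yields
\[
  \E_{\xib}\!\Big[\tfrac1n\big\|\X(\of{\w_\lambda^*}S-\wbt)\big\|^2\Big]
  =\frac1d\sum_{j=1}^d\frac{\lambda^2\wbt_j^2+c_j\,\sigma^2}{(c_j+\lambda)^2},
\]
where for $c_j=0$ the $j$-th summand equals $\wbt_j^2$ exactly (with no type-$j$ data the estimate on that coordinate is $0$, whether or not $\lambda>0$, and it carries no noise). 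Both lower bounds then reduce to controlling this sum over count vectors $(c_1,\dots,c_d)$.

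For statement~2, take $\wbt=\sqrt{2d}\,\sigma\,\one$, so each $\wbt_j^2=2d\sigma^2$. Since $x\mapsto(x-1)\ln x$ is nondecreasing on $[1,\infty)$ and $d_\lambda\le d$, the hypothesis $s\le(d_\lambda-1)\ln d_\lambda$ gives $s\le(d-1)\ln d$, so the coupon-collector bound already invoked for Corollary~\ref{cor:lb-iid} — with the cited fact that uniform i.i.d.\ sampling is the best case — shows that any i.i.d.\ sample of size $s$ misses at least one of the $d$ types with probability $\ge1/2$. On that event some $c_j=0$ contributes $\wbt_j^2/d=2\sigma^2$ to the (deterministic) lower bound on $\tfrac1n\|\X(\of{\w_\lambda^*}S-\wbt)\|^2$, so $\E_S\E_{\xib}[\,\cdot\,]\ge\tfrac12\cdot 2\sigma^2=\sigma^2$.

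For statement~1, take $\wbt=a\,\one$ with $a$ a sufficiently large constant and minimize the closed form over all nonnegative integer count vectors with $\sum_j c_j=s$. Rewriting the per-coordinate term as $g(c)=\tfrac{\sigma^2}{c+\lambda}+\tfrac{\lambda(\lambda a^2-\sigma^2)}{(c+\lambda)^2}$, one sees that whenever $\lambda a^2\ge\sigma^2$ the function $g$ is convex with $g(c)\ge\tfrac{\sigma^2}{c+\lambda}$, so Jensen applied to $\tfrac1d\sum_j g(c_j)$ over the simplex gives $g(s/d)$, which is $\ge\tfrac{\sigma^2 d}{s+\lambda d}$, attained at the balanced vector $c_j=s/d$. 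Since $d_\lambda<d$ the target $\tfrac{\sigma^2 d_\lambda}{s+d_\lambda}$ is at most $\tfrac{\sigma^2 d}{s+d}$, and for $\lambda\le1$ the bound $\tfrac{\sigma^2 d}{s+\lambda d}$ is already at least this; for $1<\lambda\le\sigma^2$ the positive second term of $g(s/d)$ makes up the difference once $a$ is large enough. This covers all $\lambda\ge\sigma^2/a^2$. For the remaining small $\lambda$ — including the ordinary-least-squares case $\lambda=0$, where $\of{\w_\lambda^*}S$ uses the pseudoinverse — the bias term is essentially inert and one argues from the noise term directly: if $k$ types are sampled, Cauchy--Schwarz gives $\tfrac1d\sum_{c_j>0}\sigma^2/c_j\ge\tfrac{\sigma^2 k^2}{ds}$ while each of the $d-k$ unsampled types contributes $a^2/d$, and minimizing $\tfrac1d\big((d-k)a^2+\sigma^2 k^2/s\big)$ over $k$ forces $k=d$ once $a$ is large, leaving the value $\tfrac{\sigma^2 d}{s}\ge\tfrac{\sigma^2 d_\lambda}{s+d_\lambda}$ (with the obvious adjustment when $s<d$).

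The main obstacle is statement~1. The adversary choosing $S$ may concentrate all $s$ samples in a few coordinates, and $g$ is genuinely non-convex when $a$ is small relative to $\sigma^2/\lambda$, so a too-small $\wbt$ lets the adversary undercut the target; the argument therefore hinges on choosing $\|\wbt\|$ large enough (as a function of $\sigma$, $s$ and $d$) that either $g$ becomes convex, making the balanced count vector $c_j=s/d$ the worst case, or the missing-coordinate penalty $a^2$ dominates — and on checking that one such $a$ simultaneously handles the convex range and the near-OLS range for every $\lambda\in[0,\sigma^2]$.
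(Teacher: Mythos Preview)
Your construction (stacked identities) and your argument for Part~2 are exactly what the paper does: same matrix $\X$, same choice $\wbt=\sqrt{2d}\,\sigma\,\one$, same coupon-collector reduction, same arithmetic giving $\E_S\E_{\xib}[\,\cdot\,]\ge\frac12\cdot 2\sigma^2=\sigma^2$. Your closed-form
\[
\E_{\xib}\Big[\tfrac1n\|\X(\of{\w_\lambda^*}S-\wbt)\|^2\Big]
=\frac{\sigma^2}{d}\sum_{j=1}^d\frac{c_j+a^2\lambda^2}{(c_j+\lambda)^2}
\quad(\text{when }\wbt=a\sigma\,\one)
\]
also matches the paper's.

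Where you diverge is Part~1, and here you are missing the one observation that makes the paper's proof short. The paper differentiates the displayed expression in $\lambda$ and finds
\[
\frac{\partial}{\partial\lambda}\,\frac{\sigma^2}{d}\sum_j\frac{c_j+a^2\lambda^2}{(c_j+\lambda)^2}
=\frac{\sigma^2}{d}\sum_j\frac{2c_j(\lambda-a^{-2})}{(c_j+\lambda)^3},
\]
so for \emph{every} subset $S$ the MSPE is minimized at $\lambda=a^{-2}$. This lets the paper fix $a=1$, reduce to $\lambda=1$, and get the exact simplification
\[
\frac{\sigma^2}{d}\sum_j\frac{c_j+1}{(c_j+1)^2}=\frac{\sigma^2}{d}\sum_j\frac{1}{c_j+1}\ \ge\ \frac{\sigma^2}{s/d+1}=\frac{\sigma^2 d}{s+d}\ \ge\ \frac{\sigma^2 d_\lambda}{s+d_\lambda}
\]
by Jensen on $x\mapsto 1/(x+1)$. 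No case split on $\lambda$, no ``$a$ sufficiently large,'' no separate treatment of $s<d$.

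By contrast, your Part~1 argument has real gaps. First, your ``for $1<\lambda\le\sigma^2$ the positive second term makes up the difference once $a$ is large enough'' is not established: you never quantify how large $a$ must be, and since the bound has to hold uniformly in $s$ and $\lambda$ you would need a compactness check you do not carry out. Second, in the small-$\lambda$ regime you silently replace the actual variance contribution $\sigma^2 c_j/(c_j+\lambda)^2$ by $\sigma^2/c_j$ before invoking Cauchy--Schwarz; these differ by a factor $(c_j/(c_j+\lambda))^2<1$, so your lower bound $\frac{\sigma^2 k^2}{ds}$ is not justified as written. Third, your last paragraph concedes exactly these issues. The fix is not to push harder on the case analysis but to use the paper's derivative trick: once you know the worst $\lambda$ is $a^{-2}$, take $a=1$ and everything collapses.
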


\begin{proofof}{Theorem}{\ref{thm:ridge}}
Standard analysis for the ridge regression estimator follows by
performing bias-variance decomposition of the error, and then
selecting $\lambda$ so that bias can be appropriately bounded. We
will recall this calculation for a fixed subproblem
$(\X_S,\y_S)$. First, we
compute the bias of the ridge estimator for a fixed set $S$ (recall
the shorthand $\of{\Z_\lambda}S=\X_S^\top\X_S+\lambda\I$):
\begin{align*}
\text{Bias}_{\xib}[\of{\w_\lambda^*}S] &= \E[\of{\w_\lambda^*}S] - \wbt
=\E_{\xib}\,[\of{\Z_\lambda}S^{-1}\X_S^\top\y_S] - \wbt \\
&=\of{\Z_\lambda}S^{-1}\X_S^\top\,(\X_S\wbt + \cancel{\E_{\xib}[\xib_S]}) - \wbt\\
&=(\of{\Z_\lambda}S^{-1}\X_S^\top\X_S - \I)\wbt
= -\lambda \,\of{\Z_\lambda}S^{-1}\wbt.
\end{align*}
Similarly, the covariance matrix of $\of{\w_\lambda^*}S$ is given by:
\begin{align*}
\Var_{\xib}[\of{\w_\lambda^*}S] &=
  \of{\Z_\lambda}S^{-1}\X_S^\top\Var_{\xib}[\xib_S]\X_S
  \of{\Z_\lambda}S^{-1}\\
&\preceq \sigma^2\of{\Z_\lambda}S^{-1}\X_S^\top\X_S
  \of{\Z_\lambda}S^{-1}
=\sigma^2(\of{\Z_\lambda}S^{-1}-\lambda\,
  \of{\Z_\lambda}S^{-2}).
\end{align*}
Mean squared error of the ridge estimator for a fixed subset $S$ can
now be bounded by:
\begin{align}
\E_{\xib}\big[\|\of{\w_\lambda^*}S - \wbt\|^2\big] &=
  \tr(\Var_{\xib}[\of{\w_\lambda^*}S]) +                                                         \|\text{Bias}_{\xib}[\of{\w_\lambda^*}S]\|^2\nonumber\\ 
&\leq\sigma^2\tr(\of{\Z_\lambda}S^{-1}\!\!\!-\lambda
  \of{\Z_\lambda}S^{-2}) +
  \lambda^2\tr(\of{\Z_\lambda}S^{-2}\wbt\wbt^{\top})\nonumber\\
&\leq \sigma^2\tr(\of{\Z_\lambda}S^{-1}) +
  \lambda\tr(\of{\Z_\lambda}S^{-2})(\lambda\|\wbt\|^2\!- \sigma^2)\label{eq:cauchy-trace}\\
&\leq \sigma^2\tr(\of{\Z_\lambda}S^{-1}),\label{eq:lambda-bound}
\end{align}
where in (\ref{eq:cauchy-trace}) we applied Cauchy-Schwartz inequality for matrix trace, and
in (\ref{eq:lambda-bound}) we used the assumption that $\lambda\leq
\frac{\sigma^2}{\|\wbt\|^2}$. Thus, taking expectation over the sampling of set $S$, we get
\begin{align}
\E_S\E_{\xib}\big[\|\of{\w_\lambda^*}S - \wbt\|^2\big] &\leq
  \sigma^2\E_S\big[\tr(\of{\Z_\lambda}S^{-1})\big] \nonumber \\
\text{(Theorem \ref{t:reg-einvs})}\quad&\leq\sigma^2
  \frac{n-d_\lambda+1}{s-d_\lambda+1}
  \tr(\of{\Z_\lambda}{\{1..n\}}^{-1}) \label{eq:ridge-mse}\\
&\leq \frac{\sigma^2\,n\, \tr((\X^\top\X+\lambda\I)^{-1})}{s-d_\lambda+1}. \nonumber
\end{align}
Next, we bound the mean squared prediction error. As before, we start
with the standard  bias-variance decomposition for fixed set $S$:
\begin{align*}
\E_{\xib}\big[\|\X(\of{\w_\lambda^*}S \!-\! \wbt)\|^2\big] \!
&=
  \tr(\Var_{\xib}[\X\of{\w_\lambda^*}S]) + \|\X(\E_{\xib}[\of{\w_\lambda^*}S]
  - \wbt)\|^2\\
&\leq\sigma^2\tr(\X(\of{\Z_\lambda}S^{\!-1}\!\!-\!\lambda\,
  \of{\Z_\lambda}S^{\!-2})\X^\top)
+
  \lambda^2\tr(\of{\Z_\lambda}S^{\!-1}\X^\top\X \of{\Z_\lambda}S^{\!-1}\wbt\wbt^{\top})\\
&\leq \sigma^2\tr(\X\of{\Z_\lambda}S^{-1}\X^\top)
+ \lambda\,\tr(\X\of{\Z_\lambda}S^{-2}\X^\top)(\lambda\|\wbt\|^2
  - \sigma^2)\\
&\leq \sigma^2\tr(\X\of{\Z_\lambda}S^{-1}\X^\top).
\end{align*}
Once again, taking expectation over subset $S$, we have
\begin{align}
\E_S\E_{\xib}\Big[\frac{1}{n}\|\X(\of{\w_\lambda^*}S -
  \wbt)\|^2\Big]&
\leq\frac{\sigma^2}{n}
  \E_S\big[\tr(\X\of{\Z_\lambda}S^{-1}\X^\top)\big]
=\frac{\sigma^2}{n}\tr(\X \,\E_S[\of{\Z_\lambda}S^{-1}]\,\X^\top)\nonumber\\
\text{(Theorem \ref{t:reg-einvs})}\quad &\leq \frac{\sigma^2}{n} \frac{n-d_\lambda+1}{s-d_\lambda+1}\tr(\X
  \of{\Z_\lambda}{\{1..n\}}^{-1}\X^\top)
 \leq \frac{\sigma^2 d_\lambda}{s-d_\lambda+1}. \label{eq:ridge-mspe}
\end{align}
The key part of proving both bounds is the application of Theorem
\ref{t:reg-einvs}. For MSE, we only used the trace version of the
inequality (see (\ref{eq:ridge-mse})), however to obtain the bound on
MSPE we used the more general positive semi-definite inequality in
(\ref{eq:ridge-mspe}).
\end{proofof}

\begin{proofof}{Theorem}{\ref{thm:lb-all}}
Let $d=\lceil p\rceil+1$ and $n\geq\lceil \sigma^2\rceil d(d-1)$ be
divisible by $d$. We define 
\begin{align*}
\X &\defeq [\I,...,\I]^\top\in\R^{n\times d},\qquad
\wbt^{\top}\defeq \,[a\sigma,...,a\sigma]\in\R^d
\end{align*}
 for some $a>0$. For any  $\lambda\leq \sigma^2$, the
 $\lambda$-statistical dimension of $\X$ is 
\begin{align*}
d_\lambda &= \tr(\X\,\of{\Z_\lambda}{\{1..n\}}^{-1}\X^\top)
\geq\frac{\lceil \sigma^2\rceil d(d-1)}{\lceil \sigma^2\rceil (d-1) +
            \lambda}\geq \frac{d(d-1)}{d-1+1}\geq p.
\end{align*}
Let $S\subseteq\{1..n\}$ be any set of size $s$, and for $i\in\{1..d\}$ let 
$s_i \defeq |\{i\in S:\, \x_i=\e_i\}|$.
The prediction variance of estimator $\of{\w_\lambda^*}S$ is equal to
\begin{align*}
\tr\big(\Var_{\xib}[\X\of{\w_\lambda^*}S]\big)
&=\sigma^2\tr(\X(\of{\Z_\lambda}S^{-1}\!-\lambda \of{\Z_\lambda}S^{-2})\X^\top)\\
&=\frac{\sigma^2n}{d}\sum_{i=1}^d\left(\frac{1}{s_i+\lambda} -
  \frac{\lambda}{(s_i+\lambda)^2}\right)
=\frac{\sigma^2n}{d}\sum_{i=1}^d\frac{s_i}{(s_i+\lambda)^2}.
\end{align*}
The prediction bias of estimator $\of{\w_\lambda^*}S$ is equal to
\begin{align*}
\|\X(\E_{\xib}[\of{\w_\lambda^*}S]-\wbt)\|^2
&=\lambda^2\wbt^{\top}\of{\Z_\lambda}S^{-1}
\X^\top\X\of{\Z_\lambda}S^{-1}\wbt \\
&=\frac{\lambda^2a^2\sigma^2n}{d}\,\tr\big(\of{\Z_\lambda}S^{-2}\big)
=\frac{\lambda^2a^2\sigma^2n}{d}\sum_{i=1}^d\frac{1}{(s_i+\lambda)^2}.
\end{align*}
Thus, MSPE of estimator  $\of{\w_\lambda^*}S$
is given by:
\begin{align*}
\E_{\xib}\Big[\frac{1}{n}\|\X(\of{\w_\lambda^*}S-\wbt)\|^2\Big]
&=\frac{1}{n}\tr\big(\Var_{\xib}[\X\of{\w_\lambda^*}S]\big)
  +\frac{1}{n}\|\X(\E_{\xib}[\of{\w_\lambda^*}S]-\wbt)\|^2\\
&=\frac{\sigma^2}{d}\sum_{i=1}^d\left(\frac{s_i}{(s_i+\lambda)^2} +
  \frac{a^2\lambda^2}{(s_i+\lambda)^2}\right)
=\frac{\sigma^2}{d}\sum_{i=1}^d\frac{s_i +a^2\lambda^2}{(s_i+\lambda)^2}.
\end{align*}
Next, we find the $\lambda$ that minimizes this expression. Taking the
derivative with respect to $\lambda$ we get:
\begin{align*}
\frac{\partial}{\partial
  \lambda}\left(\frac{\sigma^2}{d}\sum_{i=1}^d\frac{s_i
  +a^2\lambda^2}{(s_i+\lambda)^2}\right)=\frac{\sigma^2}{d}\sum_{i=1}^d\frac{2s_i(\lambda
  - a^{-2})}{(s_i+\lambda)^3}.
\end{align*}

Thus, since at least one $s_i$ has to be greater than $0$, for any set
$S$ the derivative is negative  for $\lambda< a^{-2}$ and
positive for $\lambda>a^{-2}$, and the unique minimum of
MSPE is achieved at $\lambda=a^{-2}$, regardless of 
which subset $S$ is chosen. So, as we are seeking a lower bound, we can
focus on the case of $\lambda=a^{-2}$. 

\emph{Proof of Part 1.}
Let $a=1$. As shown above, we can assume that $\lambda=1$. In this
case the formula simplifies to:
\begin{align*}
\E_{\xib}\Big[\frac{1}{n}\|\X(\of{\w_\lambda^*}S-\wbt)\|^2\Big] &=\frac{\sigma^2}{d}\sum_{i=1}^d\frac{s_i+1}
    {(s_i+1)^2}
=\frac{\sigma^2}{d}\sum_{i=1}^d\frac{1}{s_i+1}\\
&\overset{(*)}{\geq}
  \frac{\sigma^2}{\frac{s}{d} +
  1}=\frac{\sigma^2d}{s+d}\geq \frac{\sigma^2d_\lambda}{s+d_\lambda},
\end{align*}
where $(*)$ follows by applying Jensen's inequality to convex function
$\phi(x)=\frac{1}{x+1}$. 

\emph{Proof of Part 2.}
Let $a=\sqrt{2d}$. As shown above, we can assume that
$\lambda=1/(2d)$. Suppose that multiset $S$ is sampled i.i.d. from some
distribution over set $\{1..n\}$. Similarly as in Corollary
\ref{cor:lb-iid},
we exploit the Coupon Collector's problem, i.e. that if $|S|\leq 
(d-1)\ln(d)$, then with probability at least
$1/2$ there is $i\in\{1..d\}$ such that $s_i=0$ (i.e., one of
the unit vectors $\e_i$ was never selected). Thus,
MSPE can be lower-bounded as follows:
\begin{align*}
\E_S\E_{\xib}\Big[\frac{1}{n}\|\X(\of{\w_\lambda^*}S-\wbt)\|^2\Big]
&\geq
  \frac{1}{2}\,\frac{\sigma^2}{d}\,
  \frac{s_i+a^2\lambda^2}{(s_i+\lambda)^2}=\frac{\sigma^2}{2d}
  \frac{2d \lambda^2}{\lambda^2} = \sigma^2.
\end{align*}
\vspace{-1.4cm}

\end{proofof}

\section{Efficient algorithms for volume sampling}
\label{sec:algorithm}

In this section we propose algorithms for efficiently performing
volume sampling. This addresses the
question posed by \cite{avron-boutsidis13}, asking for a
polynomial-time algorithm for the case when
the size of set $S$ is $s>d$. \cite{efficient-volume-sampling}
gave an algorithm for the case when $s=d$, which was later improved by \cite{more-efficient-volume-sampling}, running in time
$O(nd^3)$. Recently, \cite{dual-volume-sampling} offered an algorithm
for arbitrary $s$, which has complexity $O(n^4 s)$. We propose two new methods, which use
our reverse iterative sampling technique to achieve faster running times
for volume sampling of any size $s$. Both algorithms apply to the more general setting
of $\lambda$-regularized volume sampling (described in Section
\ref{sec:regularized}), and produce standard volume sampling as a
special case for $\lambda=0$ and $s\geq d$. The first algorithm has a
deterministic runtime of $O((n\!-\!s\!+\!d)nd)$, whereas the second one
is an accelerated version which with high probability finishes in time $O(nd^2)$.
Thus, we obtain a direct
improvement over \cite{dual-volume-sampling} by a factor of at least
$n^2$, and in the special case of $s=d$, by a factor of $d$ over the
algorithm of \cite{more-efficient-volume-sampling}.

Our algorithms implement reverse iterative sampling from
Theorem \ref{t:vol}. We start with the full index set
$S=\{1..n\}$. In one step of the algorithm, we remove one
row from set $S$. After removing $q$ rows, we are
left with the index set of size $n-q$ 
that is distributed according to volume sampling 
for row set size $n-q$, and we proceed until our set $S$
has the desired size $s$. The primary cost of the procedure is
updating the conditional distribution $P(\Sm|S)$ at every step. 
It is convenient to store it using
the unnormalized weights defined in \eqref{eq:regularized} which, via Sylvester's
theorem, can be computed as $h_i=1- \x_i^\top(\X_S^\top\X_S +
\lambda\I)^{-1}\x_i$ (For the sake of
generality  we state the methods for $\lambda$-regularized volume
sampling).
Doing this naively, we would first compute $(\X_S^\top\X_S + \lambda\I)^{-1}$
which takes $O(nd^2)$ time\footnote{We are primarily interested in the
  case where $n\geq d$ and we state our time bounds under that
  assumption. However, when $\lambda>0$, our techniques can be easily adapted to the
  case of $n<d$.}. After that for each $i$, we
would multiply this matrix by $\x_i$ in time $O(d^2)$ to get the $h_i$'s. 
The overall runtime of this naive method becomes:
\begin{align*}
\overbrace{\text{\# of steps}}^{n-s}\ \times
  \ (\,\overbrace{\text{compute }(\X_S^\top\X_S + \lambda\I)^{-1}}^{O(nd^2)}\ +\ \overbrace{\text{\# of weights}}^{\leq n}\ \times \ 
  \overbrace{\text{compute $h_i$}}^{O(d^2)}\,)\ =\ O((n-s)nd^2).
\end{align*}
Both the computation of matrix inverse and the weights $h_i$ can be 
made more efficient. First, the matrix
$(\X_S^\top\X_S+\lambda\I)^{-1}$ can be
computed from the one obtained in the previous step by using the
Sherman-Morrison formula. This lets us update it in $O(d^2)$ time
instead of $O(nd^2)$. Furthermore, we propose two strategies for dealing
with the cost of maintaining the weights:
\begin{enumerate}
\item Update all $h_i$'s at every step using Sherman-Morrison;
\item Use rejection sampling and only compute the $h_i$'s
needed for the rejection trials
(This avoids computing all $h_i$'s, but makes the
computation of each needed $h_i$ more expensive).
\end{enumerate}
As we can see, there is a trade-off between those
strategies. In the following lemma, we will show that updating the value of
$h_i$, given its value in the  previous step only costs $O(d)$ time as
opposed to $O(d^2)$. However, the number of $h_i$'s that need to be
computed for rejection sampling (explained shortly) can be far smaller.
\begin{lemma}\label{l:invariant}
For any matrix $\X\in\R^{n\times d}$, set $S\subseteq\{1..n\}$ and two
distinct indices $i,j\in S$, we have
\begin{align*}
1 - \x_j^\top(\X_\Sm^\top\X_\Sm\!+\lambda\I)^{-1}\x_j = 
 h_j - (\x_j^\top\v)^2,
\end{align*}
where $h_j=1- \x_j^\top(\X_S^\top\X_S +
\lambda\I)^{-1}\x_j$ and
$\v = \frac{1}{\sqrt{h_i}}(\X_S^\top\X_S+\lambda\I)^{-1}\x_i$.
\end{lemma}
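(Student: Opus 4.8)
The plan is to reduce the identity to a single application of the Sherman-Morrison formula. Adopting the shorthand $\of{\Z_\lambda}S = \X_S^\top\X_S + \lambda\I$ used in the previous section, I would first observe that deleting the row $\x_i^\top$ from $\X_S$ gives the rank-one downdate $\X_\Sm^\top\X_\Sm + \lambda\I = \of{\Z_\lambda}S - \x_i\x_i^\top$, where $\of{\Z_\lambda}S$ is invertible (for $\lambda>0$ trivially, and for $\lambda=0$ because $\det(\X_S^\top\X_S)>0$ along any path actually realized by the sampling procedure). Since $h_i = 1 - \x_i^\top\of{\Z_\lambda}S^{-1}\x_i \ne 0$ whenever $i$ is a candidate for removal, Sherman-Morrison applies and yields
$$(\of{\Z_\lambda}S - \x_i\x_i^\top)^{-1} = \of{\Z_\lambda}S^{-1} + \frac{1}{h_i}\,\of{\Z_\lambda}S^{-1}\x_i\x_i^\top\of{\Z_\lambda}S^{-1}.$$

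Next I would sandwich this identity between $\x_j^\top$ on the left and $\x_j$ on the right, obtaining
$$\x_j^\top(\X_\Sm^\top\X_\Sm+\lambda\I)^{-1}\x_j = \x_j^\top\of{\Z_\lambda}S^{-1}\x_j + \frac{(\x_j^\top\of{\Z_\lambda}S^{-1}\x_i)^2}{h_i}.$$
Recognizing that $\v = \tfrac{1}{\sqrt{h_i}}\of{\Z_\lambda}S^{-1}\x_i$, the last term is exactly $(\x_j^\top\v)^2$. Subtracting both sides from $1$ and invoking the definition $h_j = 1 - \x_j^\top\of{\Z_\lambda}S^{-1}\x_j$ gives the claimed identity
$$1 - \x_j^\top(\X_\Sm^\top\X_\Sm+\lambda\I)^{-1}\x_j = h_j - (\x_j^\top\v)^2.$$

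There is no genuine obstacle here; the only points requiring care are the invertibility of $\of{\Z_\lambda}S$ and the fact that $h_i\ne 0$, both of which hold on every path taken by reverse iterative (regularized) volume sampling, so that $\v$ is well defined and Sherman-Morrison is legitimate. The content of the lemma is computational rather than conceptual: it shows that the next-step weight $1 - \x_j^\top(\X_\Sm^\top\X_\Sm+\lambda\I)^{-1}\x_j$ is obtained from the current weight $h_j$ by subtracting a single squared inner product $(\x_j^\top\v)^2$, with $\v$ depending only on the removed index $i$; since computing each $\x_j^\top\v$ costs only $O(d)$, all weights can be refreshed in $O(nd)$ time per step instead of the $O(nd^2)$ of recomputing the quadratic forms from scratch.
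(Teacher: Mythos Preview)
Your proof is correct and follows essentially the same route as the paper: a single application of the Sherman--Morrison formula to the rank-one downdate $\of{\Z_\lambda}S - \x_i\x_i^\top$, followed by sandwiching with $\x_j^\top(\cdot)\x_j$ and identifying the correction term with $(\x_j^\top\v)^2$. The only cosmetic difference is direction---the paper starts from $h_j - (\x_j^\top\v)^2$ and unwinds to the left-hand side, whereas you start from Sherman--Morrison and arrive at the right-hand side---and your explicit remarks on invertibility and $h_i\neq 0$ are a welcome addition that the paper leaves implicit.
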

\begin{proof}
Letting $\of{\Z_\lambda}S=\X_S^\top\X_S+\lambda\I$, we have
\begin{align*}
h_j - (\x_j^\top\v)^2 
&= 1- \x_j^\top \of{\Z_\lambda}S^{-1}\x_j -
      \frac{(\x_j^\top \of{\Z_\lambda}S^{-1}\x_i)^2}{1-\x_i^\top \of{\Z_\lambda}S^{-1}\x_i}\\
&=1- \x_j^\top \of{\Z_\lambda}S^{-1} \x_j -
  \frac{\x_j^\top \of{\Z_\lambda}S^{-1} \x_i\x_i^\top \of{\Z_\lambda}S^{-1}\x_j}
{1-\x_i^\top \of{\Z_\lambda}S^{-1}\x_i}\\
&=1 - \x_j^\top\left( \of{\Z_\lambda}S^{-1} +
  \frac{\of{\Z_\lambda}S^{-1}\x_i\x_i^\top \of{\Z_\lambda}S^{-1}}
  {1-\x_i^\top \of{\Z_\lambda}S^{-1}\x_i}\right)\x_j  \\
&\overset{(*)}{=}
1- \x_j^\top(\X_\Sm^\top\X_\Sm + \lambda\I)^{-1}\x_j,
\end{align*}
where $(*)$ follows from the Sherman-Morrison formula.
\end{proof}
Thus the overall time complexity of reverse iterative sampling when using the first
strategy goes down by a factor of $d$ compared to the naive version
(except for an initialization cost which stays at $O(nd^2)$).
\begin{theorem}\label{t:volsamp}
Algorithm \volsamp\ produces an index set $S$ of rows distributed
according to $\lambda$-regularized size
$s$ volume sampling over $\X$ in time $O((n\!-\!s\!+\!d)nd)$.
\end{theorem}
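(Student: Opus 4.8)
The plan is to verify two things separately: that \volsamp\ draws $S$ from the correct distribution, and that it runs within the claimed time bound. Correctness is essentially immediate from the definition. First I would observe that \volsamp\ starts from $S=\{1..n\}$ and, while $|S|>s$, removes an index $i$ with probability proportional to $h_i=1-\x_i^\top(\X_S^\top\X_S+\lambda\I)^{-1}\x_i$; by Sylvester's theorem this is exactly proportional to $\det(\X_\Sm^\top\X_\Sm+\lambda\I)/\det(\X_S^\top\X_S+\lambda\I)$, so the conditional law of $\Sm$ given $S$ coincides with $P(\Sm\mid S)$ from \eqref{eq:regularized}. Chaining these conditionals along the path from $\{1..n\}$ down to a set of size $s$ is, by definition, $\lambda$-regularized size $s$ volume sampling (and for $\lambda=0$ it reduces to the volume sampling of Theorem \ref{t:vol}). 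I would also note in passing that the index $i$ removed at each step has $h_i>0$, since it was drawn proportional to $h_i$, which guarantees the Sherman--Morrison updates used below are well defined.

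For the running time, the idea is to maintain two objects across iterations: the matrix $\M\defeq(\X_S^\top\X_S+\lambda\I)^{-1}$ and the vector of current weights $(h_i)_{i\in S}$. Computing $\M$ for $S=\{1..n\}$ costs $O(nd^2)$, and computing all $n$ initial weights $h_i=1-\x_i^\top\M\x_i$ costs $O(d^2)$ apiece, hence $O(nd^2)$ in total; this is the one-time initialization cost. In a single iteration, once an index $i$ has been sampled in time $O(|S|)$, I would: update $\M$ to $(\X_\Sm^\top\X_\Sm+\lambda\I)^{-1}$ via Sherman--Morrison in $O(d^2)$; form the vector $\v=\frac{1}{\sqrt{h_i}}\M\x_i$ in $O(d^2)$; and then refresh each remaining weight using Lemma \ref{l:invariant}, which gives the new $h_j$ as $h_j-(\x_j^\top\v)^2$, a single inner product costing $O(d)$, i.e.\ $O(|S|d)=O(nd)$ for the whole sweep. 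Thus each of the $n-s$ iterations costs $O(nd)$, for a total of $O((n-s)nd)$; adding the $O(nd^2)$ initialization and using $n-s+d\geq d$ gives $O((n-s)nd+nd^2)=O((n-s+d)nd)$, as claimed.

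The argument is largely bookkeeping, so I do not expect a real obstacle; the place that needs care is ensuring the weight refresh via Lemma \ref{l:invariant} only references quantities available at the start of the iteration — the old $\M$, the old $h_i$, and the old $h_j$'s — rather than the already-updated inverse, and that the two maintained structures ($\M$ and the $h_i$'s) are jointly sufficient, in particular that $\v$ is computable from $\M$ and $h_i$ alone. A secondary point worth stating explicitly is that the initialization cost $O(nd^2)$ genuinely fits inside $O((n-s+d)nd)$ because $d\leq n-s+d$, so it is absorbed by the $+d$ term and never dominates the stated bound.
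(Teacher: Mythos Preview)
Your proposal is correct and follows essentially the same approach as the paper: verify the loop invariants for $\Z$ and the $h_i$'s via Sherman--Morrison and Lemma~\ref{l:invariant}, then tally $O(nd^2)$ initialization plus $O(nd)$ per iteration over $n-s$ iterations to get $O((n-s+d)nd)$. One small point of presentation: in your per-iteration bullet list you write ``update $\M$'' before ``form $\v$'', but as you yourself note later, $\v$ must be built from the \emph{old} inverse (this is exactly the order the algorithm uses: compute $\v$ from the current $\Z$, refresh the $h_j$'s, then update $\Z\leftarrow\Z+\v\v^\top$); just make sure the final write-up lists the steps in that order.
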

\begin{proof}
Using Lemma \ref{l:invariant} for $h_i$ and the Sherman-Morrison
formula for $\Z$, the following invariants hold at the beginning of
the {\bf while} loop: 
\begin{align*}
h_i = 1 - \x_i^\top(\X_S^\top\X_S+\lambda\I)^{-1}\x_i
\qquad
\text{and}
\qquad
\Z= (\X_S^\top\X_S + \lambda\I)^{-1}.
\end{align*}
%
%
Runtime: Computing the
initial $\Z=(\X^\top\X+\lambda\I)^{-1}$ takes $O(nd^2)$, as does
computing the initial values of $h_j$'s. Inside the \textbf{while}
loop, updating $h_j$'s takes $O(|S| d)=O(nd)$ and updating $\Z$ takes
$O(d^2)$. The overall runtime becomes $O(nd^2 + (n-s)nd) =
O((n-s+d)nd)$. 
\end{proof}

\vspace{5mm}
\noindent\begin{minipage}{\textwidth}
   \centering
\begin{minipage}{.45\textwidth}
\small
     \centering
\captionof{algorithm}{{\volsamp}($\X,s,\lambda$)}
  \begin{algorithmic}[1]
    \STATE $\Z\leftarrow (\X^\top\X+\lambda\I)^{-1}$\label{line:inv}   
    \STATE $\forall_{i\in\{1..n\}} \quad h_i\leftarrow 1-\x_i^\top \Z\x_i$
    \STATE $S \leftarrow \{1..n\}$
    \STATE {\bf while} $|S|>s$
    \STATE \quad Sample $i \propto h_i$ out of $S$
    \STATE \quad $S\leftarrow S - \{i\}$
    \STATE \quad $\v \leftarrow \Z\x_i /\sqrt{h_i}$
    \STATE \quad $\forall_{j\in S}\quad  h_j\leftarrow h_j -  (\x_j^\top\v)^2$
    \STATE \quad $\Z \leftarrow \Z + \v\v^\top$  
    \STATE {\bf end} 
    \RETURN $S$
 \end{algorithmic}
\label{alg:volsamp}
\vspace{7mm}
   \end{minipage}
   \begin{minipage}{.45\textwidth}
\small
     \centering 
     \captionof{algorithm}{{\lazy}($\X,s,\lambda$)}
\label{alg:lazyvolsamp}
\begin{algorithmic}[1]
  \STATE $\Z\leftarrow (\X^\top\X+\lambda\I)^{-1}$   
  \STATE $S \leftarrow \{1..n\}$
  \STATE {\bf while} $|S|>\max\{s,2d\}$
  \STATE \quad \textbf{repeat}\label{line:repeat}
  \STATE \quad\quad Sample $i$ uniformly out of $S$
  \STATE \quad\quad $h_i \leftarrow 1-\x_i^\top \Z\x_i$
\STATE \quad\quad Sample $A \sim \text{Bernoulli}(h_i)$
  \STATE \quad \textbf{until} $A=1$
  \STATE \quad $S\leftarrow S - \{i\}$
  \STATE \quad $\Z \leftarrow \Z + h_i^{-1}\Z\x_i\x_i^\top\Z$  
  \STATE {\bf end} 
  \STATE {\bf if} $s<2d$,\quad $S\leftarrow
  $ \volsamp($\X_S,s,\lambda$)\quad{\bf end}
  \RETURN $S$
\end{algorithmic}
   \end{minipage}
\end{minipage}
\vspace{4mm}

Next we present algorithm \lazy, which is based on the rejection
sampling strategy. Our key observation is that updating the full
conditional distribution $P(\Sm|S)$ is wasteful, since the distribution
changes very slowly throughout the procedure. Moreover, the
unnormalized weights $h_i$, which are computed in the process are all
bounded by 1. Thus, to sample from the correct distribution at any
given iteration, we can employ rejection sampling as follows: 
\begin{enumerate}
\setlength\itemsep{0mm}
\item Sample $i$ uniformly from set $S$,
\item Compute $h_i$,
\item Accept with probability $h_i$,
\item Otherwise, draw another sample.
\end{enumerate}
Note that this rejection sampling can be employed locally, within each
iteration of the algorithm. Thus, one rejection does not revert us
back to the beginning of the algorithm.
Moreover, if the probability of acceptance is high, then this strategy
requires computing only a small number of weights per iteration of the algorithm, as
opposed to updating all of them. This turns out to be the case for
a majority of the steps of the algorithm, except at the very end (for $s\leq 2d$), were the
conditional probabilities start changing more drastically. At that
point, it becomes more efficient to use the first algorithm, \volsamp.

\begin{theorem}
\label{thm:exact}
For any $\lambda,s\geq 0$, and $\delta\in(0,1)$, algorithm \lazy\ samples according to
$\lambda$-regularized size $s$ volume sampling, and with probability at least
$1-\delta$ runs in time
$$O\bigg(\Big(n+\log\big(n/d\big)
    \log\big(1/\delta\big)\Big)d^2\bigg).$$ 
\end{theorem}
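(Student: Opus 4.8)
The plan is to argue correctness and running time separately: correctness will hold with probability one, while only the running time is the statement that holds with probability at least $1-\delta$.

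\emph{Correctness.} First I would verify the two loop invariants of \lazy. Since $\X_\Sm^\top\X_\Sm=\X_S^\top\X_S-\x_i\x_i^\top$, the Sherman--Morrison formula shows that the update $\Z\leftarrow\Z+h_i^{-1}\Z\x_i\x_i^\top\Z$ preserves $\Z=(\X_S^\top\X_S+\lambda\I)^{-1}$ at the top of the \textbf{while} loop; hence each weight $h_i=1-\x_i^\top\Z\x_i$ equals $\det(\X_\Sm^\top\X_\Sm+\lambda\I)/\det(\X_S^\top\X_S+\lambda\I)$ by Sylvester's theorem, and lies in $(0,1)$ because $\x_i\x_i^\top\preceq\X_S^\top\X_S\prec\X_S^\top\X_S+\lambda\I$ for $i\in S$. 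Consequently one pass of the \textbf{repeat} loop (sample $i$ uniformly, accept with probability $h_i$) terminates almost surely and returns $i$ with conditional probability $P(\Sm\mid S)\propto h_i$, exactly the transition of $\lambda$-regularized reverse iterative sampling~\eqref{eq:regularized}. Thus the \textbf{while} loop performs $\lambda$-regularized reverse iterative sampling from $\{1..n\}$ down to a set of size $\max\{s,2d\}$; if $s\ge 2d$ we stop there, and if $s<2d$ the call \volsamp$(\X_S,s,\lambda)$ simply continues the same Markov chain on subsets (the transition $P(\Sm\mid S)$ depends only on $S$, and $(\X_S)_T=\X_T$ for $T\subseteq S$), so by the composition property (Corollary~\ref{c:comp} for $\lambda=0$, and verbatim in general) together with Theorem~\ref{t:volsamp}, the returned set has the law of $\lambda$-regularized size-$s$ volume sampling over $\X$.

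\emph{Running time.} The quantitative heart is the observation that, throughout the \textbf{while} loop, a single \textbf{repeat} trial accepts with probability at least $\tfrac12$: by~\eqref{eq:reg-sum}, $\tfrac1{|S|}\sum_{i\in S}h_i=1-\tfrac{d}{|S|}+\tfrac{\lambda}{|S|}\tr\big((\X_S^\top\X_S+\lambda\I)^{-1}\big)\ge 1-\tfrac{d}{|S|}\ge\tfrac12$, since the loop only runs while $|S|>\max\{s,2d\}\ge 2d$. Given this, I would account the cost as: Line~1 takes $O(nd^2)$; the loop runs at most $n$ times, each iteration doing one $O(d^2)$ Sherman--Morrison update of $\Z$ plus some \textbf{repeat} trials, each costing $O(d^2)$ to form $\x_i^\top\Z\x_i$; and if $s<2d$ the final \volsamp\ call runs on a matrix with $O(d)$ rows, hence costs $O(d^3)=O(nd^2)$ by Theorem~\ref{t:volsamp} and $d\le n$. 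So the total is $O\big((n+N)d^2\big)$, where $N$ is the total number of \textbf{repeat} trials. To bound $N$ I would order all trials globally: conditioned on the past, each trial accepts with probability $\ge\tfrac12$, and the loop halts after $m\le n$ acceptances, so $N$ is stochastically dominated by the number $\mathrm{NegBin}(m,\tfrac12)$ of tosses of a fair coin needed to see $m$ heads. A Chernoff bound for the binomial lower tail then gives $\Pr[N>2m+t]\le\Pr[\mathrm{Bin}(2m+t,\tfrac12)<m]\le\exp\!\big(-\Omega(\min\{t^2/m,\,t\})\big)$, so with probability at least $1-\delta$ we get $N=O(n+\log(1/\delta))$, and hence a total running time of $O\big((n+\log(1/\delta))d^2\big)$. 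Finally, any trial at all forces $n>2d$, so $\log(n/d)\ge\log 2$ and this bound is subsumed by the claimed $O\big((n+\log(n/d)\log(1/\delta))d^2\big)$; when $n\le 2d$ the \textbf{while} loop does no trials and the bound is immediate.

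\emph{Main obstacle.} The one real difficulty is the tail bound on $N$: the successive sets $S$ are themselves random, so the per-iteration trial counts are neither independent nor identically distributed, and one cannot simply sum variances of fixed geometrics. The way around it, sketched above, is to discard the set structure entirely — every trial, no matter what the current $S$ is, succeeds with conditional probability at least $\tfrac12$ — which lets one couple the global trial sequence to i.i.d.\ fair coins and extract the concentration from a single negative-binomial (equivalently binomial lower-tail) estimate; a finer accounting that tracks the size-dependent acceptance probability $1-d/|S|$ would only improve lower-order terms and is not needed for the stated bound.
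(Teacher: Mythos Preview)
Your proof is correct, and for the running-time tail bound you actually take a different and arguably cleaner route than the paper. Both arguments establish correctness the same way (Sherman--Morrison for the $\Z$ update, Sylvester for $h_i$, and the acceptance probability $\tfrac{1}{|S|}\sum_i h_i\ge 1-d/|S|\ge\tfrac12$ while $|S|>2d$), and both account the arithmetic cost identically. The divergence is in how the total number of rejection trials is controlled. The paper couples each $R_t$ to an independent $\hat R_t\sim\mathrm{Ge}\big(\tfrac{t-d}{t}\big)$, then partitions $\{2d,\dots,n\}$ into $J=\log(n/d)$ dyadic blocks and applies a geometric tail bound within each block followed by a union bound; this is where the $\log(n/d)$ factor in the theorem statement comes from. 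You instead discard the size-dependent acceptance rate entirely, couple the whole trial sequence to i.i.d.\ fair coins, and use a single negative-binomial/binomial-lower-tail Chernoff bound, obtaining $N=O(n+\log(1/\delta))$. Your approach is more elementary and in fact gives a tighter bound; the paper's dyadic decomposition exploits the finer structure $q_t\ge\tfrac{t-d}{t}$ but, because of the union bound over $J$ blocks, pays an extra $\log(n/d)$ factor that your global argument avoids.

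Two small remarks. First, your claim that $h_i\in(0,1)$ strictly is only guaranteed when $\lambda>0$; for $\lambda=0$ one can have $h_i=0$ (if removing $i$ drops the rank), but this does not affect correctness or termination since the average acceptance probability stays at least $\tfrac12$. Second, invoking Corollary~\ref{c:comp} for the composition step is slightly misplaced in the regularized case: that corollary relies on the closed-form $P(S)$ which is unavailable when $\lambda>0$. Your parenthetical remark that the transition $P(\Sm\mid S)$ depends only on $S$ is the real reason the handoff to \volsamp\ is valid---it is just the Markov property of reverse iterative sampling---so you might state it that way directly.
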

\begin{proof}
We analyze the efficiency of rejection sampling in {\lazy}. Let $R_t$
be a random variable corresponding to the number of trials needed in
the \textbf{repeat} loop from line \ref{line:repeat} in {\lazy} at the point when $|S|=t$. 
Note that conditioning on the algorithm's history,
$R_t$ is distributed according to geometric
distribution $\text{Ge}(q_t)$ with success probability:
\begin{align*}
q_t = \frac{1}{t}\sum_{i\in S}\big(1-\x_i^\top (\X_S^\top\X_S+\lambda\I)^{-1}\x_i\big)\geq \frac{t-d}{t}\geq \frac{1}{2}.
\end{align*}
Thus, even though variables $R_t$ are not themselves independent, they
can be upper-bounded by a sequence of independent variables
$\Rh_t\sim \text{Ge}(\frac{t-d}{t})$. The expectation of the total
number of trials in {\lazy}, $\bar{R}=\sum_t R_t$, can thus be bounded
as follows: 
\begin{align*}
\E[\bar{R}]&\leq \sum_{t=2d}^n\E[\Rh_t]
             =\sum_{t=2d}^n\frac{t}{t-d}\leq 2n.
\end{align*}

Next, we will obtain a similar bound with
high probability instead of in expectation. Here, we will have to use
the fact that the variables $\Rh_t$ are independent, which means that
we can upper-bound their sum with high probability using standard
concentration bounds for geometric distribution. For example, using
Corollary 2.2 from 
\cite{geometric-tail-bounds} one can immediately show that with
probability at least $1-\delta$ we have $\bar{R} =
O(n\ln\delta^{-1})$. However, more careful analysis shows an even
better dependence on $\delta$. 

\begin{lemma}
\label{lm:tail-bound}
Let $\Rh_t\sim \textnormal{Ge}(\frac{t-d}{t})$ be independent random
variables. Then w.p. at least  $1-\delta$
\[\sum_{t=2d}^n \Rh_t = O\Big(n +
    \log\big(n/d\big)\log\big(1/\delta\big)\Big). \] 
\end{lemma}
Each trial of rejection sampling requires computing one weight $h_i$ in
time $O(d^2)$. The overall time complexity of \lazy\ thus includes computation
and updating of matrix $\Z$ (in time $O(nd^2)$), rejection sampling which
takes $O\left(\left(n+\log\left(\frac{n}{d}\right)\log\left(\frac{1}{\delta}\right)\right) d^2\right)$
time, and (if $s<2d$) the {\volsamp} portion, taking $O(d^3)$.
\end{proof}

\begin{proofof}{Lemma}{\ref{lm:tail-bound}}
As observed by \cite{geometric-tail-bounds}, tail-bounds for the sum
of geometric random variables depend on the minimum acceptance
probability among those variables. Note that for the vast majority of
$\Rh_t$'s the acceptance probability is very close to 1, so
intuitively we should be able to take advantage of this to improve our
tail bounds. To that end, we partition the variables into groups of
roughly similar acceptance probability and then separately bound the
sum of variables in each group. Let $J=\log(\frac{n}{d})$ (w.l.o.g. assume
that $J$ is an integer). For $1\leq j\leq J$, let
$I_j=\{d2^j, d2^j+1,..,d2^{j+1}\}$ represent the $j$-th
partition. We use the following notation for  each partition:
\begin{align*}
\bar{R}_j &\defeq \sum_{t\in I_j}R_t,
&  \mu_j &\defeq\E[\bar{R}_j],
&r_j &\defeq \min_{t\in I_j}\frac{t-d}{t}, 
& \gamma_j &\defeq \frac{\log(\delta^{-1})}{d2^{j-2}} + 3.
\end{align*}
Now, we apply Theorem 2.3 of \cite{geometric-tail-bounds} to
$\bar{R}_j$, obtaining
\begin{align*}
P(\bar{R}_j&\geq \gamma_j\mu_j) \leq
  \gamma_j^{-1}(1-r_j)^{(\gamma_j-1-\ln\gamma_j)\mu_j} 
\overset{(1)}{\leq}  (1-r_j)^{\gamma_j\mu_j/4}  \overset{(2)}{\leq} 2^{-j\gamma_jd2^{j-2}},
\end{align*}
where $(1)$ follows since $\gamma_j\geq 3$, and $(2)$ holds because
$\mu_j\geq d2^j$ and $r_j\geq 1-2^{-j}$. Moreover, for the chosen
$\gamma_j$ we have
\begin{align*}
j\gamma_jd2^{j-2} &= j\log(\delta^{-1}) + 3jd2^{j-2} 
\geq \log(\delta^{-1}) + j = \log(2^j\delta^{-1}).
\end{align*}
Let $A$ denote the event that $\bar{R}_j\leq \gamma_j\mu_j$ for all
$j\leq J$. Applying union bound, we get
\begin{align*}
P(A) &\geq 1- \sum_{j=1}^J
  P(\bar{R}_j\geq \gamma_j\mu_j) 
\geq 1- \sum_{j=1}^J
       2^{-\log(2^j\delta^{-1})}=1-\sum_{j=1}^J \frac{\delta}{2^j}\geq
  1-\delta.
\end{align*}
If $A$ holds, then we obtain the desired bound:
\begin{align*}
\sum_{t=2d}^n \Rh_t&\leq \sum_{j=1}^J \gamma_j\mu_j \leq
  \sum_{j=1}^J\left(\frac{\log(\delta^{-1})}{d2^{j-2}} +
  3\right)\,d2^{j+1} 
=8 J \log(\delta^{-1}) + 6\sum_{j=1}^Jd2^j\\
&= O\Big(\log\big(n/d\big)\log\big(1/\delta\big) +  n\Big).
\end{align*}
\vspace{-1.3cm}

\end{proofof}

\section{Experiments}
\label{sec:experiments}

In this section we experimentally evaluate the proposed
volume sampling algorithms in terms of runtime and
in the task of subsampling for linear regression. We use
regularization both for sampling and for prediction, as discussed in
Section \ref{sec:regularized}. The list of implemented algorithms is:
\begin{enumerate}
\item Regularized volume sampling (algorithms \lazy\ and \volsamp),
\item Leverage score sampling\footnote{Regularized variants of
    leverage scores have also been considered in context of kernel
    ridge regression \cite{ridge-leverage-scores}. However, in our experiments
    regularizing leverage scores did not provide any improvements.}
  (LSS) -- a popular i.i.d. sampling  technique
  \citep{randomized-matrix-algorithms}, where examples are selected
  w.p. $P(i) = (\x_i^\top(\X^\top\X)^{-1}\x_i)/d.$
\end{enumerate}
\begin{wrapfigure}{R}{0.55\textwidth}
\vspace{-.4cm}
\begin{tabular}{c|c|c|c|c}
Dataset & $n\times d$ & \!{\volsamp}\! &\!{\lazy}\! & LSS \\
\hline
cadata & $21$k$\times8$  & 33.5s & 0.9s&0.1s\\
MSD & \! 464k$\times$90\! & $>$24hr & 39s& 12s\\
\!cpusmall\! & 8k$\times$12&1.7s&0.4s&0.07s\\
abalone & 4k$\times$8 &0.5s&0.2s&0.03s
\end{tabular}
\captionof{table}{List of regression datasets with runtime comparison
  between {\volsamp} and {\lazy}. We also provide the 
  runtime for i.i.d. sampling with exact leverage scores (LSS).}  
\label{tab:datasets}
\end{wrapfigure}

The experiments were performed on several benchmark linear regression 
datasets from the libsvm repository \citep{libsvm}. Table
\ref{tab:datasets} lists those datasets along with running times
for sampling dimension many columns with each method.
Dataset MSD was too big for {\volsamp} to finish in reasonable time,
however {\lazy} finished in less than 40 seconds.
In Figure \ref{fig:time-plots} we plot the runtime against varying
values of $n$ (using portions of the datasets), to
compare how {\lazy} and {\volsamp} scale with
respect to the data size. We observe that {\lazy}
exhibits linear dependence on $n$, thus it is much better suited for
running on large datasets. 

\begin{figure}
\includegraphics[width=0.5\textwidth]{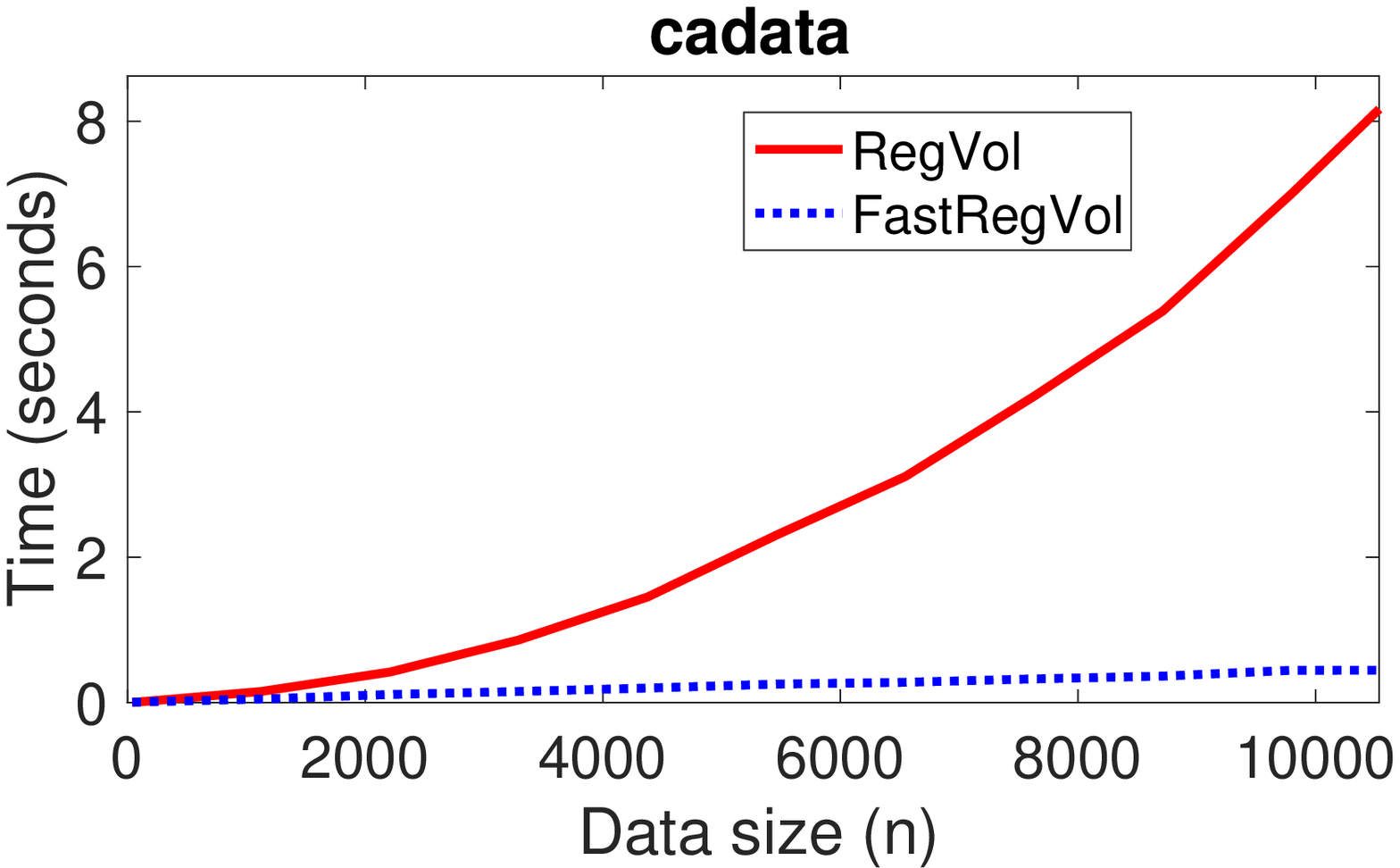}\nobreak
\includegraphics[width=0.5\textwidth]{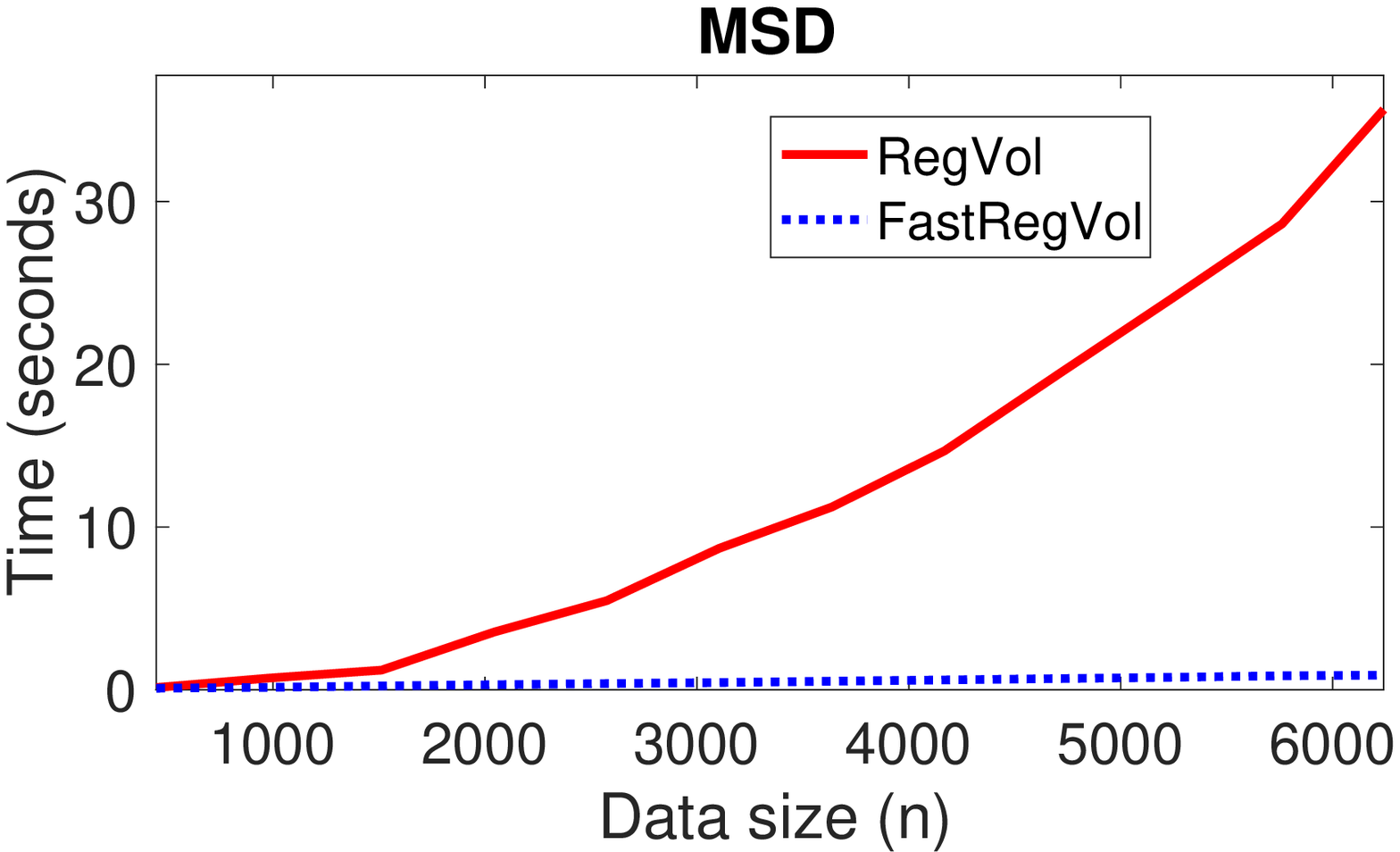}
\includegraphics[width=0.5\textwidth]{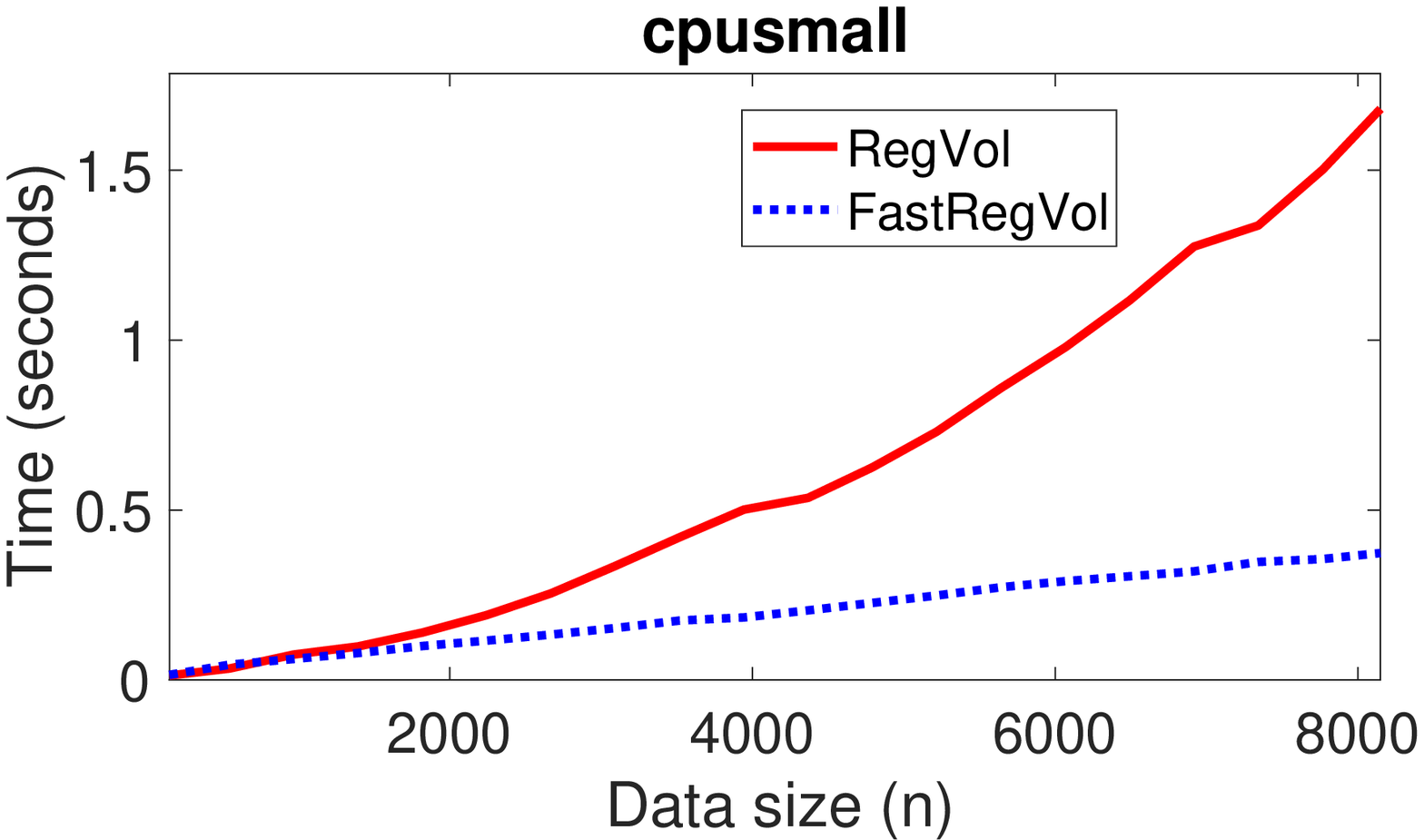}\nobreak
\includegraphics[width=0.5\textwidth]{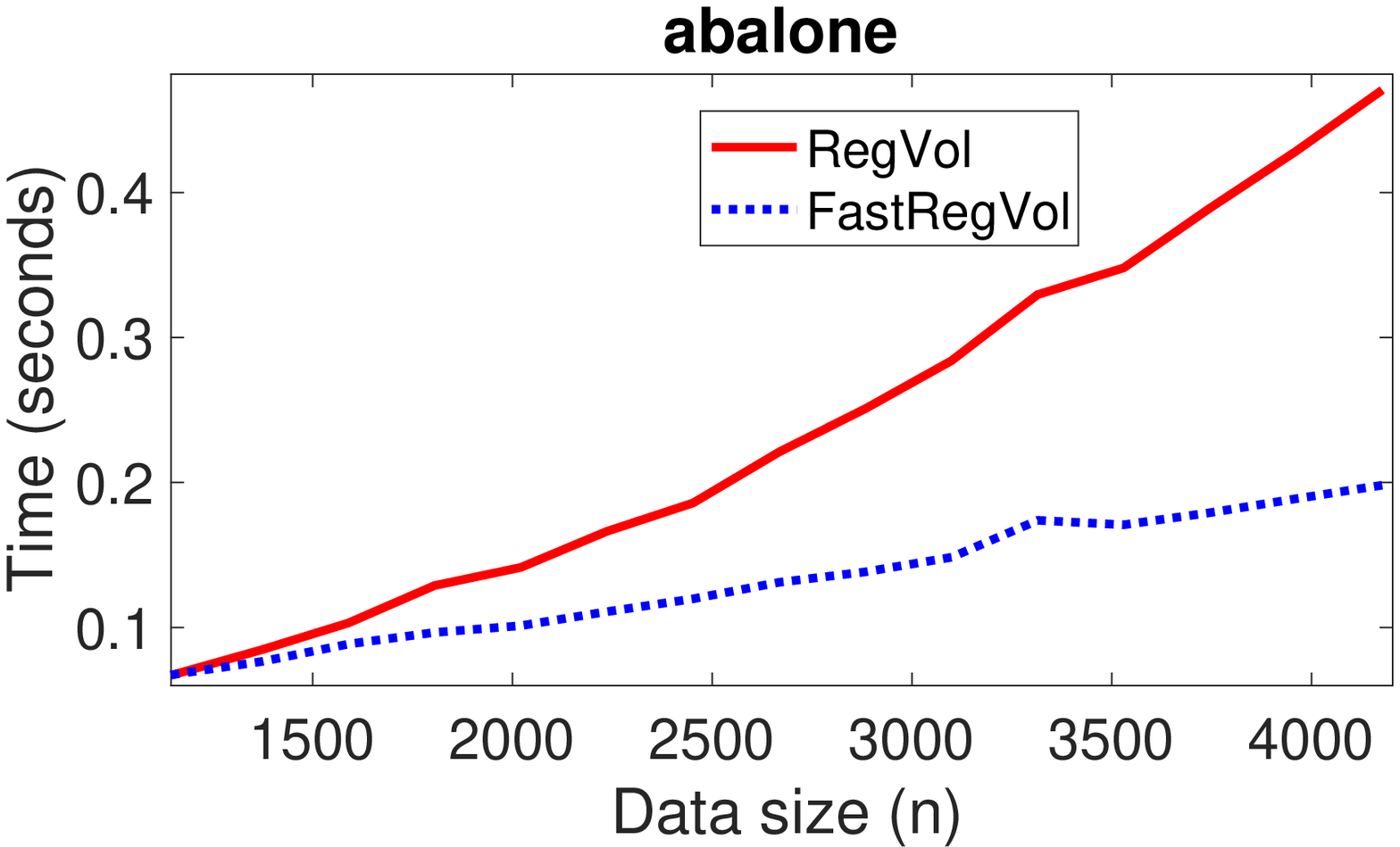}
\caption{Comparison of runtime between {\lazy} 
  and {\volsamp} on four libsvm regression datasets \citep{libsvm}, with the  methods ran on data
  subsets of varying size (n).}
\label{fig:time-plots}
\end{figure}

\subsection{Subset selection for ridge regression}
We applied volume sampling to the task of subset selection for
linear regression, by evaluating the subsampled ridge estimator
$\of{\w_\lambda^*}S$ using the total loss over the full dataset, i.e.
\begin{align*}
\text{Total loss:}&\quad\frac{1}{n}\|\X\of{\w_\lambda^*}S-\y\|^2,\quad
\text{where}\quad\of{\w_\lambda^*}S = (\X_S^\top\X_S+\lambda\I)^{-1}\X_S^\top\y_S.
\end{align*}
We evaluated the estimators for a range of
subset sizes and values of $\lambda$, when the subsets are sampled
according to $\lambda$-regularized volume sampling%
\footnote{Our experiments suggest that using the same
  $\lambda$ for sampling and for computing the ridge estimator works
  best.} 
 and leverage score sampling. The results were averaged over 20 runs of each
experiment. For clarity, Figure \ref{fig:predictions}
shows the results only with one value of $\lambda$ for each
dataset, chosen so that the subsampled ridge estimator performed best
(on average over all samples of preselected size $s$). 
Note that for leverage scores
we did the appropriate rescaling of the instances
before solving for $\of{\w_\lambda^*}S$ 
for the sampled subproblems
(see \cite{randomized-matrix-algorithms} for details). 
Volume sampling does not require any rescaling.
The results on all datasets show that when only a small number of
responses $s$ is obtainable, then regularized volume sampling offers better estimators than
leverage score sampling (as predicted by Theorems \ref{thm:ridge} and \ref{thm:lb-all}). 
The lower-bound from Theorem \ref{thm:lb-all}
part 2 can be observed for dataset cpusmall, where $d=12$ and $d\log
d\approx 30$.

\begin{figure}
\includegraphics[width=0.5\textwidth]{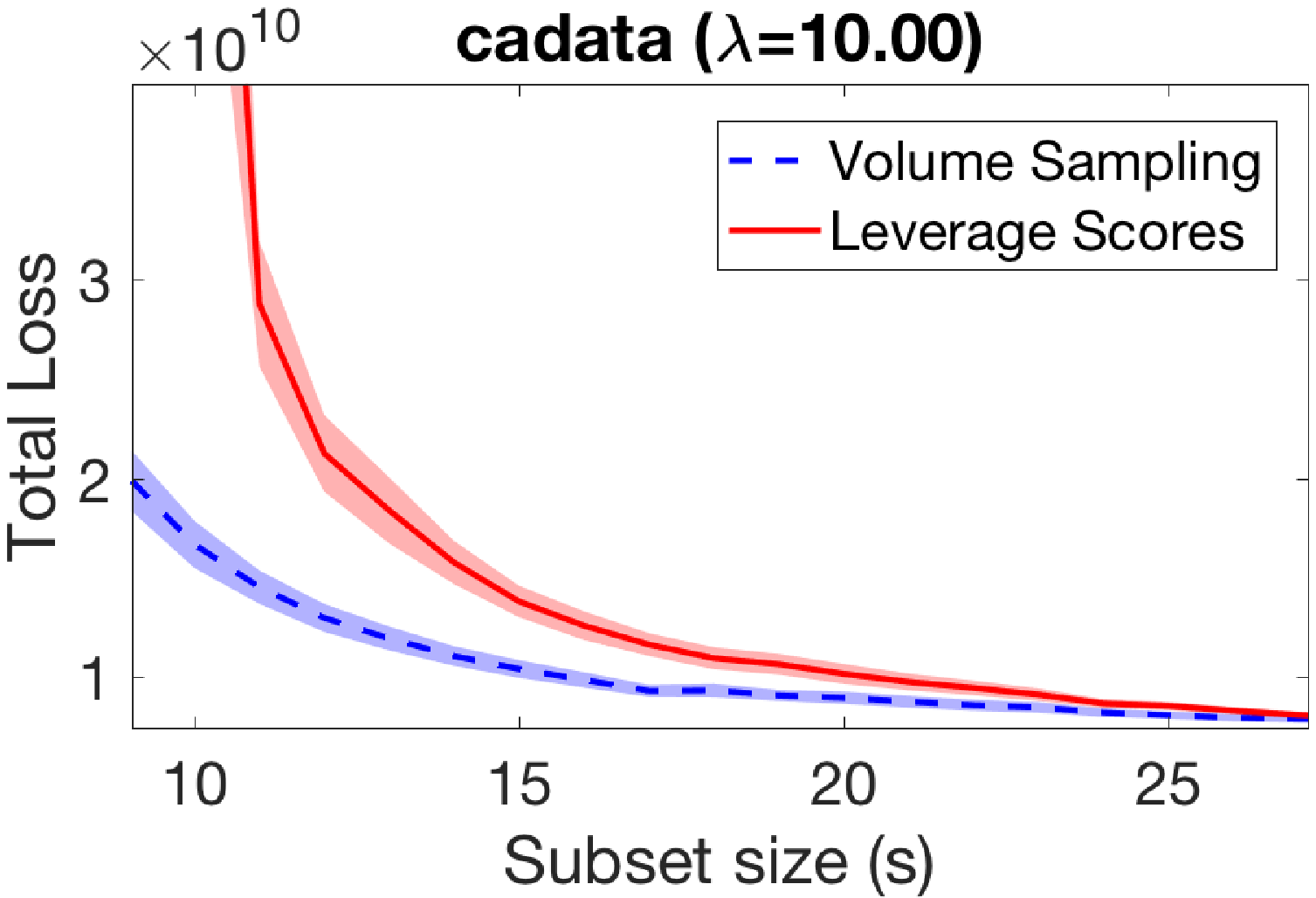}\nobreak
\includegraphics[width=0.5\textwidth]{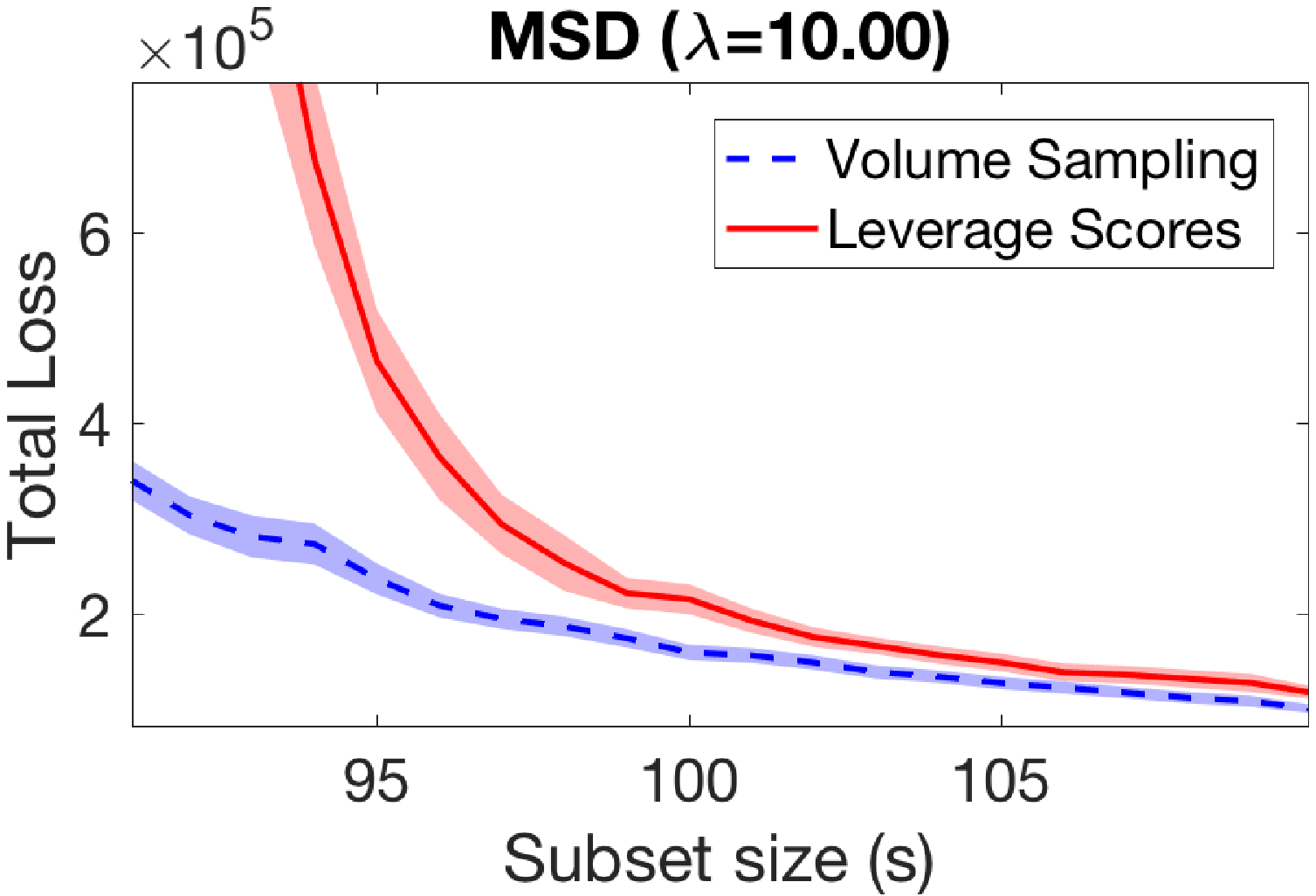}
\includegraphics[width=0.5\textwidth]{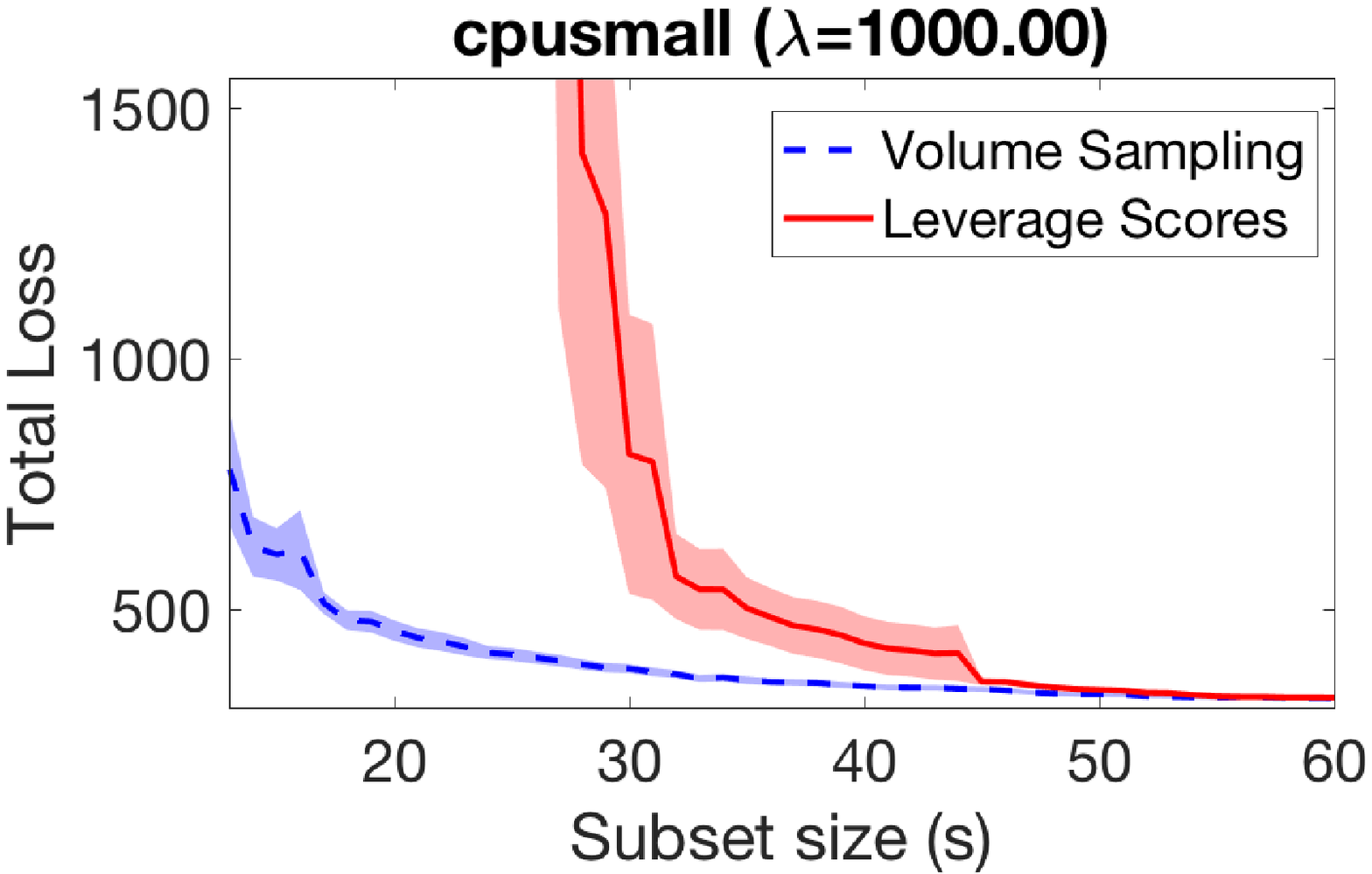}\nobreak
\includegraphics[width=0.5\textwidth]{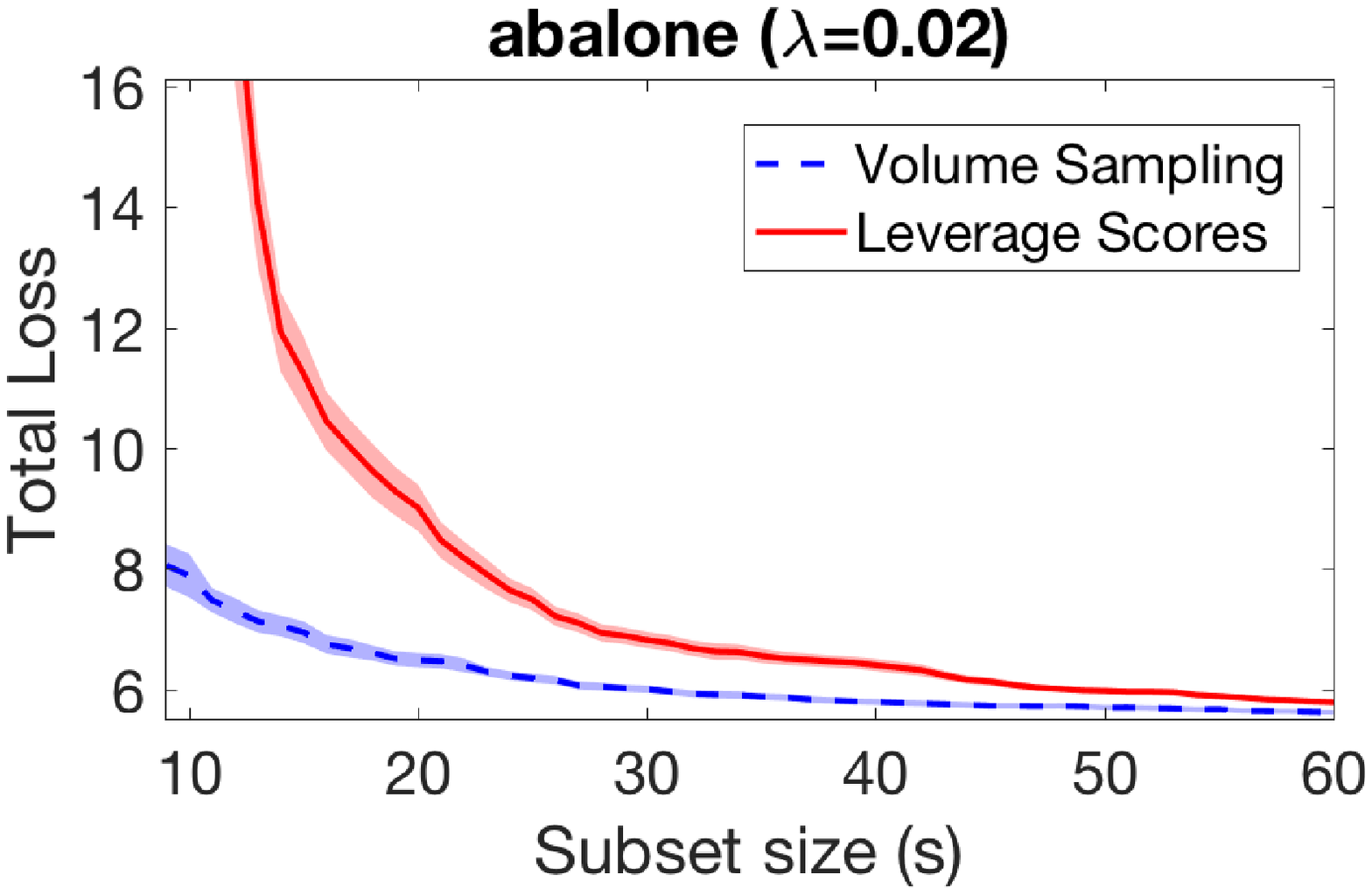}
\caption{Comparison of loss of the subsampled ridge estimator when
  using regularized volume sampling vs using leverage score sampling
  on four datasets.}
\label{fig:predictions}
\end{figure}

\section{Conclusions} \label{sec:conclusions}

Volume sampling is a joint sampling procedures that
produces more diverse samples than i.i.d. sampling.
We developed a method for proving
exact matrix expectation formulas for volume
sampling giving further credence to the fact that this is a
fundamental sampling procedure. 
We also made significant progress on finding an efficient
implementation of this sampling procedure: Our new reverse iterative 
volume sampling algorithm runs in time $O(nd^2)$. Note that 
this running time is within a constant factor of i.i.d. sampling
with exact leverage scores and is a remarkable feat since volume sampling
was only recently shown to be polynomial (that is $O(n^4 s)$ in
\cite{dual-volume-sampling}).

A final long ranging question is how to generalize volume
sampling and the exact matrix expectation formulas to higher order tensors.

\paragraph{Acknowledgments}
Thanks to Daniel Hsu and Wojciech Kot{\l}owski for many
valuable discussions.
This research was supported by NSF grant
IIS-1619271.



\appendix
\section{Inductive proof of Cauchy-Binet}
\label{a:CB}
The most common form of the Cauchy-Binet
equation deals with two real $n\times d$ matrices $\A,\B$:
$\sum_{S\,:\,|S|=d}\det(\A_S^\top\B_S)=\det(\A^\top\B)$. It is
easy to generalize volume sampling and Theorem
\ref{t:vol} to this ``asymmetric'' version. Here we give an
alternate inductive proof.

For $i\in\{1..n\}$, 
let $\a_i,\b_i$ denote the $i$-th row of $\A,\B$, respectively.
For $S\subseteq\{1..n\}$, $\A_S$ consists of all rows indexed
by $S$, and $\A_{-i}$, all except for the $i$-th row.

\begin{theorem}
\label{t:CB}
For $\A,\B \in \R^{n\times d}$ and 
$n-1 \ge s\ge d:$
$$\det(\A^\top\B)
=\frac{1}{{n-d\choose s-d}} \sum_{S\,:\,|S|=s} \det(\A_S^\top\B_S).$$
\end{theorem}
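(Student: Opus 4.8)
The plan is to prove Theorem~\ref{t:CB} by strong induction on the number of rows $n$, establishing the identity simultaneously for \emph{all} dimensions $0\le d\le n$ and \emph{all} subset sizes $d\le s\le n$; this ``uniform in $d$'' formulation is what makes the induction close, for the reason explained below. The base cases are immediate: when $s=n$ there is a single subset $S=\{1..n\}$ and $\binom{n-d}{n-d}=1$, so both sides equal $\det(\A^\top\B)$; and when $d=0$ every $\det(\A_S^\top\B_S)$ is the determinant of an empty matrix, so both sides equal $\binom{n}{s}$. (One could instead reduce the general-$s$ statement to the $s=d$ case by applying the $s=d$ instance to each pair $(\A_S,\B_S)\in\R^{s\times d}$ and using that a fixed $d$-subset lies in exactly $\binom{n-d}{s-d}$ size-$s$ supersets; but since the $s=d$ asymmetric identity is not itself available yet, the self-contained induction below handles everything at once.)

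For the inductive step fix $1\le d\le s\le n-1$ and split the sum over $S$ with $|S|=s$ according to whether $n\in S$:
\begin{align*}
\sum_{|S|=s}\det(\A_S^\top\B_S)=\sum_{|S|=s,\ n\notin S}\det(\A_S^\top\B_S)+\sum_{|T|=s-1,\ T\subseteq\{1..n-1\}}\det(\A_{T\cup\{n\}}^\top\B_{T\cup\{n\}}).
\end{align*}
The first sum ranges over subsets of $\{1..n-1\}$, so by the inductive hypothesis for $\A_{-n},\B_{-n}\in\R^{(n-1)\times d}$ it equals $\binom{n-1-d}{s-d}\det(\A_{-n}^\top\B_{-n})$. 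In the second sum I would write $\A_{T\cup\{n\}}^\top\B_{T\cup\{n\}}=\A_T^\top\B_T+\a_n\b_n^\top$ and apply the rank-one update identity $\det(M+\u\v^\top)=\det M+\v^\top\mathrm{adj}(M)\u$ (valid for singular $M$ as well), splitting the sum into $\sum_{|T|=s-1}\det(\A_T^\top\B_T)$ and $\b_n^\top\bigl(\sum_{|T|=s-1}\mathrm{adj}(\A_T^\top\B_T)\bigr)\a_n$. The first of these is again the inductive hypothesis, contributing $\binom{n-1-d}{s-1-d}\det(\A_{-n}^\top\B_{-n})$, where with the usual convention $\binom{m}{-1}=0$ this correctly reads as $0$ in the degenerate case $s=d$ (there each $\A_T^\top\B_T$ with $|T|=d-1$ is a rank-deficient $d\times d$ matrix).

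The key term is $\sum_{|T|=s-1,\ T\subseteq\{1..n-1\}}\mathrm{adj}(\A_T^\top\B_T)$: read entrywise, its $(j,k)$ entry is $(-1)^{j+k}$ times $\sum_{|T|=s-1}\det\bigl((\A^{\setminus k})_T^\top(\B^{\setminus j})_T\bigr)$, where $\A^{\setminus k},\B^{\setminus j}\in\R^{(n-1)\times(d-1)}$ denote $\A_{-n},\B_{-n}$ with one column deleted. This is exactly the theorem's statement in dimension $d-1$ over $n-1$ rows with subset size $s-1$, so by the inductive hypothesis the whole adjugate sum equals $\binom{n-d}{s-d}\,\mathrm{adj}(\A_{-n}^\top\B_{-n})$. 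Collecting the three contributions and using Pascal's rule $\binom{n-1-d}{s-d}+\binom{n-1-d}{s-1-d}=\binom{n-d}{s-d}$, the total becomes $\binom{n-d}{s-d}\bigl(\det(\A_{-n}^\top\B_{-n})+\b_n^\top\mathrm{adj}(\A_{-n}^\top\B_{-n})\a_n\bigr)$, which by the rank-one identity in reverse equals $\binom{n-d}{s-d}\det(\A_{-n}^\top\B_{-n}+\a_n\b_n^\top)=\binom{n-d}{s-d}\det(\A^\top\B)$, completing the step.

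The main obstacle is conceptual rather than computational: one has to recognize that the ``new'' adjugate term produced by the rank-one expansion is not an extraneous complication but is itself an instance of the theorem one dimension lower, which forces the induction to be set up uniformly in $d$ (a strong induction on $n$ with $d$ a free parameter, bottoming out at $d=0$). Once that is seen, the rest is routine: the rank-one update identity, the entrywise minor interpretation of the adjugate, bookkeeping of the degenerate subcase $s=d$, and Pascal's rule.
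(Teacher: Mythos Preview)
Your argument is correct and genuinely different from the paper's. The paper inducts on $n-s$: its base case is $s=n-1$, where summing Sylvester's identity $\det(\A_{-i}^\top\B_{-i})/\det(\A^\top\B)=1-\a_i^\top(\A^\top\B)^{-1}\b_i$ over $i$ gives $\sum_i\det(\A_{-i}^\top\B_{-i})=(n-d)\det(\A^\top\B)$ (the singular case $\det(\A^\top\B)=0$ is handled separately); the inductive step then just applies the base case to $\A,\B$, invokes the hypothesis on each $(\A_{-i},\B_{-i})$, and counts how often each size-$s$ subset appears. By contrast you induct on $n$ with $d$ a free parameter, split on whether the last row is in $S$, and use the adjugate version of the matrix determinant lemma to peel off that row. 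The decisive difference is your dimension-lowering step: recognizing that $\sum_{|T|=s-1}\mathrm{adj}(\A_T^\top\B_T)$ is, entrywise, a $(d-1)$-dimensional instance of the same theorem is a nice observation the paper does not use at all. Your route is a bit more intricate but avoids Sylvester entirely and never needs a separate singular-case argument, since the adjugate identity holds unconditionally; the paper's route is shorter and hews to the paper's reverse-iterative theme (remove one row at a time), so it meshes better with the surrounding narrative.
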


\begin{proof}
$S$ is a size $s$ subset of a set of size $n$.
We rewrite the range restriction $n-1 \ge s\ge d$ for size
$s$ as $1\le n\!-\!s\le n\!-\!d$ and induct on $n-s$.
For the base case, $n\!-\!s = 1$ or $s=n\!-\!1$, we need to show that
$$\det(\A^\top\B)= \frac{1}{n-d} \sum_{i=1}^n \det(\A_{-i}^\top\B_{-i}).
$$
This clearly holds if $\det(\A^\top\B)=0$. Otherwise, by
Sylvester's Theorem
$$ \sum_{i=1}^n
\frac{\det(\overbrace{\A_{-i}^\top\B_{-i}}^{\A^\top\B-\a_i\b_i^\top})}
     {\det(\A^\top\B)}
= \sum_{i=1}^n (1-\a_i^\top (\A^\top\B)^{-1}\b_i)
= n-\overbrace{\tr((\A^\top\B)^{-1}\A^\top\B)}^d.
$$
Induction: Assume $2\le n-s\le n-d$.
\begin{align*}
\det(\A^\top\B)
& \stackrel{\scriptsize
           \begin{array}{c}
             \text{base}\\[-1mm]
             \text{case}\\[-.5mm] 
           \end{array}
           }{=} 
\frac{1}{n-d} \sum_{i=1}^n \det(\A_{-i}^\top\B_{-i})
\\ &\stackrel{\scriptsize
             \begin{array}{c}
               \text{ind.}\\[-1mm]
               \text{step}\\[-.5mm] 
             \end{array}
         }{=} 
\frac{1}{n-d} \sum_{i=1}^n \;\;\sum_{S\,:\,|S|=s,\,i\notin S} 
\frac{1}{{n-1-d\choose s-d}} \det(\A_S ^\top\B_{S})
\\ &\;\;=\;\;\underbrace{\frac{\Blue{n-s}}{n-d} \,\frac{1}{{n-1-d\choose s-d}}}
               _{\frac{1}{{n-d\choose s-d}}}
\; \sum_{S\,:\,|S|=s} \det(\A_S ^\top\B_{S}).
\end{align*}
Note that for the induction step, $S$ is a subset of size
$s$ from a set of size $n-1$ and we have the range
restriction $1\le n\!-\!1\!-\!s\le n\!-\!1\!-\!d$. Clearly, $n\!-\!1\!-\!s$ is one smaller than $n\!-\!s$.
For the last equality, notice that 
each set $S:|S|=s$ is
counted $\Blue{n-s}$ times in the double sum.
\end{proof}

\section{Alternate proof of Theorem \ref{t:einv}}
\label{sec:alternate-proof}
We make use of the following derivative for determinants by \cite{detderiv}:
$$\text{For symmetric $\C$:}\qquad
\frac{\partial \det(\X^\top\C\X)}{\partial \X}
= 2\det(\X^\top\C\X) \C\X(\X^\top\C\X)^{-1}.
$$
The proof begins with generalized Cauchy-Binet for size $s$ volume
sampling:
\begin{align*}
\sum_S \det(\X^\top\I_S\X)&=
{n-d \choose s-d}\det(\X^\top\X).
\intertext{Now, we take a derivative w.r.t. $\X$ on both sides}
\sum_S 2\det(\X^\top\I_S\X)\;\;  (\I_S\X)^{\dagger\top}&=
{n-d \choose s-d}\;\;2\det(\X^\top\X) \;\; \X^{\dagger\top}\\
\Longleftrightarrow\qquad
\underbrace{\sum_S \frac{\det(\X^\top\I_S\X)}
{{n-d \choose s-d}\det(\X^\top\X)} \;\;
(\I_S\X)^{\dagger\top}
           }_{\E [ (\I_S\X)^{\dagger\top}]}
&= \X^{\dagger\top}.
\end{align*}

\section{Proof of Proposition \ref{prop:geometry}}
\label{a:prop-proof}
The main idea behind the proof is to construct variants of
the input matrix $\X$ and relate their volumes.
We use the following standard properties of the determinant:
\begin{proposition}
\label{prop:volume-preserving}
For any matrix $\M$, $\det(\M^\top\M)=\det(\widetilde{\M}^\top\widetilde{\M})$ 
where $\widetilde{\M}$
is produced from $\M$ through the following operations:
\begin{enumerate}
\item 
$\widetilde{\M}$ equals $\M$ except that column
$\m_j$ is replaced by $\m_j + \alpha \m_i$, where $\m_i$ is another
column of $\M$;
\item $\widetilde{\M}$ equals $\M$ except that two rows are swapped.
\end{enumerate}
\end{proposition}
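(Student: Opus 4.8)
The plan is to realize each of the two operations as a one-sided matrix multiplication on $\M$ and then read off its effect on the Gram matrix $\M^\top\M$, invoking multiplicativity of the determinant only for \emph{square} matrices.

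\textbf{Part 1 (column replacement).} Replacing the $j$-th column $\m_j$ of $\M$ by $\m_j+\alpha\m_i$ (with $i\neq j$) is the same as right-multiplying $\M$ by the $d\times d$ elementary matrix $\mathbf E=\I+\alpha\,\e_i\e_j^\top$, where $\e_i,\e_j\in\R^d$ are standard basis vectors: indeed $\M\mathbf E=\M+\alpha\,\m_i\e_j^\top$ agrees with $\M$ in every column except the $j$-th, which becomes $\m_j+\alpha\m_i$. Hence $\widetilde{\M}=\M\mathbf E$ and
\[
\widetilde{\M}^\top\widetilde{\M}=\mathbf E^\top(\M^\top\M)\,\mathbf E.
\]
Since $\mathbf E$ is triangular with unit diagonal, $\det(\mathbf E)=1$, so $\det(\widetilde{\M}^\top\widetilde{\M})=\det(\mathbf E)^2\det(\M^\top\M)=\det(\M^\top\M)$.

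\textbf{Part 2 (row swap).} Swapping two rows of $\M$ is left-multiplication by the corresponding $n\times n$ transposition (permutation) matrix $\P$, so $\widetilde{\M}=\P\M$. As $\P$ is orthogonal, $\P^\top\P=\I$, and therefore $\widetilde{\M}^\top\widetilde{\M}=\M^\top\P^\top\P\M=\M^\top\M$; in fact the two Gram matrices coincide, so their determinants trivially agree.

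There is no real obstacle in this argument. The only point that deserves a moment's care is that $\M$ is rectangular ($n\times d$), so one cannot apply determinant-multiplicativity to $\M$ itself; everything is instead routed through the square multipliers $\mathbf E$ (of size $d$) and $\P$ (of size $n$), whose determinants are $+1$ and $-1$ respectively. As an alternative for Part 1 one could expand $\det(\M^\top\M)$ by Cauchy--Binet as a sum over $d$-element row subsets and reduce to the classical invariance of a square determinant under a column operation of this type, but the elementary-matrix route is the most self-contained.
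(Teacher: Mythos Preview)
Your argument is correct. The paper itself does not prove this proposition: it is stated in the appendix as ``standard properties of the determinant'' and used without justification. Your elementary-matrix approach (right-multiplication by $\I+\alpha\,\e_i\e_j^\top$ for the column operation, left-multiplication by a transposition matrix for the row swap) is exactly the clean way to supply the omitted proof, and the care you take to apply determinant multiplicativity only to the square factors $\mathbf E$ and $\P$ is appropriate for the rectangular setting.
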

Recall that our goal is to prove the following formula for any
$\X,\y$ and $i\in\n$:
\begin{align*}
\det(\X^\top\X)\,\big(L(\of{\w^*}{-i})-L(\w^*)\big) =
\big(\det(\X^\top\X)-\det(\X_{-i}^\top\X_{-i})\big) \ell_i(\of{\w^*}{-i}).
\end{align*}
By part 2 of Proposition \ref{prop:volume-preserving}, we can assume w.l.o.g. that $i=n$, i.e. that the test
row in Proposition \ref{prop:geometry} is the last row of $\X$. 
As discussed in Section \ref{sec:proof-loss}, the columns of $\X$ are the
feature vectors, denoted by $\f_1,\ldots,\f_d$. Moreover, the optimal
prediction vector on the full dataset,  $\ybh=\X\w^*$, is
a projection of $\y$ onto the subspace spanned by the
features/columns of $\X$, denoted as $\ybh=\P_\X\,\y$.
Let us define a vector $\ybb$ as
\vspace{-5mm}
\begin{align}
\ybb^\top \defeq (\frac{\qquad }{\qquad }\ybh_{-n}^\top
  \frac{\qquad }{\qquad } , y_n), 
\label{eq:ybar}
\end{align}
where $\ybh_{-n}\defeq\X_{-n}\of{\w^*}{-n}$ is the optimal prediction vector for the training
problem $(\X_{-n}, \y_{-n})$. Note, that if
$\rank(\X_{-n})<d$, then $\of{\w^*}{-n}$ may not be
unique, but we can pick any weight vector as long as it minimizes the loss
on the training set $\{1..n\!-\!1\}$.
Next, we show the following claim:
\begin{claim}
\label{c:ybar}
The best achievable loss for the problem $(\X,\y)$ can be decomposed as follows:
\begin{align}
L(\w^*) = L(\of{\w^*}{-n}) - \ell_n(\of{\w^*}{-n}) + \|\ybb - \ybh\|^2.
\end{align}
\end{claim}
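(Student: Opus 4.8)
The plan is to recognize the claimed identity as a three-term Pythagorean decomposition in $\R^n$ and to verify it by one orthogonality computation. First I would rewrite the right-hand side. Since $\of{\w^*}{-n}$ minimizes the loss of the reduced problem $(\X_{-n},\y_{-n})$, the quantity $L(\of{\w^*}{-n})-\ell_n(\of{\w^*}{-n})=\sum_{j=1}^{n-1}\ell_j(\of{\w^*}{-n})$ is exactly the optimal reduced loss $\|\y_{-n}-\ybh_{-n}\|^2$, where $\ybh_{-n}=\X_{-n}\of{\w^*}{-n}$ is the (well-defined even when $\X_{-n}$ is rank deficient) projection of $\y_{-n}$ onto $\Span(\X_{-n})$. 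Because $\ybb$ agrees with $\y$ in its last coordinate $y_n$ and has $\ybh_{-n}$ in its first $n-1$ coordinates, $\|\y_{-n}-\ybh_{-n}\|^2=\|\y-\ybb\|^2$. Also $L(\w^*)=\|\y-\ybh\|^2$. Hence the claim is equivalent to
\[\|\y-\ybh\|^2=\|\y-\ybb\|^2+\|\ybb-\ybh\|^2.\]

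Next I would establish this by showing $(\y-\ybb)\perp(\ybb-\ybh)$. The vector $\y-\ybb$ has a zero in its last coordinate and equals the projection residual $\y_{-n}-\X_{-n}\of{\w^*}{-n}$ in its first $n-1$ coordinates, so it lies in the orthogonal complement of $\Span(\X_{-n})$. On the other hand, the first $n-1$ coordinates of $\ybb-\ybh$ equal $\X_{-n}\of{\w^*}{-n}-\X_{-n}\w^*=\X_{-n}(\of{\w^*}{-n}-\w^*)\in\Span(\X_{-n})$, and the last-coordinate contribution to the inner product vanishes since $\y-\ybb$ is zero there. Thus $(\y-\ybb)^\top(\ybb-\ybh)=0$, the Pythagorean identity holds, and the claim follows.

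The only real obstacle is bookkeeping: one must not conflate the two "first-$(n-1)$-coordinate prediction vectors" — namely $\ybh_{-n}=\X_{-n}\of{\w^*}{-n}$, the reduced-problem projection that is built into $\ybb$, versus $\X_{-n}\w^*$, the first $n-1$ coordinates of the full-problem prediction $\ybh=\X\w^*$. The argument uses only that both of these lie in $\Span(\X_{-n})$ and that $\y_{-n}-\X_{-n}\of{\w^*}{-n}$ is orthogonal to that span; in particular no property of $\w^*$ beyond optimality of $\of{\w^*}{-n}$ on the reduced problem is needed, and any choice of minimizer $\of{\w^*}{-n}$ works in the rank-deficient case.
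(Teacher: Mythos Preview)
Your proof is correct and arrives at the same Pythagorean identity $\|\y-\ybh\|^2=\|\y-\ybb\|^2+\|\ybb-\ybh\|^2$ that the paper proves. The difference lies only in how the orthogonality $(\y-\ybb)\perp(\ybb-\ybh)$ is established. The paper takes a geometric nested-projection route: it first shows that $\ybb=\P_{(\X,\e_n)}\y$ (the projection of $\y$ onto the span of the columns of $\X$ together with $\e_n$) and then that $\ybh=\P_\X\ybb$; since $\ybb-\ybh\in\Span(\X,\e_n)$ and $\y-\ybb$ is orthogonal to that entire subspace, orthogonality follows. Your coordinate-wise check bypasses both of these intermediate identifications and uses only that the reduced residual $\y_{-n}-\X_{-n}\of{\w^*}{-n}$ is orthogonal to $\Span(\X_{-n})$, which contains $(\ybb-\ybh)_{-n}$. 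Your argument is leaner and makes explicit that nothing about $\w^*$ beyond $\ybh=\X\w^*$ is used; the paper's argument supplies a cleaner geometric picture (two nested orthogonal projections) at the cost of proving two auxiliary facts.
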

\begin{proof}
First, we will show that $\ybb$ is the
projection of $\y$ onto the subspace spanned by all
features and the unit vector $\e_n\in\R^n$ (where $n$ corresponds to the test
row). That is, we want to show that
$\ybb=\P_{(\X,\e_n)}\,\y$. Denote $\widetilde{\y}$ as that 
projection. Observe that $\widetilde{y}_n=y_n$,
because if this was not true, we could construct a vector
$\widetilde{\y} + (y_n-\widetilde{y}_n)\e_n$ that is closer to $\y$ than
$\widetilde{\y}$ and lies
in $\Span(\X,\e_n)$. Thus, the projection does not incur any loss along the
$n$-th dimension and can be reduced to the remaining $n-1$
dimensions, which corresponds to solving the training
problem $(\X_{-n},\y_{-n})$.
Using the definition of $\ybb$ in (\ref{eq:ybar}),
this shows that $\widetilde{\y}=\P_{(\X,\e_n)}\,\y$ equals $\ybb$.

Next, we will show that $\ybh$ is the projection of $\ybb$ onto
$\Span(\X)$, i.e. that $\P_\X\,\ybb=\ybh$. 
By the linearity of projection, we have 
\begin{align*}
    \P_\X\,\ybb &= \P_\X(\ybb - \y + \y) \\
&= \P_\X(\ybb - \y) +
    \P_\X\,\y \\
&= \P_\X(\ybb - \y) + \ybh.
\end{align*}
We already showed that $\ybb=\P_{(\X,\e_n)}\,\y$.  
Therefore, the vector $\ybb-\y$ is orthogonal to the column vectors of
$\X$, and thus $\P_\X(\ybb - \y)=0$. This shows
that $\P_\X\,\ybb=\ybh$. 

Finally, note that since $\ybb$ is the projection of $\y$ onto
$\Span(\X,\e_n)$ and $\ybh\in \Span(\X,\e_n)$, vector $\ybb-\y$ is orthogonal to
vector $\ybb-\ybh$ and by the Pythagorean Theorem we have 
\[\|\ybh - \y\|^2 = \|\ybb - \y\|^2 + \|\ybb - \ybh\|^2.\]
Using the definition of $\ybb$ in (\ref{eq:ybar}), we have 
\[\|\ybb-\y\|^2=\|\ybh_{-n} - \y_{-n}\|^2=L(\of{\w^*}{-n}) -
\ell_n(\of{\w^*}{-n}),\] 
concluding the proof of the claim.
\end{proof}

\begin{proofof}{Proposition}{\ref{prop:geometry}}
We construct a matrix $\Xbb$, adding vector $\ybb$ as an extra
column to
matrix $\X$: 
\begin{align}
\Xbb\defeq (\X\,,\, \ybb)= 
\left( \begin{array}{ccc|c}
&&   & \\
 & \X_{-n}&  & \ybh_{-n}\\
&&    & \\
\hline
& \x_n^\top&&y_n 
  \end{array}\right).
\label{eq:Xbbar}
\end{align}
Applying ``base $\times$ height'' and Claim \ref{c:ybar},
we compute the volume spanned by $\Xbb$: 
\begin{align}
\!\!\det(\Xbb^\top\Xbb) = \det(\X^\top\X)\;
\|\ybb-\ybh\|^2=  \det(\X^\top\X)\;(L(\w^*) -
L(\of{\w^*}{-n}) + \ell_n(\of{\w^*}{-n})).
\label{eq:xbbar-vol} 
\end{align}
Next, we use the fact that volume is
preserved under elementary column operations (Part 1 of Proposition
\ref{prop:volume-preserving}). Note, that prediction
vector $\ybh_{-n}$ is a linear combination of the columns of
$\X_{-n}$, with
the coefficients given by $\of{\w^*}{-n}$. Therefore, looking at the
block structure of $\Xbb$ (see (\ref{eq:Xbbar})), we observe that performing
column operations on the last column of $\Xbb$ with coefficients given
by negative $\of{\w^*}{-n}$, we can zero out that
column except for its last element:
\begin{align*}
\ybb - \X\, \of{\w^*}{-n} = r\,\e_n,
\end{align*}
where $r\defeq y_n - \x_n^\top\of{\w^*}{-n}$ (see transformation (a) 
in (\ref{eq:transformations})). Now, we consider two
cases, depending on whether or not $r$ equals zero. 
If $r\ne 0$, then we further transform the matrix
by a second transformation (b), which zeros out the last
row (the test row) 
using column operations. The entire sequence of operations, 
resulting in a matrix we call $\Xbb_0$, is shown below:
\begin{align}\Xbb=
\left( \begin{array}{ccc|c}
&&   & \\
 & \X_{-n}&  & \ybh_{-n}\\
&&    & \\
\hline
& \x_n^\top&&y_n 
  \end{array} \right) \overset{\text{(a)}}{\rightarrow}
\left( \begin{array}{ccc|c}
&&   & \\
 & \X_{-n}&  & 0\\
&&    & \\
\hline
& \x_n^\top && r
  \end{array} \right) \overset{\text{(b)}}{\rightarrow}
\left( \begin{array}{ccc|c}
&&   & \\
 & \X_{-n}&  & 0\\
&&    & \\
\hline
& 0&&r
  \end{array} \right) =\Xbb_0
\label{eq:transformations}
\end{align}
Note, that due to the
block-diagonal structure of $\Xbb_0$, its volume can be easily described by the
``base $\times$ height'' formula:
\begin{align}
\det(\Xbb_0^\top\Xbb_0) = \det(\X_{-n}^\top\X_{-n})\;r^2 =
\det(\X_{-n}^\top\X_{-n})\;\ell_n(\of{\w^*}{-n}).
\label{eq:xbbar0-vol}
\end{align}
Since $\det(\Xbb^\top\Xbb) = \det(\Xbb_0^\top\Xbb_0)$,
we can combine (\ref{eq:xbbar-vol}) and (\ref{eq:xbbar0-vol}) to
obtain the desired result. 

Finally, if $r=0$ we cannot perform
transformation (b). However, in this case matrix $\Xbb$ has volume
$0$, and moreover, $\ell_n(\of{\w^*}{-n})=r^2=0$, so once again we have
\[\det(\Xbb^\top\Xbb) = 0 =
\det(\X_{-n}^\top\X_{-n})\;\ell_n(\of{\w^*}{-n}),\]
which concludes the proof of Proposition \ref{prop:geometry}.
\end{proofof}


\bibliography{pap}

\end{document}